\PassOptionsToPackage{pagebackref=true}{hyperref}
\documentclass{article}
\let\jmlrOriginalBibliographyStyle\bibliographystyle
\renewcommand{\bibliographystyle}[1]{%
 \jmlrOriginalBibliographyStyle{jmlr-visible}}
\usepackage[abbrvbib, preprint]{jmlr2e}
\let\bibliographystyle\jmlrOriginalBibliographyStyle
\PassOptionsToPackage{noabbrev}{cleveref}

\renewcommand*{\backref}[1]{}  %
\renewcommand*{\backrefalt}[4]{%
  \ifcase #1 %
    (Not cited.)%
  \or %
    (Cited on page #2.)%
  \else %
    (Cited on pages #2.)%
  \fi}

\usepackage[english]{babel}

\usepackage[letterpaper,top=2cm,bottom=2cm,left=3cm,
right=3cm,marginparwidth=1.75cm]{geometry}

\usepackage{graphicx}

\usepackage[table]{xcolor}
\definecolor{darkblue}{rgb}{0,0,.5}
\definecolor{darkorange}{rgb}{0.8, 0.4, 0}
\definecolor{caltechOrange}{rgb}{1.0, 0.424, 0.047}
\definecolor{caltechGreen}{RGB}{115,169,80}
\colorlet{PMS7489cLight}{caltechGreen!10!white}
\colorlet{PMS7489cFrame}{caltechGreen!75!black}
\colorlet{PMS7489cTitle}{caltechGreen!85!black}

\colorlet{highlight_links}{caltechOrange}

\usepackage{csquotes}

\usepackage{amsmath}
\usepackage{mathtools}
\usepackage{amsfonts}
\usepackage{mathrsfs}
\usepackage{amssymb}
\usepackage{dsfont} %
\usepackage{booktabs}

\usepackage{aligned-overset}
\usepackage{nicefrac}

\usepackage[T1]{fontenc}
\usepackage{enumitem}

\usepackage{pifont}
\usepackage{caption}
\usepackage{etoolbox}

\usepackage{cleveref}

\newtheorem{assumption}[theorem]{Assumption}
\newtheorem{dataAssumption}[theorem]{Data Assumption}
\newtheorem{approxAssumption}[theorem]{UA Assumption}

\numberwithin{theorem}{section}
\makeatletter
\let\c@example\c@theorem

\let\p@example\p@theorem
\makeatother

\newcommand{\statementendsymbol}{\ensuremath{\blacklozenge}}
\newcommand{\statementendmark}{%
 \ifmmode
  \quad\hbox{\statementendsymbol}%
 \else
  \leavevmode\unskip\nobreak\hfill\quad\hbox{\statementendsymbol}%
 \fi
}
\makeatletter
\newcommand{\uprightstatementbegin}{%
 \def\@begintheorem##1##2{\trivlist
  \item[\hskip\labelsep{\bfseries ##1\ ##2}]\normalfont}%
 \def\@opargbegintheorem##1##2##3{\trivlist
  \item[\hskip\labelsep{\bfseries ##1\ ##2\ (##3)}]\normalfont}%
}
\makeatother
\newcommand{\uprightstatementend}{\statementendmark}
\newcommand{\declareuprightstatement}[1]{%
 \AtBeginEnvironment{#1}{\uprightstatementbegin}%
 \AtEndEnvironment{#1}{\uprightstatementend}%
}
\declareuprightstatement{assumption}
\declareuprightstatement{dataAssumption}
\declareuprightstatement{approxAssumption}
\declareuprightstatement{definition}
\declareuprightstatement{example}
\declareuprightstatement{remark}

\crefname{theorem}{Theorem}{Theorems}
\crefname{lemma}{Lemma}{Lemmas}
\crefname{corollary}{Corollary}{Corollaries}
\crefname{assumption}{Assumption}{Assumptions}
\crefname{dataAssumption}{Data Assumption}{Data Assumptions}
\crefname{approxAssumption}{UA Assumption}{UA Assumptions}
\crefname{optiProblem}{Optimization Problem}{Optimization Problems}
\crefname{definition}{Definition}{Definitions}
\crefname{remark}{Remark}{Remarks}
\crefname{example}{Example}{Examples}

\newcommand{\assref}[1]{Assumption~\ref{#1}}
\newcommand{\Assref}[1]{Assumption~\ref{#1}}

\newcommand{\DataAssref}[1]{Data Assumption~\ref{#1}}

\newcommand{\UAAssref}[1]{UA Assumption~\ref{#1}}

\newcommand{\Remref}[1]{Remark~\ref{#1}}

\usepackage[colorinlistoftodos,prependcaption,textsize=tiny]{todonotes}

\usepackage[most]{tcolorbox} %
\newcounter{questionBoxCounter} %
\newcounter{takeAwayBoxCounter} %

\newtcolorbox{questionBox}[2][]{
 enhanced,
 colback=blue!5!white,         %
 colframe=blue!75!black,       %
 fonttitle=\bfseries,          %
 colbacktitle=blue!85!black,   %
 coltitle=white,               %
 arc=4pt,                      %
 boxrule=0.8pt,                %
 title={Question~\#\thequestionBoxCounter:~#2}, %
 attach boxed title to top left={xshift=1em, yshift=-2mm},
 boxed title style={
   enhanced,
   arc=3pt,                    %
   boxrule=0pt,
   top=0pt,
   bottom=0pt,
   left=0pt,
   right=0pt,
   interior style={fill=blue!85!black},
  },
 before upper={\stepcounter{questionBoxCounter}}, %
 #1
}

\newtcolorbox{takeAwayBox}[2][]{
 enhanced,
 colback=PMS7489cLight,       %
 colframe=PMS7489cFrame,      %
 fonttitle=\bfseries,         %
 colbacktitle=PMS7489cTitle,  %
 coltitle=white,              %
 arc=4pt,                     %
 boxrule=0.8pt,               %
 title={Key Takeaway~\#\thetakeAwayBoxCounter:~#2}, %
 attach boxed title to top left={xshift=1em, yshift=-2mm},
 boxed title style={
   enhanced,
   arc=3pt,
   boxrule=0pt,
   top=0pt,
   bottom=0pt,
   left=0pt,
   right=0pt,
   interior style={fill=PMS7489cTitle},
  },
 before upper={\stepcounter{takeAwayBoxCounter}}, %
 #1
}

\newtcolorbox{paperSummary}[1][]{
 enhanced,
 colback=PMS7489cLight,       %
 colframe=PMS7489cFrame,      %
 fonttitle=\bfseries,         %
 colbacktitle=PMS7489cTitle,  %
 coltitle=white,              %
 arc=4pt,                     %
 boxrule=0.8pt,               %
 left=-3mm,                   %
 title={The Short Version}, %
 attach boxed title to top left={xshift=1em, yshift=-2mm},
 boxed title style={
   enhanced,
   arc=3pt,
   boxrule=0pt,
   top=0pt,
   bottom=0pt,
   left=0pt,
   right=0pt,
   interior style={fill=PMS7489cTitle},
  },
}
\usetikzlibrary{
 positioning,
 calc,
 arrows.meta,
 decorations.pathreplacing,
 fit,
 backgrounds,
 patterns
}

\definecolor{oiblue}{RGB}{0,114,178}
\definecolor{oiorange}{RGB}{230,159,0}
\definecolor{oigreen}{RGB}{0,158,115}
\definecolor{oired}{RGB}{213,94,0}
\definecolor{oipurple}{RGB}{120,94,163}
\definecolor{lightblue}{RGB}{220,235,250}
\definecolor{lightorange}{RGB}{255,240,215}
\definecolor{lightgreen}{RGB}{215,245,230}
\definecolor{lightred}{RGB}{255,225,215}
\definecolor{lightpurple}{RGB}{232,225,245}
\definecolor{lightgray}{RGB}{245,245,245}
\definecolor{medgray}{RGB}{180,180,180}
\definecolor{darktext}{RGB}{40,40,40}

\newcommand{\innerEmpty}{\,\cdot\,}

\newcommand{\high}[1]{{\emph{#1}}}

\def\gNorm{{\bar{g}}}
\def\gHat{{\hat{g}}}
\def\thetaVec{{\boldsymbol{\theta}}}
\def\thetaVecHat{{\hat{\boldsymbol{\theta}}}}

\def\omegaHat{{\hat{\omega}}}

\newcommand{\dLatGen}[1]{d_{#1}^{\mathsf{Lat}}}
\def\dLat{{\dLatGen{}}}

\def\clipImage{{\mathrm{im}}}
\def\clipText{{\mathrm{txt}}}

\def\XX{{\mathcal{X}}}
\def\Borel{{\mathcal{B}}}

\def\SS{{\mathcal{S}}}
\def\Simplex{{\mathbb{S}}}
\def\RR{{\mathbb{R}}}
\def\PP{{\mathbb{P}}}

\def\FF{{\mathcal{F}}}
\def\NN{{\mathbb{N}}}

\def\CC{{\mathcal{C}}}

\def\xx{{\boldsymbol{x}}}

\def\KK{{\mathcal{K}}}
\def\MM{{\mathcal{M}}}
\def\app{{\mathsf{a}}}
\def\tilt{{\Phi}}
\def\cont{{\mathsf{c}}}
\def\nn{{\mathsf{NN}}}

\newcommand{\tiltContArg}[1]{{\tilt_{#1}^{\cont}}}

\def\prodMeas{{\mathsf{P}}}

\def\thetaHat{{\hat{\theta}}}

\DeclareMathOperator{\Uniform}{\mathcal{U}}
\DeclareMathOperator{\indicator}{\mathds{1}}
\DeclareMathOperator{\Prob}{\mathcal{P}}

\newcommand{\setN}[1]{[#1]}
\newcommand{\nfr}{\nicefrac}

\newcommand{\define}{\coloneqq}
\newcommand{\defineRev}{\eqqcolon}

\def\fusion{{\mathsf{fus}}}
\def\hadamard{{\mathsf{Had}}}
\def\twoclip{{\mathsf{2\text{-}clip}}}
\def\mclip{{\mathsf{m\text{-}clip}}}

\newcommand{\Exp}[1]{{ \mathbb{E}}\!\left[#1\right]}

\newcommand{\Expu}[2]{{\mathbb{E}}_{#1}\!\left[#2\right]}

\def\DD{{\mathsf{D}}}

\def\KLtag{{\mathsf{KL}}}

\def\TotVarTag{{\mathsf{TV}}}

\newcommand{\KL}[2]{
 {\DD}_{\KLtag}\!\left(#1 \,\middle\|\, #2\right)
}

\newcommand{\DTV}[2]{
 {\DD}_{\TotVarTag}\!\left(#1 \,\middle\|\, #2\right)
}

\def\lossSym{{\mathsf{L}}}

\DeclareMathOperator{\rank}{\mathrm{rk}}

\DeclareMathOperator{\Iden}{\mathrm{Id}}
\DeclareMathOperator{\Span}{\mathrm{span}}

\DeclareMathOperator*{\argmax}{arg\,max}

\newcommand*\Dd{\mathrm{d}} %
\newcommand*\dd{\: \mathrm{d}}
\newcommand*\dx{\: \mathrm{d}x}

\DeclarePairedDelimiter{\abs}{\lvert}{\rvert}
\DeclarePairedDelimiter{\norm}{\lVert}{\rVert}

\DeclarePairedDelimiterX{\IP}[2]{\langle}{\rangle}{#1, #2} %

\renewcommand{\phi}{\varphi}

\definecolor{byzantine}{rgb}{0.74, 0.2, 0.64}
\definecolor{darkgreen}{rgb}{0.1,0.6,0.1}
\definecolor{darkred}{rgb}{0.6,0,0}
\definecolor{darkorange}{rgb}{1.0, 0.55, 0.0}

\definecolor{ethblue}{rgb}{0.129, 0.361, 0.686}
\hypersetup{colorlinks=true, linkcolor=ethblue, citecolor=ethblue, urlcolor=ethblue}
\providecommand{\doi}[1]{}
\renewcommand{\doi}[1]{%
 \href{https://doi.org/#1}{\nolinkurl{doi:#1}}}
\providecommand{\arxiv}[1]{}
\renewcommand{\arxiv}[1]{%
 \href{https://arxiv.org/abs/#1}{\nolinkurl{arXiv:#1}}}

\usepackage[normalem]{ulem}

\begin{document}
\title{Expressivity In
 Multimodal Contrastive Learning}

\author{%
 \name Andrew Stuart \email
 astuart@caltech.edu \\ \addr California Institute of Technology \AND \name
 Florian Wolf\thanks{Corresponding author.} \email fwolf@caltech.edu \\
 \addr California Institute of Technology }

\editor{My editor}

\maketitle

\begin{abstract}
 Contrastive learning has become a cornerstone of modern representation
 learning, powering CLIP-style models that underpin text-to-image generation,
 vision-language models, and retrieval across a rapidly growing range of
 modalities. Despite this empirical success, the expressive power of these
 architectures remains poorly understood. To gain insight, we study
 expressivity by adopting a population-level, density-estimation viewpoint:
 each architecture comprises a parameterized set of densities whose
 parameters may be chosen to approximate the joint distribution of the
 modalities. This isolates a question of pure representational capacity: which joint
 distributions can a given contrastive family of parameterizations approximate
 to arbitrary accuracy? We show that expressivity is sharply architecture-dependent.
 For two modalities,
 the simple two-tower CLIP architecture is a universal approximator.
 A natural generalization of CLIP, widely used in practice when
 three or more modalities are present, is based on a loss found by summing
 over all pairwise similarities. This provably cannot represent arbitrary
 joint distributions, although we prove that it remains expressive enough to match all
 pairwise conditionals. Motivated by this gap, we propose Hadamard-CLIP,
 which adds a single learned weight vector on top of the existing encoders
 and restores universal approximation of the joint for any number of
 modalities while preserving CLIP's fast, precomputable-embedding retrieval.
\end{abstract}

\begin{keywords}
 Contrastive Learning, Multimodal Data, CLIP, Universal Approximation,
 Representation Learning
\end{keywords}

\section{Introduction}
Multimodal contrastive learning refers to a class of algorithms designed to
find, in training, and exploit, at inference time, correlations between
different data modalities. The canonical example is text-image data, but
applications now demand the understanding of much broader classes of data
modalities, such as depth and segmentation maps or motion profiles, audio
and video, for example. This paper develops theory concerning the
expressivity, or limits on expressivity, of commonly used algorithms
deployed in multimodal contrastive learning. Furthermore, we propose a
new architecture which addresses shortcomings of
existing algorithms used in the case of three or more data modalities. In
Subsection \ref{ssec:BALR} we provide the background for our work, together
with a literature review. Subsection \ref{ssec:CAO} describes our
contributions and explains the organization of the remainder of the paper.
In Subsection \ref{ssec:NOT} we highlight notation used throughout.

\begin{figure}
 \begin{center}
  \begin{tikzpicture}[
    every node/.style={font=\sffamily},
    >=Stealth,
    scale=0.78, transform shape,
   ]

   \def\colW{4.2}       %
   \def\colGap{0.5}     %
   \def\rowArch{0}      %
   \def\rowUA{-3.8}     %
   \def\headerY{2.0}    %
   \def\superY{3.2}     %
   \def\rowlabelX{-3.2} %
   \def\cellH{3.6}      %
   \def\cellHua{2.0}    %

   \pgfmathsetmacro{\colA}{0}
   \pgfmathsetmacro{\colB}{\colW+\colGap}
   \pgfmathsetmacro{\colC}{2*(\colW+\colGap)}
   \pgfmathsetmacro{\colD}{3*(\colW+\colGap)}

   \node[font=\sffamily, text=darktext] at (\colA, \superY-0.55) {\textbf{Bimodal} ($m=2$)};

   \pgfmathsetmacro{\midBC}{0.5*(\colB+\colD)}
   \node[font=\sffamily, text=darktext]
   at (\midBC, \superY-0.55) {\textbf{Beyond Bimodal}};

   \node[font=\sffamily\bfseries, text=oiblue, align=center]
   at (\colA, \headerY) {CLIP\\[-2pt]{\small\normalfont (Two-Tower)}};
   \node[font=\sffamily\bfseries, text=oiorange, align=center]
   at (\colB, \headerY) {CLIP\\[-2pt]{\small\normalfont (Sum of Pairs)}};
   \node[font=\sffamily\bfseries, text=oigreen, align=center]
   at (\colC, \headerY) {H-CLIP\\[-2pt]{\small\normalfont (Hadamard Fusion)}};
   \node[font=\sffamily\bfseries, text=oipurple, align=center]
   at (\colD, \headerY) {Joint Network\\[-2pt]{\small\normalfont (Fusion)}};

   \node[font=\sffamily\bfseries, text=darktext, rotate=90, anchor=south]
   at (\rowlabelX + 0.2, \rowArch - 0.2) {Architecture $\tilt^\app$};

   \fill[lightblue, rounded corners=4pt]
   ({\colA-0.48*\colW}, \rowArch+0.4*\cellH) rectangle ({\colA+0.48*\colW}, \rowArch-0.5*\cellH);
   \fill[lightorange, rounded corners=4pt]
   ({\colB-0.48*\colW}, \rowArch+0.4*\cellH) rectangle ({\colB+0.48*\colW}, \rowArch-0.5*\cellH);
   \fill[lightgreen, rounded corners=4pt]
   ({\colC-0.48*\colW}, \rowArch+0.4*\cellH) rectangle ({\colC+0.48*\colW}, \rowArch-0.5*\cellH);
   \fill[lightpurple, rounded corners=4pt]
   ({\colD-0.48*\colW}, \rowArch+0.4*\cellH) rectangle ({\colD+0.48*\colW}, \rowArch-0.5*\cellH);

   \begin{scope}[shift={(\colA, \rowArch-0.3)}]
    \node[draw=oiblue, thick, rounded corners=2pt, fill=white,
     minimum width=1.0cm, minimum height=0.55cm, font=\small]
    (g1) at (-0.85, 0.8) {$\gNorm_1$};
    \node[draw=oiblue, thick, rounded corners=2pt, fill=white,
     minimum width=1.0cm, minimum height=0.55cm, font=\small]
    (g2) at (0.85, 0.8) {$\gNorm_2$};
    \node[font=\small, text=darktext] (x1) at (-0.85, 1.5) {$x_1$};
    \node[font=\small, text=darktext] (x2) at (0.85, 1.5) {$x_2$};
    \draw[-, thick, darktext] (x1) -- (g1);
    \draw[-, thick, darktext] (x2) -- (g2);
    \node[draw=oiblue, thick, circle, inner sep=1pt, fill=white, font=\scriptsize]
    (ip) at (0, -0.1) {$\langle\cdot,\cdot\rangle$};
    \draw[->, thick, oiblue] (g1) -- (ip);
    \draw[->, thick, oiblue] (g2) -- (ip);
    \node[font=\small, text=oiblue] at (0, -1.2) {$\exp(\IP{\gNorm_1}{\gNorm_2}/\tau)$};
   \end{scope}

   \begin{scope}[shift={(\colB, \rowArch-0.3)}]
    \node[draw=oiorange, thick, rounded corners=2pt, fill=white,
     minimum width=0.7cm, minimum height=0.5cm, font=\small]
    (pg1) at (-1.05, 0.8) {$\gNorm_1$};
    \node[draw=oiorange, thick, rounded corners=2pt, fill=white,
     minimum width=0.7cm, minimum height=0.5cm, font=\small]
    (pg2) at (0, 0.8) {$\cdots$};
    \node[draw=oiorange, thick, rounded corners=2pt, fill=white,
     minimum width=0.7cm, minimum height=0.5cm, font=\small]
    (pg3) at (1.05, 0.8) {$\gNorm_m$};
    \node[font=\small, text=darktext] (px1) at (-1.05, 1.5) {$x_1$};
    \node[font=\small, text=darktext] (px2) at (0, 1.5) {$\cdots$};
    \node[font=\small, text=darktext] (px3) at (1.05, 1.5) {$x_m$};
    \draw[-, thick, darktext] (px1) -- (pg1);
    \draw[-, thick, darktext] (px2) -- (pg2);
    \draw[-, thick, darktext] (px3) -- (pg3);
    \node[draw=oiorange, thick, circle, inner sep=0pt, fill=white, font=\tiny]
    (pip12) at (-0.52, -0.0) {$\langle\cdot,\cdot\rangle$};
    \node[draw=oiorange, thick, circle, inner sep=0pt, fill=white, font=\tiny]
    (pip13) at (0, -0.5) {$\langle\cdot,\cdot\rangle$};
    \node[draw=oiorange, thick, circle, inner sep=0pt, fill=white, font=\tiny]
    (pip23) at (0.52, -0.0) {$\langle\cdot,\cdot\rangle$};
    \draw[->, thick, oiorange] (pg1) -- (pip12);
    \draw[->, thick, oiorange] (pg2) -- (pip12);
    \draw[->, thick, oiorange] (pg1) -- (pip13);
    \draw[->, thick, oiorange] (pg3) -- (pip13);
    \draw[->, thick, oiorange] (pg2) -- (pip23);
    \draw[->, thick, oiorange] (pg3) -- (pip23);
    \node[font=\small, text=oiorange] at (0, -1.2)
    {$\exp(\sum_{i<j}\IP{\gNorm_i}{\gNorm_j}/\tau)$};
   \end{scope}

   \begin{scope}[shift={(\colC, \rowArch-0.3)}]
    \node[draw=oigreen, thick, rounded corners=2pt, fill=white,
     minimum width=0.7cm, minimum height=0.5cm, font=\small]
    (hg1) at (-1.05, 0.8) {$\gNorm_1$};
    \node[draw=oigreen, thick, rounded corners=2pt, fill=white,
     minimum width=0.7cm, minimum height=0.5cm, font=\small]
    (hg2) at (0, 0.8) {$\cdots$};
    \node[draw=oigreen, thick, rounded corners=2pt, fill=white,
     minimum width=0.7cm, minimum height=0.5cm, font=\small]
    (hg3) at (1.05, 0.8) {$\gNorm_m$};
    \node[font=\small, text=darktext] (hx1) at (-1.05, 1.5) {$x_1$};
    \node[font=\small, text=darktext] (hx2) at (0, 1.5) {$\cdots$};
    \node[font=\small, text=darktext] (hx3) at (1.05, 1.5) {$x_m$};
    \draw[-, thick, darktext] (hx1) -- (hg1);
    \draw[-, thick, darktext] (hx2) -- (hg2);
    \draw[-, thick, darktext] (hx3) -- (hg3);
    \node[draw=oigreen, thick, rounded corners=2pt, fill=white,
     minimum width=2.6cm, minimum height=0.5cm, font=\scriptsize]
    (had) at (0, -0.1) {$\omega^\top(\gNorm_1 \odot \cdots \odot \gNorm_m)$};
    \draw[->, thick, oigreen] (hg1) -- (had);
    \draw[->, thick, oigreen] (hg2) -- (had);
    \draw[->, thick, oigreen] (hg3) -- (had);
    \node[font=\small, text=oigreen, align=center] at (0, -1.2)
    {$\exp(\omega^\top (\gNorm_1 \odot \cdots \odot \gNorm_m))$};
   \end{scope}

   \begin{scope}[shift={(\colD, \rowArch-0.3)}]
    \node[draw=oipurple, thick, rounded corners=2pt, fill=white,
     minimum width=0.7cm, minimum height=0.5cm, font=\small]
    (fg1) at (-1.05, 0.8) {$\Iden$};
    \node[draw=oipurple, thick, rounded corners=2pt, fill=white,
     minimum width=0.7cm, minimum height=0.5cm, font=\small]
    (fg2) at (0, 0.8) {$\Iden$};
    \node[draw=oipurple, thick, rounded corners=2pt, fill=white,
     minimum width=0.7cm, minimum height=0.5cm, font=\small]
    (fg3) at (1.05, 0.8) {$\Iden$};
    \node[font=\small, text=darktext] (fx1) at (-1.05, 1.5) {$x_1$};
    \node[font=\small, text=darktext] (fx2) at (0, 1.5) {$\cdots$};
    \node[font=\small, text=darktext] (fx3) at (1.05, 1.5) {$x_m$};
    \draw[-, thick, darktext] (fx1) -- (fg1);
    \draw[-, thick, darktext] (fx2) -- (fg2);
    \draw[-, thick, darktext] (fx3) -- (fg3);
    \node[draw=oipurple, thick, rounded corners=3pt, fill=white,
     minimum width=2.2cm, minimum height=0.5cm, font=\small, align=center]
    (fuse) at (0, -0.2) {$\tilt^\nn(x_1, \ldots, x_m)$};
    \draw[->, thick, oipurple] (fg1) -- (fuse);
    \draw[->, thick, oipurple] (fg2) -- (fuse);
    \draw[->, thick, oipurple] (fg3) -- (fuse);
    \node[font=\small, text=oipurple, align=center] at (0, -1.2)
    {$\exp(\tilt^\nn(x_1, \ldots, x_m))$};
   \end{scope}

   \pgfmathsetmacro{\subRowH}{0.85}
   \pgfmathsetmacro{\cellHsub}{0.75}
   \pgfmathsetmacro{\rowUAjoint}{\rowUA + \subRowH}
   \pgfmathsetmacro{\rowUAfull}{\rowUA}
   \pgfmathsetmacro{\rowUApair}{\rowUA - \subRowH}

   \node[font=\sffamily\bfseries, text=darktext, rotate=90, anchor=south]
   at (\rowlabelX+0.26, \rowUA) {Univ.\ Approx.};

   \node[font=\sffamily\footnotesize, text=darktext, anchor=east]
   at ({\colA-0.48*\colW+0.2}, \rowUAjoint) {Joint $\xx$};
   \node[font=\sffamily\footnotesize, text=darktext, anchor=east]
   at ({\colA-0.48*\colW+0.2}, \rowUAfull) {$x_i\mid x_{-i}$};
   \node[font=\sffamily\footnotesize, text=darktext, anchor=east]
   at ({\colA-0.48*\colW+0.2}, \rowUApair) {$x_i\mid x_j$\;\;};

   \node[font=\normalsize, text=black] at (\colA, \rowUAjoint)
   {\ding{52} \;\;\Cref{thm:NormalizedTwoClipUniversalApproximation}};
   \node[font=\normalsize, text=black] at (\colB, \rowUAjoint)
   {\ding{56} \;\;\Cref{thm:noUniversalApproximationClip}};
   \node[font=\normalsize, text=black] at (\colC, \rowUAjoint)
   {\ding{52} \;\;\Cref{thm:UniversalApproximationHadamardClip}};
   \node[font=\normalsize, text=black] at (\colD, \rowUAjoint)
   {\ding{52} \;\;\Cref{result:UniversalApproxJointNetwork}};

   \node[font=\normalsize, text=black] at (\colA, \rowUAfull) {\ding{52}};
   \node[font=\normalsize, text=black] at (\colB, \rowUAfull) {\ding{56}};
   \node[font=\normalsize, text=black] at (\colC, \rowUAfull) {\ding{52}};
   \node[font=\normalsize, text=black] at (\colD, \rowUAfull) {\ding{52}};

   \node[font=\normalsize, text=black] at (\colA, \rowUApair) {\ding{52}};
   \node[font=\normalsize, text=black] at (\colB, \rowUApair) {\ding{52}
    \;\;\Cref{thm:UniversalApproxClipPairwiseConditionals}};
   \node[font=\normalsize, text=black] at (\colC, \rowUApair) {\ding{52}};
   \node[font=\normalsize, text=black] at (\colD, \rowUApair) {\ding{52}};

   \pgfmathsetmacro{\vsepX}{0.5*(\colA + 0.48*\colW + \colB - 0.48*\colW)}
   \draw[medgray, thin, dashed]
   (\vsepX, \superY-0.1) -- (\vsepX, \rowUA - \subRowH - 0.5*\cellHsub + 0.2);

   \pgfmathsetmacro{\boxW}{\colD + 0.48*\colW - \colA + 0.48*\colW}
   \pgfmathsetmacro{\boxCx}{0.5*(\colA + \colD)}
   \node[draw=darktext, thick, rounded corners=6pt, fill=lightgray,
    text width=18.0cm, align=left, inner sep=8pt, font=\small]
   at (\boxCx, \rowUA - \subRowH - 0.5*\cellHsub - 1.25)
   {\textbf{Key Insights:}
    \textbf{(1)} $m=2$: two-tower CLIP is provably as expressive as any joint network.
    \textbf{(2)} $m\geq 3$: the widely used sum-of-pairs CLIP matches every pairwise conditional
    $x_i\mid x_j$ but, modeling only pairwise interactions, provably cannot represent the full
    conditional $x_i\mid x_{-i}$ or the joint.
    \textbf{(3)} Hadamard-CLIP restores joint universal approximation for any $m$ by adding
    one learned vector $\omega$, while keeping embeddings precomputable
    so retrieval stays as fast as CLIP, unlike a general joint network.
   };
  \end{tikzpicture}
 \end{center}
 \caption{
  \textbf{Graphical Abstract.} We frame contrastive learning as learning a tilting function
  $\tilt^\app$, a score for how much more likely a tuple $(x_1,\dots,x_m)$ is to co-occur than under independence
  (the log-density of the joint relative to the product of marginals).
  \textbf{Top row:} Four ways to build $\tilt^\app$ from per-modality
  encoders, with the induced similarity score below each
  ($\tau$: learnable temperature, $\omega$: learned fusion weights).
  \textbf{Bottom table:} Which distributions each architecture can approximate arbitrarily well.
  The full joint $\xx$, one modality given all others ($x_i\mid x_{-i}$), and one given
  just one other ($x_i\mid x_j$); \ding{52} = provably can, \ding{56} = provably cannot.}
 \label{fig:GraphicalAbstract}
\end{figure}

\subsection{Background and Literature Review} \label{ssec:BALR}

We situate our work through a literature review 
tracing the progression from the canonical bimodal CLIP
setting to broader multimodal contrastive learning. We first introduce CLIP and
its empirical impact, then discuss existing theory and the population-level
viewpoint used in this paper, before turning to extensions beyond two
modalities and to connections with unimodal contrastive learning.

\paragraph{Contrastive Learning and CLIP.}
Contrastive Language-Image Pretraining
\citep{radfordLearningTransferableVisual2021} and related approaches such
as ALIGN \citep{jiaScalingVisualVisionLanguage2021} demonstrate that
contrastive objectives trained on large-scale multimodal data can learn
highly transferable joint representations across modalities.

A natural motivating example for two data modalities is image and text
pairs. We view an image (resp. text) as an element in $\XX_{\clipImage}$
(resp. $\XX_{\clipText}$). Then consider the question of learning about
alignment between image and text from $N$ data samples
$\{(x_{\clipImage}^{(n)}, x_{\clipText}^{(n)})\}_{n=1}^N$ drawn from a
joint distribution on $\XX_{\clipImage} \times \XX_{\clipText}$. CLIP does
so by learning (normalized) encoders $\gNorm_{\clipImage},
 \gNorm_{\clipText}$, typically constrained to take values on the unit
sphere, that produce aligned representations of both modalities in a shared
latent space by assigning high cosine similarity scores to related pairs
and low cosine similarity scores to unrelated pairs. The training objective
is a symmetric cross-entropy loss over similarity scores of the form
\begin{align*}
 \lossSym \define
 - \frac{1}{N} \sum_{n=1}^{N} \Biggl(
 \log \frac{\exp(\IP{\gNorm_{\clipImage}(x_{\clipImage}^{(n)})}{\gNorm_{\clipText}(x_{\clipText}^{(n)})} / \tau)}{
  \frac{1}{N}\sum_{k={1}}^{N}\exp(\IP{\gNorm_{\clipImage}(x_{\clipImage}^{(k)})}{\gNorm_{\clipText}(x_{\clipText}^{(n)})} / \tau)}
 + \log \frac{\exp(\IP{\gNorm_{\clipImage}(x_{\clipImage}^{(n)})}{\gNorm_{\clipText}(x_{\clipText}^{(n)})} / \tau)}{
  \frac{1}{N}\sum_{k={1}}^N\exp(\IP{\gNorm_{\clipImage}(x_{\clipImage}^{(n)})}{\gNorm_{\clipText}(x_{\clipText}^{(k)})} / \tau)}\Biggr),
\end{align*}
where temperature $\tau > 0$ is learned along with the parameters
defining the encoders $\gNorm_{\clipImage},\gNorm_{\clipText}$.
We define the scaled cosine similarity scores $s_\theta(x_{\clipImage},x_{\clipText})
 \define \IP{\gNorm_{\clipImage}(x_{\clipImage})}{
  \gNorm_{\clipText}(x_{\clipText})}/\tau$.
Intuitively, the objective encourages aligned
embeddings for related data pairs while separating unrelated pairs that
appear as contrastive examples in the denominator of the loss.

Beyond zero-shot classification and retrieval, CLIP-style representations
and contrastive loss functions have shown tremendous empirical success and
have become a central building block in modern multimodal machine learning.
CLIP encoders are now used in text-conditioned \emph{diffusion models}
\citep{rameshZeroShotTexttoImageGeneration2021,
 rameshHierarchicalTextConditionalImage2022, DALLE32023,
 esserScalingRectifiedFlow2024}, as \emph{vision backbones}
\citep{liuVisualInstructionTuning2023,
 liBLIP2BootstrappingLanguageImage2023, mizrahi4MMassivelyMultimodal2023,
 liuImprovedBaselinesVisual2024a, linMixedAttentionNetwork2024,
 wangCogVLMVisualExpert2024a, bachmann4M21AnytoAnyVision2024}, and
\emph{semantic backbones} \citep{luddeckeImageSegmentationUsing2022a,
 shridharCLIPortWhatWhere2022}. The contrastive loss has found wide
application in the training of \emph{Vision-Language-Models},
\emph{foundation models} and \emph{tokenizers}
\citep{liAlignFuseVision2021, alayracFlamingoVisualLanguage2022,
 yuCoCaContrastiveCaptioners2022, mustafaMultimodalContrastiveLearning2022,
 kwonMaskedVisionLanguage2022,
 wuSemanticEquivalenceTokenization2024,
 wuNExTGPTAnytoAnyMultimodal2024,
 GPT4VisionSystemCard2024,
 wuVILAUUnifiedFoundation2024, luATokenUnifiedTokenizer2025,
 maUniTokUnifiedTokenizer2025, schlarmannFuseLIPMultimodalEmbeddings2025,
 zhaoQLIPTextAlignedVisual2025}. Moreover, contrastive representation
learning has increasingly been extended \emph{beyond vision and language},
for example in neural analysis
\citep{schneiderLearnableLatentEmbeddings2023} and in sequential recommender
systems \citep{zhouS^3RecSelfSupervisedLearning2020,
 xieContrastiveLearningSequential2021,
 weiMultilevelCrossmodalContrastive2024}

\paragraph{Existing Theory.}
The empirical success of contrastive learning has motivated a substantial
body of theoretical work aimed at understanding the mechanisms underlying
representation learning with contrastive objectives. Nevertheless, a
complete theoretical understanding remains elusive. One important line of
research deals with an \emph{information theoretic} perspective on
contrastive learning. In its classical formulation shown above, CLIP is
using a two-sided InfoNCE loss function
\citep{oordNeuralDiscreteRepresentation2018} which has been shown to
maximize a lower bound on the mutual information (MI) between positive
pairs \citep{hjelmLearningDeepRepresentations2018a,
 pooleVariationalBoundsMutual2019,
 bachmanLearningRepresentationsMaximizing2019,
 tianContrastiveMultiviewCoding2020a, wuMutualInformationContrastive2020,
 wangUnderstandingContrastiveRepresentation2022,
 shwartzzivCompressNotCompress2024, guiMultimodalContrastiveLearning2025}.
However, \cite{tschannenMutualInformationMaximization2020} show that
maximizing MI alone is not sufficient for good downstream performance.

Driven by the success of CLIP encoders as backbones for a wide variety of
tasks, a further line of research analyzes \emph{downstream-task
 performance guarantees} for CLIP representations.
\cite{tianContrastiveMultiviewCoding2020a} first showed empirically that
multimodal learning is beneficial, and \cite{huangWhatMakesMultiModal2021}
subsequently proved, under a composite assumption, that multimodal learning
achieves a smaller population risk than a unimodal representation. Since
then, a variety of works have studied zero-shot transfer in CLIP
\citep{chenUnderstandingTransferableRepresentation2023,
 mayilvahananDoesCLIPsGeneralization2023, uesakaWeightedPointSet2024,
 okoStatisticalTheoryContrastive2025,
 liStatisticalConsistencyGeneralization2026}. Motivated by the limited
expressivity of the cosine similarity, \cite{uesakaWeightedPointSet2024}
propose a weighted kernel function in the latent space that requires a
smaller latent dimensionality to achieve comparable performance. In a
follow-up work, \cite{yoshidaTheoreticalRefinementCLIP2025} prove, from a
Reproducing Kernel Hilbert Space perspective and under
conditional-independence and Lipschitz-continuity assumptions, that the
pointwise mutual information can be approximated to arbitrary accuracy.

Generalizing results from the unimodal literature (see below),
\cite{daunhawerIdentifiabilityResultsMultimodal2022} establish
\emph{identifiability results} for the recovery of hidden latent variables.
From the same angle, and assuming a linear data-generating process,
\cite{renImportanceContrastiveLoss2023} analyze the \emph{training
 dynamics} and show that the negative samples balance the condition number
and help avoid dimensional collapse. Concurrently,
\cite{nakadaUnderstandingMultimodalContrastive2023} prove that the gradient
dynamics used to minimize the CLIP loss are equivalent to applying a
singular value decomposition to the contrastive cross-covariance matrix.
Characterizing the population limit of the dynamics of contrastive
learning, \citep{furusawaMeanFieldTheory2023a,
 mengTrainingDynamicsNonlinear2024} analyze the mean-field limit of
gradient-based training.

\paragraph{Population-level Perspective.}
A perspective central to this work is the \emph{population-level}
formulation of \citet{baptistaMathematicalPerspectiveContrastive2025},
based on the learning problem defined in the $N=\infty$ limit. This casts
CLIP as density estimation: the given data is now the joint law $\mu$ of
the modalities, rather than a finite collection of samples, and from this
one seeks to approximate the joint through a learnable tilting of the
product of the marginals $\mu_\prodMeas$. The relevant object is the
log-density ratio $\log\frac{\Dd\mu}{\Dd\mu_\prodMeas}$, a \emph{tilting
 function} that measures how much more likely a tuple is to co-occur than
under independence; it vanishes exactly when the modalities are
independent, and for $m=2$ it is the pointwise mutual information. CLIP
parameterizes it as $\exp(\IP{\gNorm_\clipImage}{\gNorm_\clipText}/\tau)$.
At finite sample size, CLIP normalizes similarity scores using empirical
averages over mismatched image--text pairs. In the population limit, these
empirical averages become integrals with respect to the marginal laws, so
the same score \(s_\theta\) defines conditional model distributions. We
make this connection explicit in \Cref{ex:CLIP}. We adopt this viewpoint
throughout (formalized in \Cref{sec:Setting}): each architecture
parameterizes a score that, exponentiated, induces an approximate model
measure $\mu^{\app}(\innerEmpty{};\theta)$, and \emph{expressivity} asks
which tiltings, equivalently which joint distributions, the score class can
represent. Our measure of expressivity is its best achievable value
$\inf_{\theta\in\Theta} \KL{\mu}{\mu^{\app}(\innerEmpty{};\theta)}$. As an
infimum over the whole parameter space at the population measure, it
lower-bounds the error of any estimator on any sample, isolating
representational capacity from finite-sample estimation, optimization, and
generalization. Finally we note that the methods of analysis used to
quantify the difference between $\mu$ and $\mu^{\app}$, in terms of the
approximation of their denstities with respect to a common reference
measure, are closely related to methods developed to study approximations
of the Bayesian posterior distribution in KL divergence or in the Hellinger
metric \citep{marzouk2009stochastic,stuart2010inverse}.

\paragraph{Contrastive Learning Beyond the Bimodal Setting.}
Aligning and fusing information across heterogeneous modalities is a
central, still-open challenge in representation learning
\citep{yuanSurveyMultimodalLearning2025, liMultimodalAlignmentFusion2026},
and contrastive objectives have emerged as a core approach. Motivated by
the success of CLIP in the image--text setting, numerous extensions have
been proposed for learning from more than two modalities. Existing
approaches predominantly follow two strategies. One is to aggregate
pairwise contrastive interactions across modalities through a
\emph{sum-of-pairs} objective \citep{tianContrastiveMultiviewCoding2020a,
 portillo-quinteroStraightforwardFrameworkVideo2021a,
 luoCLIP4ClipEmpiricalStudy2022, guzhovAudioclipExtendingClip2022,
 nagraniLearningAudioVideoModalities2022, tevetMotionCLIPExposingHuman2022,
 zhuBringingMultimodalityAmazon2024, zhouTENTConnectLanguage2026}.
Alternatively, for larger modality collections or when data from certain
modalities is scarce, many architectures instead introduce one or several
\emph{binding modalities}, often images or text, and contrast all remaining
modalities against these anchors \citep{girdharImageBindOneEmbedding2023,
 luATokenUnifiedTokenizer2025}. Despite their empirical success, the
theoretical understanding of these multimodal formulations remains limited.

\paragraph{Relation to Unimodal Contrastive Learning.}
Many unimodal self-supervised learning methods, such as those in
\citep{heMomentumContrastUnsupervised2020a,
 chenSimpleFrameworkContrastive2020, chenBigSelfsupervisedModels2020,
 grillBootstrapYourOwn2020}, can be interpreted within the same framework by
viewing data augmentations as generating multiple correlated views of a
latent sample. In this perspective, unimodal contrastive learning becomes a
special case of multimodal contrastive learning with augmentation-induced
conditional distributions.

Theoretical analyses in the unimodal setting have primarily focused on
three questions. The first concerns \emph{latent recovery and
 identifiability}, where contrastive objectives are studied through explicit
generative models and connections to nonlinear independent component
analysis \citep{hyvarinenIndependentComponentAnalysis2004,
 comonHandbookBlindSource2010, hyvarinenNonlinearICAUsing2019a,
 zimmermannContrastiveLearningInverts2021,
 vonkugelgenSelfSupervisedLearningData2021, reizingerCrossEntropyAllYou2024,
 hyvarinenIdentifiabilityLatentvariableStructuralequation2024,
 rusakInfoNCEIdentifyingGap2025}. The second investigates \emph{performance
 guarantees for downstream tasks}, including analyses of negative sampling
\citep{nozawaUnderstandingNegativeSamples2021a,
 awasthiMoreNegativeSamples2022, ashInvestigatingRoleNegatives2022,
 baoSurrogateGapContrastive2022, leiGeneralizationAnalysisContrastive2023,
 zhangAugmentationOverlapTheory2025} and data augmentation techniques
\citep{saunshiTheoreticalAnalysisContrastive2019, leePredictingWhatYou2021,
 wenUnderstandingFeatureLearning2021, wangChaosLadderNew2021,
 toshContrastiveEstimationReveals2021, toshContrastiveLearningMultiview2021,
 huangGeneralizationContrastiveSelfSupervised2022,
 wangRethinkingMinimalSufficient2022,
 wangUnderstandingContrastiveRepresentation2022, jiPowerContrastFeature2023,
 dufumierIntegratingPriorKnowledge2023,
 linStatisticalTheoryContrastive2025a, elstTightPACBayesianRisk2025}.
The third direction studies the \emph{optimization and geometric
 properties} of self-supervised learning objectives, e.g. by establishing
connection to distributionally robust optimization
\citep{wuUnderstandingContrastiveLearning2023}, (spectral) clustering
\citep{haochenProvableGuaranteesSelfSupervised2021,
 tanContrastiveLearningSpectral2023} and using gradient flow techniques to
analyze dimensionality collapse
\citep{jingUnderstandingDimensionalCollapse2021,
 ziyinRotationalSymmetryLoss2022,
 huangTheoreticalAnalysisSelfSupervised2025} or develop new loss functions
\citep{tianUnderstandingDeepContrastive2022a}.

\paragraph{Relationship of our Work to Existing Literature.}
Across all these strands, the analyses characterize what contrastive
learning \emph{recovers} (latent identifiability), what it \emph{bounds}
(mutual information), or what its optimization \emph{converges to}
(training dynamics); yet each presupposes that the chosen parameterization
\emph{can} represent the target relationship in the first place, and the
expressiveness of the contrastive parameterization itself has received
comparatively little attention. We therefore study a complementary, and
foundational, question: independently of finite-sample estimation,
optimization and generalization, \emph{is the contrastive parameterization
 itself expressive enough to universally approximate the underlying joint
 distribution, or only certain conditionals?}

\subsection{Contribution and Paper Organization} \label{ssec:CAO}
The primary contributions of this paper are universal approximation
theorems, and examples of failure of universal approximation, for various
multimodal contrastive learning algorithms. We work entirely in the
population limit, thereby excluding finite data-size effects, allowing us
to concentrate on expressivity. The resulting universal approximation
results are summarized in \Cref{fig:GraphicalAbstract}. In detail, we make
the following contributions, structured to simultaneously describe the
paper organization:
\begin{enumerate}[start=1,label={(C\arabic*)}]
 \item \emph{KL upper bounds for conditional losses}
       (\Cref{sec:Setting}).
       Contrastive learning typically deploys loss functions defined through matching conditional
       distributions. We show that the joint KL divergence between the data-generating measure
       and its parametric approximation strictly upper-bounds both the full
       and pairwise conditional losses used in contrastive
       learning algorithms. This motivates formulation of subsequent expressivity
       results in terms of the KL divergence.
 \item \emph{Function-to-measure universal approximation}
       (\Cref{sec:GeneralUA}).
       We establish that universal approximation at the level of measures, and with respect to KL divergence
       between the true joint and the approximate joint, is implied by uniform approximation of the tilting function,
       lifting function-level guarantees to measure-level guarantees.
 \item \emph{Bimodal CLIP is as expressive as any joint network}
       (\Cref{sec:CLIP}).
       For $m=2$, we prove that the two-tower CLIP architecture universally
       approximates any joint distribution, matching the expressiveness of a
       general joint network in the approximation sense, for both
       unnormalized and (with learnable temperature) normalized encoders.
 \item \emph{For $m \geq 3$, CLIP fails at the joint but succeeds at pairwise
        conditionals} (\Cref{sec:ContrastiveLearningBeyondBimodal}).
       We show that the widely used sum-of-pairs CLIP generalization
       is expressive enough to
       universally approximate all pairwise conditionals simultaneously,
       but provably cannot approximate the joint.
 \item \emph{Hadamard-CLIP restores universal approximation}
       (\Cref{sec:ContrastiveLearningBeyondBimodal}).
       We propose H-CLIP (Hadamard-CLIP), which adds a single learnable weight
       vector $\omega \in \RR^{\dLat}$ applied to the element-wise product of
       encoder outputs, restoring joint universal approximation for any $m$
       while preserving CLIP's inference-time efficiency.
\end{enumerate}

In order to increase readability, we have placed much of the 
technical lemmas, and details of the proofs of theorems, 
in a six section appendix. These are referred to from within the 
relevant sections of the paper; and the introduction to the appendix
explains how each subsection relates to the main body of the paper.

\subsection{Notation} \label{ssec:NOT}
Throughout this work, we define $\NN_{\geq 2} \define \{n\in \NN \;\vert \;
 n\geq 2\}$ and denote by $m \in \NN_{\geq 2}$ the number of data
modalities. We use $\xx = \bigl(x_1^\top, \ldots, x_m^\top\bigr)^\top \in
 \RR^d$ to denote concatenation of (possibly random) vectors; here $x_i \in
 \RR^{d_i}$, $d_i \in \NN$ for $i=1, \ldots, m$, and we denote by $d =
 \sum_{i=1}^{m} d_i$ the total dimension. For dimension $\ell \in \NN$, we
denote by $\Simplex^{\ell -1} \define \{y \in \RR^\ell \; \vert\;
 \norm{y}_2 = 1\}$ the unit sphere. For integer $k\in \NN$ we write
$\setN{k} \define \{1, \ldots, k\}.$ We denote the set of probability
measures defined on a sample space $\SS$ by $\Prob(\SS)$. We use the
following notation for summations interchangeably $\sum_{i=1}^{m}
 \sum_{j=i+1}^{m} = \sum_{1 \leq i < j \leq m } = \sum_{i < j}$, in contexts
where the bounds on the variables $i, j$ are clear. For $\XX \subset
 \RR^d$, a $\sigma$-algebra $\FF$ over $\XX$, a measure $\mu : \FF \to
 \RR_{\geq 0} \cup \{\infty\}$, and a $\mu$-measurable function $f : \XX \to
 \RR$, define $\norm{f}_{L^p} \define \left(\int_{\XX} \abs{f(\xx)}^p \;
 \mu(\Dd \xx)\right)^{\nicefrac{1}{p}}$, $p\in [1,\infty)$ and extend to the
case $p=\infty$ through essential supremum. We denote with $L^p(\XX, \mu)
 \define \{f:\XX \to \RR \;\vert\; \norm{f}_{L^p} < \infty\}$ the space of
measurable functions with finite $L^p$-norm. We denote by $\lambda$ the
Lebesgue measure on Euclidean space $\RR^k$ of any dimension $k$. For a
measure $\mu$ on $(\XX, \Borel(\XX))$ and a $\mu$-measurable function $f:
 \XX \to \RR^d$, we denote by $f_{\#} \mu$ the push-forward measure of $\mu$
under $f$. For $i \in \setN{m}$, let $\pi_{-i}$ denote the projection
 of $\xx \in \XX$ onto all components except $x_i$. For distinct $i,j \in
  \setN{m}$, let $\pi_{ij}$ denote the projection onto $(x_i,x_j)$. For any
 $\nu \in \Prob(\XX)$, define the corresponding marginal measures $\nu_{-i}
  \define (\pi_{-i})_{\#}\nu$ and $\nu_{ij} \define (\pi_{ij})_{\#}\nu$; in
 particular, $\mu_{-i}$ and $\mu_{ij}$ are the corresponding marginals of
 $\mu$.

\section{Probabilistic Formulation}
\label{sec:Setting}

In this section we provide the probabilistic formulation that underpins the
approximation theoretical questions that are the heart of this paper. In
\Cref{subsec:SettingProblemFormulation} we specify the data model and
define our goal as approximating the joint distribution with respect to
some statistical divergence. We motivate the form of population-level loss
functions arising in practical algorithms in \Cref{subsec:BCME}, based on
matching conditionals. Then, using this form,
\Cref{subsec:SettingMultimodalContrastiveLearning} provides justification
of the choice of the KL divergence, showing that it upper-bounds the
conditional losses used in practical contrastive learning methods.

\subsection{Problem Formulation}
\label{subsec:SettingProblemFormulation}
We start by stating the problem setting, the main assumption on our data
distribution and the overarching goals of this paper.

\begin{assumption}[Problem Setting \& Data Distribution]
 \label[assumption]{ass:ProblemSettingDataDistribution}
 Denote by $\XX_i$ a subset of $\RR^{d_i}, d_i \in \NN$ and by
 $\XX \define \prod_{i=1}^m \XX_i \subset \RR^{d}.$ Throughout this work,
 we assume the following properties of $\mu$, the data generating probability
 measure connecting $m\in \NN_{\geq 2}$ modalities.

 \begin{enumerate}[start=1,label={(A\arabic*)}]
  \item \label{ass1:MarginalDistribution}\emph{Marginal Distributions.}
        For each modality $i \in \setN{m}$, $\mu$ has a marginal distribution
        $\mu_i \in \Prob(\XX_i)$ with support
        $\XX_i.$ The measure $\mu_i$
        is absolutely continuous with respect to the
        Lebesgue measure restricted to $\XX_i:$  $\lambda_i \define \lambda\vert_{\XX_i}$; and
        $\mu_i$ has Radon-Nikodym
        derivative $\nicefrac{\Dd \mu_i}{\Dd \lambda_i} = f_i \in L^1(\XX_i,
         \lambda_i)$.
  \item \label{ass2:JointDistribution}\emph{Joint Distribution.}
        The space $\XX$ supports the product measure $\mu_\prodMeas = \bigotimes_{i=1}^m \mu_i$. The data generating measure
        $\mu \in \Prob(\XX)$ of $\xx := (x_1, \ldots, x_m)$ is defined by a density
        $r:\XX \to [0,\infty]$ via the Radon-Nikodym-derivative
        $\nicefrac{\Dd \mu}{\Dd \mu_\prodMeas} = r \in L^1(\XX, \mu_\prodMeas).$ We refer to
        $r$ as a \emph{tilting} of the data generating measure.
  \item \label{ass3:FiniteMutualInformation}\emph{Finite Mutual Information.}
        We assume that $\log r \in L^1(\XX, \mu)$.
 \end{enumerate}
\end{assumption}
With the conventions that $\log(0) = -\infty$ and $\exp(-\infty) = 0$, we
can write the change of measures as
\begin{align}
 \frac{\Dd \mu}{\Dd \mu_\prodMeas}(\xx) = \exp(\tilt(\xx)),
 \tag{\ensuremath{\tilt}-Tilting}
\end{align}
where we refer to $\tilt \define \log(r): \XX \to \RR \cup \{\pm\infty\}$
as \emph{the tilting function of $\mu$}.
We will see in the following sections that working with log density
$\tilt$ is more convenient in terms of coherence with the literature
as well as for our theoretical analysis. By
\ref{ass3:FiniteMutualInformation}, $\tilt \in L^1(\XX, \mu)$.

In this paper we study the approximation of measure $\mu$ by a measure
$\mu^\app$ induced by an \emph{approximate tilting function} $\tilt^\app:
 \XX \to \RR$:
\begin{align}
 \begin{aligned}
  \frac{\Dd\mu^\app(\Dd \xx)}{\Dd \mu_\prodMeas} (\xx) & \define \frac{1}{Z^\app} \cdot
  \exp(\tilt^\app(\xx))                                                                 \\
  Z^\app                                               & \define \int_{\XX}
  \exp(\tilt^\app(\xx)) \;\mu_\prodMeas(\Dd \xx).
 \end{aligned}
 \tag{\ensuremath{\tilt^\app}-Tilting}
 \label{eq:GeneralTiltingFunction}
\end{align}
Approximations of this form are at the heart of contrastive learning
methods to study alignment between different data modalities; in particular
expressing the true data generating measure, and its approximation, via
densities with respect to the product of the marginals of the data
distribution, leads to actionable algorithms deployed in practice.

\begin{remark}[Notation] \label[remark]{rem:tilt}
 We use $\mu^\app$ as shorthand for $\mu^{\tilt^{\app}}$ defined by an
 approximating tilting function $\tilt^{\app}$. This notational convention
 is extended as follows: when an architecture fixes a particular tilting
 function we propagate the superscript from the tilting function into the
 approximate measure. For example, when using \(\tilt^{\twoclip}\), as in
 the bimodal CLIP methodology illustrated in \Cref{ex:CLIP}, we write \(\mu^\twoclip\)
 for \(\mu^{\tilt^\twoclip}\). The same notational convention is applied to the
 normalization constants arising from the approximate tiltings.
\end{remark}

The goal of this paper is to quantify how well $\mu^\app$ approximates
$\mu$ by estimating $\DD(\mu,\mu^\app)$, where
$\DD:\Prob(\XX)\times\Prob(\XX)\to\RR$ is a metric, or statistical
divergence, to be specified. We approach this problem by studying the
approximation of the tilting function $\tilt$ by $\tilt^\app$, chosen from
a given function class, and studying the effect of this approximation on
the approximation of $\mu$ by $\mu^\app.$ These function classes are
selected to provide a deeper understanding of multimodal contrastive
learning algorithms used in practice and, in the case $m\geq 3$, to suggest
variants of these algorithms with improved approximation capabilities.

Importantly, while the true tilting function $\tilt$ may be
extended-real-valued (taking values in $\RR$ as well as $\pm \infty$),
every approximate tilting function
$\tilt^\app:\XX\to\RR$ is finite-valued. Consequently,
$\Dd\mu^\app/\Dd\mu_\prodMeas>0$ on $\XX$, so
$\mu\ll\mu^\app$ and we may apply the Radon-Nikodym (RN) chain rule
\begin{align}
 \frac{\Dd \mu}{\Dd \mu^\app} = \frac{\Dd \mu}{\Dd \mu_\prodMeas}
 \cdot \left(\frac{\Dd \mu^\app}{\Dd \mu_\prodMeas}\right)^{-1},
 \label{eq:RadonNikodymChainRule}
 \tag{RNCR}
\end{align}
an identity which will be used throughout this work.

Our theoretical studies will be conducted in the population limit, with
theory expressed in terms of measure $\mu$. In practical machine learning
settings, however, one only has access to finitely many samples drawn from
$\mu$ and loss functions are implemented through empirical approximations
based on finite data. To bridge this gap between population-level theory
developed in this work and the sample-based objectives employed in
practice, we introduce the following data assumption; thereby connecting
our theoretical framework and practitioner-oriented objectives discussed in
the introduction. This connection is made in the following subsection.

\begin{dataAssumption}
 \label[dataAssumption]{ass:MainDataAssumption}
 We assume to have access to $N\in \NN$ data
 samples $\{\xx^{(n)}\}_{n=1}^N \overset{\mathrm{iid}}{\sim} \mu$
 and decompose each data point into different modalities as
 $\xx^{(n)} = \bigl((x_1^{(n)})^\top, \ldots, (x_m^{(n)})^\top\bigr)^\top.$
 We define the following empirical measures $\mu^{N} \define \frac{1}{N}
  \sum_{n=1}^{N} \delta_{\xx^{(n)}}$, $\mu_i^N \define
  \frac{1}{N}\sum_{n=1}^{N} \delta_{x^{(n)}_i}$, $i=1,\ldots, m$, and
 $\mu^N_\prodMeas \define \bigotimes_{i=1}^m \mu_i^N$,
 approximating the population level joint,
 marginal and product distributions, respectively.
\end{dataAssumption}

\subsection{Population-Level Multimodal Constrastive Loss Function}
\label{subsec:BCME}

Consider the following loss function, based on the full conditional
distributions of $\mu$ and $\mu^\app$:
\begin{equation}
 \frac{1}{m} \sum_{i=1}^{m}
 \Expu{x_{-i}\sim \mu_{-i}}{\KL{\mu_{x_i\vert x_{-i}}(\innerEmpty{}\vert x_{-i})}{
   \mu^\app_{x_i\vert x_{-i}}(\innerEmpty{}\vert x_{-i})
  }}.
 \tag{Full-Cond}
 \label{eq:plcl}
\end{equation}
Here $\mu^\app$ is a class of approximate joint measures found from
approximate tilting function $\Phi^\app.$ The following example shows that,
in the large data limit, the
\emph{Contrastive Language-Image Pre-training (CLIP)} methodology,
introduced in \cite{radfordLearningTransferableVisual2021}, is an instance of this
class of population-level loss functions, arising when $m=2$.

\begin{example}[CLIP, $m=2$]\label{ex:CLIP}
 In CLIP both modalities, for example image and text, are encoded \emph{separately}
 into a shared latent space $\Simplex^{\dLat-1}$, using modality-specific
 normalized encoders $\gNorm_i(\innerEmpty{};\theta_i) : \XX_i \to
  \Simplex^{\dLat-1}$, $i=1, 2$, parametrized by $\thetaVec = (\theta_1,
  \theta_2) \in \Theta \subseteq \RR^p$; cf. \Cref{fig:GraphicalAbstract}. 
  The derivation below follows \cite[Thm.
  3.2]{baptistaMathematicalPerspectiveContrastive2025};
  we include it because it makes explicit how the population-level conditional 
  KL objective recovers the standard empirical CLIP loss, a concept which is
  key in our developments here.
 We begin with the \emph{separable} tilting function
 \begin{align}
  \tilt^{\twoclip}(\xx; \thetaVec) \define
  \IP{\gNorm_1(x_1; \theta_1)}{\gNorm_2(x_2;\theta_2)}/\tau.
  \tag{\ensuremath{\tilt^\twoclip}-Tilting}
  \label{eq:ClipTiltingFunctionBiModal}
 \end{align}
 Invoking the notational conventions introduced in \Remref{rem:tilt},
 this defines $\mu^{\twoclip}(\innerEmpty{};\thetaVec).$
 Expanding the conditional-based contrastive loss defined by the left-hand
 side of the identity \eqref{eq:JointKLBoundsFullConditionals} with $m=2$
 yields
 \begin{align*}
  \lossSym^{\twoclip}(\thetaVec)
   & = \frac{1}{2}\Expu{x_1 \sim \mu_1}{\KL{\mu_{x_2\vert x_1}}{\mu^{\twoclip}_{x_2\vert x_1}}}
  + \frac{1}{2}\Expu{x_2 \sim \mu_2}{\KL{\mu_{x_1\vert x_2}}{\mu^{\twoclip}_{x_1\vert x_2}}}    \\
   & = -\frac{1}{2}\Expu{x_1 \sim \mu_1}{
   \Expu{x_2 \sim \mu_{x_2\vert x_1}}{
    \log \frac{\exp(\tilt^{\twoclip}_{x_2\vert x_1}(x_2\vert
  x_1;\thetaVec))}{ Z^{\twoclip}_{x_2\vert x_1}(\thetaVec)}}}                                   \\
   & \qquad\qquad\qquad\qquad-\frac{1}{2} \Expu{x_2 \sim\mu_2}{
   \Expu{x_1 \sim \mu_{x_1\vert x_2}}{\log
    \frac{\exp(\tilt^{\twoclip}_{x_1\vert x_2}(x_1\vert x_2;\thetaVec))}{ Z^{\twoclip}_{
  x_1\vert x_2}(\thetaVec)}}} + \mathsf{const},                                                 \\
   & = -\frac{1}{2}\Expu{(x_1,x_2) \sim \mu}{
   \log \frac{\exp(\tilt^{\twoclip}_{x_2\vert x_1}(x_2\vert
  x_1;\thetaVec))}{ Z^{\twoclip}_{x_2\vert x_1}(\thetaVec)}}                                    \\
   & \qquad\qquad\qquad\qquad-\frac{1}{2} \Expu{(x_1,x_2) \sim\mu}{\log
   \frac{\exp(\tilt^{\twoclip}_{x_1\vert x_2}(x_1\vert x_2;\thetaVec))}{ Z^{\twoclip}_{
      x_1\vert x_2}(\thetaVec)}} + \mathsf{const},
 \end{align*}
 where $\mathsf{const}$ is finite due to
 \assref{ass3:FiniteMutualInformation}, and is independent of the parameter $\thetaVec$.
 Next, we use \DataAssref{ass:MainDataAssumption} and replace the measures
 with its empirical counterparts resulting in the sum of two cross entropy
 losses of (empirical) conditional distributions
 \begin{align}
  \begin{aligned}
   \lossSym^{\twoclip}_N(\thetaVec)
    & \define - \frac{1}{N}\sum_{n=1}^{N}\Biggr(
   \log \frac{\exp(\IP{\gNorm_1(x_1^{(n)};\theta_1)}{\gNorm_2(x_2^{(n)};\theta_2)})}{
   \frac{1}{N}\sum_{k=1}^{N}
   \exp(\IP{\gNorm_1(x_1^{(k)};\theta_1)}{\gNorm_2(x_2^{(n)};\theta_2)})} \\
    & \qquad\qquad\qquad\qquad +
   \log \frac{\exp(\IP{\gNorm_1(x_1^{(n)};\theta_1)}{\gNorm_2(x_2^{(n)};\theta_2)})}{
   \frac{1}{N}\sum_{l=1}^{N}
   \exp(\IP{\gNorm_1(x_1^{(n)};\theta_1)}{\gNorm_2(x_2^{(l)};\theta_2)})}\Biggl),
  \end{aligned}
  \tag{\ensuremath{\lossSym^{\twoclip}_N}-Loss}
  \label{eq:BiModalClipLoss-Empirical}
 \end{align}
 exactly recovering the empirically used CLIP loss described in the introduction.
\end{example}

Going beyond the bimodal setting of $m=2$ in \Cref{ex:CLIP}, a central
limitation of the full conditional loss formulation in \eqref{eq:plcl} is
the availability of high-quality data across all modalities. In large-scale
multimodal learning settings, datasets are typically constructed by
aggregating heterogeneous sources, and it is therefore rare that every
modality is present for every data sample. Consequently, for multimodal
problems with $m \ge 3$, most state-of-the-art contrastive (pre-)training
losses heavily rely on artificially generated data or are formulated in
terms of partially observed modality combinations
\citep{zamirTaskonomyDisentanglingTask2018,
 eftekharOmnidataScalablePipeline2021,
 ghiasiMultiTaskSelfTrainingLearning2021,
 robertsHypersimPhotorealisticSynthetic2021,
 bachmannMultiMAEMultimodalMultitask2022, mizrahi4MMassivelyMultimodal2023,
 bachmann4M21AnytoAnyVision2024}. This takes us beyond the setting of Data
Assumption \ref{ass:MainDataAssumption}.

Accordingly, for each pair $(i,j)$, $i\neq j$, entering the loss, we assume access
 to i.i.d.\ paired samples from the marginal $\mu_{ij}$; samples need not be
 aligned across pairs or arise from fully observed multimodal data points.
 Under this \emph{weaker sampling regime}, a natural and particularly appealing
 extension of
\eqref{eq:BiModalClipLoss-Empirical} is to consider contrastive pairs with
respect to pairwise conditionals of the form $x_i \vert x_j$, for $i\neq
 j$, only requiring the presence of two modalities at the same time, within
the data set. Empirically, variations of this loss are effectively used for
contrastive (pre-)training beyond bimodal (image, text) pairings
\citep{portillo-quinteroStraightforwardFrameworkVideo2021a,
 luoCLIP4ClipEmpiricalStudy2022, guzhovAudioclipExtendingClip2022,
 nagraniLearningAudioVideoModalities2022, tevetMotionCLIPExposingHuman2022,
 girdharImageBindOneEmbedding2023, zhuBringingMultimodalityAmazon2024,
 zhouTENTConnectLanguage2026} and recently also as a part of the loss
function in the training of (multimodal) tokenizers
\citep{luATokenUnifiedTokenizer2025}.
Similarly to \Cref{ex:CLIP}, one can show that this approach corresponds to
a pairwise conditional loss function using a separable tilting function
\begin{align}
 \tilt^{\mclip}(\xx; \thetaVec) \define
 \sum_{1\leq i < j \leq m} \IP{\gNorm_i(x_i; \theta_i)}{\gNorm_j(x_j;\theta_j)}/\tau
 \tag{\ensuremath{\tilt^\mclip}-Tilting}
 \label{eq:ClipTiltingFunctionGeneral}
\end{align}
for \emph{normalized} encoders $\gNorm_i(\innerEmpty{};\theta_i):\XX_i \to
 \Simplex^{\dLat -1}$ of the modality $i\in \setN{m}$,
a (learnable) temperature $\tau > 0$ and a parameter vector
$\thetaVec \define (\theta_1^\top, \ldots, \theta_m^\top)^\top \in \Theta \subseteq \RR^p$.
Again, adopting the notation from Remark \ref{rem:tilt},
these considerations lead to the specific loss function
\begin{align*}
 \begin{aligned}
  \frac{1}{2m(m-1)} \sum_{1 \leq i < j \leq m}
  \biggl(
   & \Expu{x_j\sim \mu_j}{\KL{\mu_{x_i\vert x_j}}{\mu^\mclip_{x_i\vert x_j}}}
  + \Expu{x_i\sim \mu_i}{\KL{\mu_{x_j\vert x_i}}{\mu^\mclip_{x_j\vert x_i}}}
  \biggr),
 \end{aligned}
 \notag
\end{align*}
and to the generic version
\begin{align*}
 \begin{aligned}
  \frac{1}{2m(m-1)} \sum_{1 \leq i < j \leq m}
  \biggl(
   & \Expu{x_j\sim \mu_j}{\KL{\mu_{x_i\vert x_j}}{\mu^\app_{x_i\vert x_j}}}
  + \Expu{x_i\sim \mu_i}{\KL{\mu_{x_j\vert x_i}}{\mu^\app_{x_j\vert x_i}}}
  \biggr),
 \end{aligned}
 \tag{Sum-Pair-Cond}
 \label{eq:plcl2}
\end{align*}
as an alternative to \eqref{eq:plcl}.

\subsection{Multimodal Contrastive Learning}
\label{subsec:SettingMultimodalContrastiveLearning}

In this subsection we show the value of choosing $\DD(\innerEmpty{},
 \innerEmpty{})$ to be a KL divergence. In particular making this divergence
small, between $\mu$ and $\mu^\app$, yields control over the conditional
distributions of $\mu$ and $\mu^\app$, and control over the losses defined
in \eqref{eq:plcl} and \eqref{eq:plcl2}.

\begin{theorem}[Joint Controls Full Conditionals Loss]
 \label{thm:JointVsFullConditionals}
 Since $\mu\ll \mu^\app$ by construction, for every modality $i \in
  \setN{m}$, the following inequality holds:
 \begin{align*}
  \Expu{x_{-i}\sim \mu_{-i}}{\KL{\mu_{x_i\vert x_{-i}}(\innerEmpty{}\vert x_{-i})}{
    \mu^\app_{x_i\vert x_{-i}}(\innerEmpty{}\vert x_{-i})
   }}
  \leq \KL{\mu}{\mu^\app}.
 \end{align*}
 In particular, averaging all of these terms yields
 \begin{align}
  \frac{1}{m} \sum_{i=1}^{m}
  \Expu{x_{-i}\sim \mu_{-i}}{\KL{\mu_{x_i\vert x_{-i}}(\innerEmpty{}\vert x_{-i})}{
    \mu^\app_{x_i\vert x_{-i}}(\innerEmpty{}\vert x_{-i})
   }}
  \leq \KL{\mu}{\mu^\app}.
  \tag{Joint-Full}
  \label{eq:JointKLBoundsFullConditionals}
 \end{align}
\end{theorem}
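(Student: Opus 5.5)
The argument is the chain rule for KL divergence applied to the split $\xx = (x_{-i}, x_i)$. Fix $i\in\setN{m}$ and disintegrate both measures with respect to the projection $\pi_{-i}$:
\[
\mu(\Dd\xx)=\mu_{-i}(\Dd x_{-i})\,\mu_{x_i\vert x_{-i}}(\Dd x_i\vert x_{-i}),
\qquad
\mu^\app(\Dd\xx)=\mu^\app_{-i}(\Dd x_{-i})\,\mu^\app_{x_i\vert x_{-i}}(\Dd x_i\vert x_{-i}).
\]
Regular conditional probabilities exist because $\XX\subset\RR^d$ is Polish. The goal is the identity
\[
\KL{\mu}{\mu^\app}=\KL{\mu_{-i}}{\mu^\app_{-i}}+\Expu{x_{-i}\sim\mu_{-i}}{\KL{\mu_{x_i\vert x_{-i}}}{\mu^\app_{x_i\vert x_{-i}}}}.
\]
Since the marginal KL term is nonnegative, the claimed inequality for each $i$ follows. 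Averaging over $i\in\setN{m}$ then gives \eqref{eq:JointKLBoundsFullConditionals}.

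To make the identity concrete, I would write the conditionals explicitly using the product structure of $\mu_\prodMeas$, which factors as $\mu_{\prodMeas,-i}\otimes\mu_i$.
\begin{itemize}
\item By \eqref{eq:GeneralTiltingFunction}, $\mu^\app_{-i}$ has density $\int\exp(\tilt^\app)\,\mu_i(\Dd x_i)/Z^\app$ with respect to $\mu_{\prodMeas,-i}$. Its conditional has $\mu_i$-density $\exp(\tilt^\app(\xx))\big/\int\exp(\tilt^\app(x_{-i},y))\,\mu_i(\Dd y)$, which is strictly positive and finite.
\item Similarly, $\mu_{x_i\vert x_{-i}}$ has $\mu_i$-density $r(\xx)/\int r(x_{-i},y)\,\mu_i(\Dd y)$ for $\mu_{-i}$-a.e.\ $x_{-i}$.
\end{itemize}
Then, with \eqref{eq:RadonNikodymChainRule}, the density $\Dd\mu/\Dd\mu^\app$ factors pointwise as the ratio of marginal densities times the ratio of conditional densities. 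Taking the logarithm and integrating against $\mu$ (via Fubini/Tonelli along the disintegration) gives the chain rule. As a side effect, $\mu\ll\mu^\app$ implies both $\mu_{-i}\ll\mu^\app_{-i}$ and $\mu_{x_i\vert x_{-i}}\ll\mu^\app_{x_i\vert x_{-i}}$ for a.e.\ $x_{-i}$.

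The main obstacle is integrability: splitting $\int\log(\Dd\mu/\Dd\mu^\app)\,\Dd\mu$ into two integrals needs justification when terms may be infinite or have nonintegrable negative parts. I would handle this in two cases.
\begin{itemize}
\item If $\KL{\mu}{\mu^\app}=\infty$, there is nothing to prove.
\item Otherwise, I would avoid splitting signed integrals. Each KL term can be written as the integral of the nonnegative function $u\log u-u+1$ with $u$ the relevant density ratio. Alternatively, I would use that the negative part of $\log$ of a density ratio is always integrable, since $\int(\log u)^-\,\Dd\mu\le\int u^{-1}\,\Dd\mu\le 1$ relative to the reference measure. Either way, all three quantities are well-defined in $[0,\infty]$, and the chain rule holds as an identity in $[0,\infty]$.
\end{itemize}

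A cleaner fallback for the inequality alone is the data-processing / convexity route. The expected conditional KL equals $\KL{\mu}{\mu_{-i}\otimes\mu^\app_{x_i\vert x_{-i}}}$, and one checks that
\[
\KL{\mu}{\mu^\app}-\KL{\mu}{\mu_{-i}\otimes\mu^\app_{x_i\vert x_{-i}}}=\KL{\mu_{-i}}{\mu^\app_{-i}}\ge 0.
\]
This is the same computation in the same nonnegative-integrand form.
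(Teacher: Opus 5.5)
Your proof is correct and follows the paper's route: the KL chain rule for the split $\xx=(x_{-i},x_i)$, which the paper simply cites from Cover--Thomas, Thm.~2.5.3, then nonnegativity of $\KL{\mu_{-i}}{\mu^\app_{-i}}$, then averaging over $i\in\setN{m}$. Your explicit $\mu_i$-densities for both conditionals and the negative-part integrability bound supply details the paper leaves to that citation; the explicit kernels also make your appeal to Polishness unnecessary, which is just as well, since an arbitrary $\XX\subset\RR^d$ need not be Polish.
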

\begin{proof}
 See \Cref{thm:JointVsFullConditionals-PROOF}.
\end{proof}
The theorem shows that minimizing the joint loss $\KL{\mu}{\mu^\app}$ is
strictly stronger than
minimizing the \eqref{eq:plcl} loss obtained by averaging the conditional
losses over the conditioning variable and summing them over all data modalities.
The following theorem shows that similar ideas apply to control of the
pairwise loss function \eqref{eq:plcl2}, and its constituents.
In fact, there is an underlying hierarchy of theorems of this type,
as made explicit in the remark following the theorem.

\begin{theorem}[Joint Controls Pairwise Conditionals Loss]
 \label{thm:JointVsPairwiseConditionals}
 Since $\mu \ll \mu^\app$ by construction, for every modality pair
 $(i,j) \in \setN{m}^2$, $i \neq j$, the following inequality holds
 \begin{align*}
  \Expu{x_j\sim \mu_j}{\KL{\mu_{x_i\vert x_j}}{\mu^\app_{x_i\vert x_j}}}
  \leq \KL{\mu}{\mu^\app}.
 \end{align*}
 In particular, averaging all of these terms yields
 \begin{equation*}
  \frac{1}{2m(m-1)} \sum_{1 \leq i < j \leq m}
  \left(
  \Expu{x_j\sim \mu_j}{\KL{\mu_{x_i\vert x_j}}{\mu^\app_{x_i\vert x_j}}}
  + \Expu{x_i\sim \mu_i}{\KL{\mu_{x_j\vert x_i}}{\mu^\app_{x_j\vert x_i}}}
  \right)                                         \\
  \leq \KL{\mu}{\mu^\app}.
 \end{equation*}
\end{theorem}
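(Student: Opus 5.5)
The plan is to use the chain rule for KL divergence together with the data-processing inequality (monotonicity of KL under marginalization), applied to the projection $\pi_{ij}$. The conditionals $\mu_{x_i\vert x_j}$ and $\mu^\app_{x_i\vert x_j}$ are, by definition, the disintegrations of the bivariate marginals $\mu_{ij} = (\pi_{ij})_\#\mu$ and $\mu^\app_{ij} = (\pi_{ij})_\#\mu^\app$ with respect to the $x_j$-coordinate. So the argument splits into two inequalities:
\begin{align*}
 \Expu{x_j\sim \mu_j}{\KL{\mu_{x_i\vert x_j}}{\mu^\app_{x_i\vert x_j}}}
 \leq \KL{\mu_{ij}}{\mu^\app_{ij}}
 \leq \KL{\mu}{\mu^\app}.
\end{align*}

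\emph{Step 1: the second inequality.} This is the data-processing inequality for the measurable map $\pi_{ij}$. To keep it self-contained, I would first note that $\mu\ll\mu^\app$ implies $\mu_{ij}\ll\mu^\app_{ij}$. The Radon--Nikodym derivative of the pushforwards is the conditional expectation
\begin{align*}
 \frac{\Dd\mu_{ij}}{\Dd\mu^\app_{ij}}\circ\pi_{ij} = \mathbb{E}_{\mu^\app}\!\left[\frac{\Dd\mu}{\Dd\mu^\app}\,\middle\vert\,\pi_{ij}\right].
\end{align*}
Jensen's inequality for the convex function $t\mapsto t\log t$ then gives the bound. Equivalently, the chain rule writes $\KL{\mu}{\mu^\app} = \KL{\mu_{ij}}{\mu^\app_{ij}} + \Expu{\mu_{ij}}{\KL{\mu_{\cdot\vert ij}}{\mu^\app_{\cdot\vert ij}}}$, and the last term is nonnegative. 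This is the same mechanism as in \Cref{thm:JointVsFullConditionals}, with $x_{-i}$ replaced by the pair $(x_i,x_j)$.

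\emph{Step 2: the first inequality.} Apply the chain rule once more, now to the bivariate measures $\mu_{ij}$ and $\mu^\app_{ij}$ disintegrated along $x_j$:
\begin{align*}
 \KL{\mu_{ij}}{\mu^\app_{ij}} = \KL{\mu_j}{\mu^\app_j} + \Expu{x_j\sim\mu_j}{\KL{\mu_{x_i\vert x_j}}{\mu^\app_{x_i\vert x_j}}}.
\end{align*}
Dropping the nonnegative marginal term $\KL{\mu_j}{\mu^\app_j}$ gives the claim. Alternatively, I could apply \Cref{thm:JointVsFullConditionals} directly to the bimodal pair $(\mu_{ij},\mu^\app_{ij})$ viewed as an $m=2$ problem.

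\emph{Main obstacle.} The expected difficulty is making the chain rule rigorous when the KL terms may be infinite and the densities only need $\log$ in $L^1$. I would handle this with regular conditional probabilities, which exist on Euclidean spaces, and by writing densities explicitly with respect to $\mu_\prodMeas$ via \eqref{eq:RadonNikodymChainRule}. The conditional densities are then ratios of marginal integrals of $r$ and $\exp(\tilt^\app)/Z^\app$. If $\KL{\mu}{\mu^\app}=\infty$ the inequality is trivial. Otherwise the decomposition holds as a sum of two nonnegative terms, justified by Tonelli's theorem applied to the positive and negative parts of the log-ratios.

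\emph{Averaging.} The final averaged statement follows by summing the $m(m-1)$ ordered-pair inequalities. Their normalized sum with weight $\frac{1}{2m(m-1)}$ is bounded by $\frac12\KL{\mu}{\mu^\app}\le\KL{\mu}{\mu^\app}$.
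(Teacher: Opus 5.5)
Your proposal is correct and follows essentially the same route as the paper. The paper first applies the KL chain rule along the split $((x_i,x_j),x_{-(i,j)})$ to get $\KL{\mu_{ij}}{\mu^\app_{ij}}\le\KL{\mu}{\mu^\app}$, then applies it again to the bivariate marginals disintegrated along $x_j$, dropping the nonnegative terms each time. Your additional remarks are correct refinements rather than a different argument: the data-processing view via conditional Jensen, the handling of infinite KL, and the observation that the weight $\frac{1}{2m(m-1)}$ actually gives the sharper bound $\frac12\KL{\mu}{\mu^\app}$.
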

\begin{proof}
 See \Cref{thm:JointVsPairwiseConditionals-PROOF}.
\end{proof}

\begin{remark}[Arbitrary Conditionals]
 Combining the proof techniques of the two previous theorems,
 one can generalize the upper bound of the joint KL divergence
 to any subset of conditionals of cardinality $k \in \setN{m-1}$.
\end{remark}
Having shown the value of controlling the KL divergence
between $\mu$ and $\mu^\app$, we now proceed to obtain general
universal approximation results, expressed in terms of KL divergence,
for probability measures. We achieve this by employing standard
universal approximation results for functions,
applying them to log densities and deploying these results
to control the KL divergence.

\section{General Universal Approximation}
\label{sec:GeneralUA}
We perform our general analysis under a universal approximation
\Assref{ass:DensityParametricFamilyInContinuousFunctions}, below,
on a parametric function class $\MM^\nn$. This assumption plays
two distinct roles:
\begin{itemize}
 \item \emph{Abstract level.} In \Cref{result:UniversalApproxJointNetwork}
       and \Cref{result:UniversalApproxJointNetworkTightness} we show that
       if $\MM^\nn$ universally approximates continuous functions,
       then the family of measures induced by \eqref{eq:GeneralTiltingFunction}
       with a $\tilt^\nn \in \MM^\nn$ as the tilting function
       universally approximates the data-generating measure $\mu$
       with respect to the KL divergence.
 \item \emph{Architectural level.}
       The abstract result reduces universal approximation of $\mu$ to
       verifying \UAAssref{ass:DensityParametricFamilyInContinuousFunctions}.
       For certain architectures, we show that
       if each modality-specific encoder satisfies
       \UAAssref{ass:DensityParametricFamilyInContinuousFunctions},
       then the full model inherits this property as well.
       Concretely, \Cref{sec:CLIP} carries out this
       verification for bimodal CLIP. On the other hand,
       \Cref{sec:ContrastiveLearningBeyondBimodal} contains
       demonstration that the natural
       extension \eqref{eq:ClipTiltingFunctionGeneral} to $m \geq 3$
       modalities fails to satisfy the assumption. 
       This motivates
       using the abstract results to guide us to an architectural
       adaptation which addresses this problem, also in
       \Cref{sec:ContrastiveLearningBeyondBimodal}.
\end{itemize}
\begin{approxAssumption}[Universal Approximation in
  $(C^0(\XX; \RR), \norm{\innerEmpty{}}_{\infty})$]
 \label[approxAssumption]{ass:DensityParametricFamilyInContinuousFunctions}
 We assume to have access to a parametric family of continuous functions
 \begin{align*}
  \MM^\nn = \bigcup_{p \in \NN} \{\tilt^\nn(\innerEmpty{};\theta) \in C^0(\XX; \RR) \;\vert \;
  \theta \in \Theta \subseteq \RR^p\}
 \end{align*}
 satisfying $C^0$-\emph{universal approximation}: for every continuous
 function $\tilt \in C^0(\XX;\RR)$, every $\epsilon>0$, and every compact
 $\KK \Subset \XX$, there exists $p\in \NN$ and a $\thetaHat =
  \thetaHat(\epsilon, \KK) \in \Theta$ such that
 \begin{align}
  \norm{\tilt - \tilt^\nn(\innerEmpty{};\thetaHat)}_{\infty, \KK}
  \define \sup_{x \in \KK} \;\abs{\tilt(\xx) - \tilt^\nn(\xx;\thetaHat)} < \epsilon.
  \tag{\ensuremath{C^0}-UA}
  \label{eq:UniversalApproxDensityFunction}
 \end{align}
\end{approxAssumption}
Classical parametric families like polynomials and Fourier-product bases
satisfy \UAAssref{ass:DensityParametricFamilyInContinuousFunctions} 
\citep{powell1981approximation,lightApproximationTheoryTensor1985,
cheneyCourseApproximationTheory2009,
trefethenApproximationTheoryApproximation2019}
furthermore a wide variety of literature verifies this property
for neural network-based architecture. This latter body of work includes
a wide range of architectures all the way from single layer
perceptrons \citep{pinkusApproximationTheoryMLP1999} through to the recent
developments of self- and cross-attention based approximations
\citep{liuAttentionMechanismMaxAffine2025a}.
We refer to \citep{kriegApproximationFunctionsOptimal2026} for a
recent overview of sampling and complexity guarantees for function
approximation.

We state two versions of the universal approximation results. First, in the
setting of $\XX$ being a compact set and second, a generalization without
the assumption of compactness.
\begin{theorem}[Joint Network UA, Compact Support]
 \label{result:UniversalApproxJointNetwork}
 Consider measure $\mu$ satisfying \Assref{ass:ProblemSettingDataDistribution}.
 Assume $\XX \Subset \RR^{d}$ is compact, and
 let $\MM^\nn$ be a parametric family of functions satisfying
 \UAAssref{ass:DensityParametricFamilyInContinuousFunctions}.
 Then, for every $\epsilon > 0$, there exists $p \in \NN$ and
 a parameter $\thetaHat \in \Theta \subseteq \RR^p$ with a corresponding tilting function
 $\tilt^\nn(\innerEmpty{};\thetaHat)$ such that the measure
 $\mu^\nn(\innerEmpty{};\thetaHat)$ induced by
 \eqref{eq:GeneralTiltingFunction} satisfies
 \begin{align*}
  \KL{\mu}{\mu^\nn(\innerEmpty{};\thetaHat)} \leq \epsilon.
 \end{align*}
\end{theorem}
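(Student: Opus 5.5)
The plan is to use the exact KL identity for tilted measures. For any finite-valued, measurable tilting $\psi$ with $Z^\psi=\int_\XX e^{\psi}\,\Dd\mu_\prodMeas<\infty$, \eqref{eq:RadonNikodymChainRule} gives
\begin{align*}
 \KL{\mu}{\mu^\psi} = \int_\XX \bigl(\tilt-\psi\bigr)\,\Dd\mu + \log Z^\psi .
\end{align*}
This is well defined because $\tilt\in L^1(\XX,\mu)$ by \ref{ass3:FiniteMutualInformation} and $\{\tilt=-\infty\}$ is $\mu$-null. I would build the approximation in three stages: truncate $\tilt$, make it continuous, then replace it by a network. Each stage costs at most $\epsilon/3$ in KL.

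\emph{Stage 1 (truncation).} Set $\tilt_M\define\max(\min(\tilt,M),-M)$. Then $\abs{\tilt-\tilt_M}\le\abs{\tilt}$ $\mu$-a.e. and $\tilt-\tilt_M\to0$ pointwise, so $\int(\tilt-\tilt_M)\,\Dd\mu\to0$ by dominated convergence. For the normalizer, $e^{\tilt_M}\le r+1$, and $r+1$ is $\mu_\prodMeas$-integrable. Also $e^{\tilt_M}\to r$ pointwise, with the convention $e^{-\infty}=0$. Hence $Z^{\tilt_M}\to\int r\,\Dd\mu_\prodMeas=1$ and $\log Z^{\tilt_M}\to0$. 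Fix $M$ so that $\KL{\mu}{\mu^{\tilt_M}}<\epsilon/3$.

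\emph{Stage 2 (continuity).} $\XX$ is compact and $\mu_\prodMeas$ is a finite Borel, hence Radon, measure. By Lusin's theorem (together with Tietze extension and clipping), for any $\eta>0$ there is $\psi\in C^0(\XX;\RR)$ with $\abs{\psi}\le M$ and $\psi=\tilt_M$ outside a set $E$ with $\mu_\prodMeas(E)<\eta$. Since $\mu\ll\mu_\prodMeas$ and $r\in L^1(\mu_\prodMeas)$, absolute continuity of the integral makes $\mu(E)$ small once $\eta$ is small. The two error terms are then bounded as follows:
\begin{align*}
 \Bigl|\int(\tilt_M-\psi)\,\Dd\mu\Bigr|\le 2M\mu(E),
 \qquad
 \bigl|Z^\psi-Z^{\tilt_M}\bigr|\le 2e^{M}\eta .
\end{align*}
Because $Z^{\tilt_M}\ge e^{-M}$ and $\log$ is Lipschitz on $[e^{-M}/2,\infty)$, the log-normalizers are also close. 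Choosing $\eta$ small enough, with $M$ already fixed, gives $\KL{\mu}{\mu^\psi}<2\epsilon/3$.

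\emph{Stage 3 (network).} Apply \UAAssref{ass:DensityParametricFamilyInContinuousFunctions} with $\KK=\XX$ to get $\thetaHat$ with $\norm{\psi-\tilt^\nn(\innerEmpty{};\thetaHat)}_{\infty,\XX}<\delta$. Then the linear term in the identity changes by at most $\delta$. Since $e^{-\delta}Z^\psi\le Z^\nn\le e^{\delta}Z^\psi$, the log-normalizer also changes by at most $\delta$. Taking $\delta=\epsilon/6$ finishes the proof.

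The main obstacle is Stage 1–2: $\tilt$ may be unbounded, discontinuous and take the values $\pm\infty$, so it cannot be approximated uniformly. The care needed is to order the choices correctly, fixing $M$ first and only then choosing $\eta$, since the Lusin error is amplified by factors $e^{M}$ and $M$. One must also check that the domination by $r+1$ remains valid on the sets where $\tilt=\pm\infty$.
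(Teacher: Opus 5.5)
Your proof is correct and follows essentially the same route as the paper. The paper's \Cref{lem:LusinsTheoremGeneric} carries out the same double-sided truncation followed by Lusin/Tietze to obtain a bounded continuous representative, and \Cref{prop:FuncToMeasCompactSupport} then combines the identity $\KL{\mu}{\mu^\app}=\int_\XX(\tilt-\tilt^\app)\,\Dd\mu+\log Z^\app$ with the uniform network approximation; the only differences are cosmetic: the paper applies Lusin to $\mu+\mu_\prodMeas$ and needs only a one-sided upper bound on the normalizer, whereas you apply Lusin to $\mu_\prodMeas$, control $\mu(E)$ by absolute continuity, and track the KL two-sidedly through each stage.
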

\begin{proof}
 We describe the overarching structure of the proof here, appealing to results in the
 appendix for details on the key components.
 Let $\epsilon > 0$ be fixed and define $\epsilon^\prime \define \epsilon/4$. We
 invoke \Cref{lem:LusinsTheoremGeneric} with $\epsilon^\prime$ yielding a constant
 $M = M(\epsilon^\prime)$.
 We pick $\eta \define \frac{1}{\exp(M(\epsilon^\prime) + \epsilon^\prime)}
  \cdot \epsilon^\prime$ and obtain a continuous representative $\tilt^\cont$
 of the tilting function $\tilt$. By \UAAssref{ass:DensityParametricFamilyInContinuousFunctions},
 we can find $\thetaHat \in \Theta$ and a corresponding $\tilt^\nn(\innerEmpty{}, \thetaHat)$
 such that
 $\norm{\tilt^\cont - \tilt^\nn(\innerEmpty{};\thetaHat)}_{\infty} \leq \frac{\epsilon}{4}$
 and \Cref{prop:FuncToMeasCompactSupport} yields
 \begin{align*}
  \KL{\mu}{\mu^\nn(\innerEmpty{};\thetaHat)} \leq \epsilon^\prime
  + \epsilon^\prime + \epsilon^\prime +
  \exp(M(\epsilon^\prime) + \epsilon^\prime) \cdot \eta
  \leq \epsilon,
 \end{align*}
 concluding the claim by the choice of $\epsilon^\prime$ and $\eta$.
\end{proof}
Next, we generalize this result to the setting of $\mu$ having (potentially)
non-compact support.
\begin{theorem}[Joint Network UA, Tightness]
 \label{result:UniversalApproxJointNetworkTightness}
 Consider measure $\mu$ satisfying \Assref{ass:ProblemSettingDataDistribution}.
 Assume $\XX$ is a (not necessarily compact) subset of a Euclidean space, and
 let $\MM^\nn$ be a parametric family of functions satisfying
 \UAAssref{ass:DensityParametricFamilyInContinuousFunctions}.
 Then for every $\epsilon > 0$, there exists $p\in \NN$ and a parameter
 $\thetaHat \in \Theta \subseteq \RR^p$ with a corresponding tilting function
 $\tilt^\nn(\innerEmpty{};\thetaHat)$ such that the measure
 $\mu^\nn(\innerEmpty{};\thetaHat)$ induced by
 \eqref{eq:GeneralTiltingFunction} satisfies
 \begin{align*}
  \KL{\mu}{\mu^\nn(\innerEmpty{};\thetaHat)} \leq \epsilon.
 \end{align*}
\end{theorem}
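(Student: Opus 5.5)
The plan is to reduce to the compact case, \Cref{result:UniversalApproxJointNetwork}, using inner regularity (tightness) of $\mu$ and $\mu_\prodMeas$ on Euclidean space. There is one difficulty. \UAAssref{ass:DensityParametricFamilyInContinuousFunctions} only controls $\tilt^\nn$ on a compact $\KK$, while the KL divergence and the normalization constant $Z^\nn$ involve all of $\XX$. The tilting we approximate must therefore be bounded globally, and we must also control the network outside $\KK$. I would work with the decomposition obtained from \eqref{eq:RadonNikodymChainRule}:
\begin{align*}
 \KL{\mu}{\mu^\nn} = \int_\XX \bigl(\tilt - \tilt^\nn\bigr)\,\Dd\mu + \log Z^\nn .
\end{align*}
This identity is valid because $\tilt \in L^1(\mu)$ by \ref{ass3:FiniteMutualInformation}, provided $\tilt^\nn \in L^1(\mu)$.

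\emph{Step 1 (truncation and continuous surrogate).} Fix $\epsilon>0$. Apply \Cref{lem:LusinsTheoremGeneric}, in the version used in the compact proof, to obtain a level $M$ and a continuous, bounded surrogate $\tilt^\cont$ with $|\tilt^\cont|\le M$. It should agree with the truncation $\max(-M,\min(\tilt,M))$ outside a set of small $\mu$- and $\mu_\prodMeas$-measure. By tightness of $\mu$ and $\mu_\prodMeas$, and uniform integrability of $\tilt$ under $\mu$, I would choose a compact $\KK\Subset\XX$ such that three quantities are all below $\epsilon'$: $\mu(\XX\setminus\KK)$, $\mu_\prodMeas(\XX\setminus\KK)\, e^{M}$, and $\int_{\XX\setminus\KK}|\tilt|\,\Dd\mu$.

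\emph{Step 2 (approximation on $\KK$ and control off $\KK$).} Use \UAAssref{ass:DensityParametricFamilyInContinuousFunctions} to get $\thetaHat$ with $\|\tilt^\cont-\tilt^\nn\|_{\infty,\KK}<\eta$. This is the main obstacle: the assumption says nothing about $\tilt^\nn$ off $\KK$, where it could be huge and blow up $Z^\nn$ or $\int \tilt^\nn\,\Dd\mu$. My first attempt is to apply the universal approximation property on a larger compact $\KK'\supset\KK$, approximating a continuous function $\psi$ that equals $\tilt^\cont$ on $\KK$ and takes values in $[-M,M]$ everywhere; such a $\psi$ is $\tilt^\cont$ itself. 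This gives $|\tilt^\nn|\le M+\eta$ on $\KK'$, with $\mu,\mu_\prodMeas$-mass outside $\KK'$ arbitrarily small. That alone still does not bound $\tilt^\nn$ on the tail. The fallback I would commit to is to regard the model as $\tilt^\nn$ composed with the clipping $t\mapsto\max(-M-1,\min(t,M+1))$. Equivalently, I would note that any family satisfying the assumption can be post-composed with this continuous clip without losing \eqref{eq:UniversalApproxDensityFunction}, since clipping is $1$-Lipschitz and the target lies in $[-M,M]$. With this clip, $|\tilt^\nn|\le M+1$ globally.

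\emph{Step 3 (assembling the bound).} With the global bound in hand, split both $\int(\tilt-\tilt^\nn)\,\Dd\mu$ and $\log Z^\nn$ into contributions from $\KK$ and from $\XX\setminus\KK$. On $\KK$ the first term is at most $\eta$ plus the Lusin/truncation error. Off $\KK$ it is at most $\int_{\XX\setminus\KK}|\tilt|\,\Dd\mu+(M+1)\mu(\XX\setminus\KK)$. For the normalization, use $\log Z^\nn\le Z^\nn-1$ and compare $Z^\nn$ with $\int e^{\tilt}\,\Dd\mu_\prodMeas=1$. The difference on $\KK$ is controlled by $e^{M+\eta}\eta$ plus the truncation error, exactly as in \Cref{prop:FuncToMeasCompactSupport}. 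Off $\KK$ it is bounded by $e^{M+1}\mu_\prodMeas(\XX\setminus\KK)$. This last quantity is why $\KK$ is chosen \emph{after} $M$. Choosing $\epsilon'=\epsilon/4$ and $\eta=\epsilon' e^{-(M+\epsilon')}$, as in the compact proof, then makes the total at most $\epsilon$.
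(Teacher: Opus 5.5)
Your proposal is correct and has the same skeleton as the paper. Both use tightness, a bounded continuous Lusin-type surrogate, \UAAssref{ass:DensityParametricFamilyInContinuousFunctions} on a compact $\KK$, and the split of $\int(\tilt-\tilt^\nn)\,\Dd\mu+\log Z^\nn$ into a part on $\KK$ and a tail. The real difference is how the network is tamed off $\KK$.

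The paper fixes $\KK$ by tightness first. It then applies \Cref{lem:LusinsTheoremGeneric} to the restrictions of $\mu$ and $\mu_\prodMeas$ to $\KK$, and replaces the network by $\indicator_\KK\cdot\tilt^\nn$. Hence \Cref{prop:FuncToMeasTightness} applies with $B=0$: the tail costs do not involve $M$, the order of choices does not matter, and the constants close with $\epsilon/6$.

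You clip at $\pm(M+1)$ instead, so your tail bound is $B=M+1$. This forces you to fix $M$ first and only then choose $\KK$. Getting $M$ needs the surrogate on all of $\XX$, i.e. a non-compact version of the lemma; this is harmless, since Lusin for finite Borel measures and Tietze on metric spaces do not use compactness.

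In exchange, your modified model stays continuous, does not depend on $\KK$, and amounts to appending a clamp layer. With the clip level as an extra parameter, the clipped family still satisfies the UA assumption, because clipping is $1$-Lipschitz and leaves the target unchanged. The paper's $\indicator_\KK\cdot\tilt^\nn$ is discontinuous and leaves $\MM^\nn$, which the paper only flags as an abuse of notation.

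So both arguments prove the theorem for a modified family, and you are right that some modification is unavoidable. Take $\XX=\RR^d$ with standard Gaussian marginals and the family of polynomials plus the fixed function $\exp(\norm{\xx}_2^2)$. This family satisfies the UA assumption, yet $Z^\nn=\infty$ for every parameter.

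There is one bookkeeping slip. Step 3 needs $(M+1)\,\mu(\XX\setminus\KK)<\epsilon'$ and $e^{M+1}\mu_\prodMeas(\XX\setminus\KK)<\epsilon'$, not the conditions you list in Step 1. This is immediate, since you choose $\KK$ after $M$. However, there are then seven $\epsilon'$-sized terms:
\begin{itemize}
 \item the Lusin $L^1$ error;
 \item $\eta$;
 \item the two integral tails;
 \item the surrogate's normalization excess;
 \item $e^{M+\eta}\eta$;
 \item the normalization tail.
\end{itemize}
So $\epsilon'=\epsilon/4$ is too coarse; $\epsilon'=\epsilon/8$ works.
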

\begin{proof} We describe the overarching structure of the proof here, appealing to results in the appendix for details establishing the key components.
 Define $\epsilon^\prime \define \frac{\epsilon}{6}$.
 Define the finite Borel measure
  $\nu(A)\define\int_A\abs{\tilt}\dd\mu$. Since $\mu$, $\mu_\prodMeas$, and
  $\nu$ are tight \cite[Lem. 13.5]{klenkeProbabilityTheoryComprehensive2020},
  we can choose $\KK\Subset\XX$ such that their masses outside $\KK$ are all
  smaller than $\epsilon^\prime$. Applying
  \Cref{lem:LusinsTheoremGeneric} to the restrictions of $\mu$ and
  $\mu_\prodMeas$ to $\KK$, with parameter $\epsilon^\prime$, yields
  $\tilt^\cont\in C^0(\KK;\RR)$ satisfying the remaining hypotheses of
  \Cref{prop:FuncToMeasTightness}.
 On $\KK$,
 \UAAssref{ass:DensityParametricFamilyInContinuousFunctions} yields
 $\thetaHat(\epsilon^\prime) \in \Theta$ satisfying
 $\norm{\tilt^\cont-\tilt^\nn(\innerEmpty{};\thetaHat)}_{\infty,\KK}
  \leq \epsilon^\prime.$
 Then, abusing notation to avoid proliferation of symbols, we
 redefine $\tilt^\nn$ via the tail-truncation
 \begin{align*}
  \tilt^\nn(\xx;\thetaHat) \define \indicator_{\KK}(\xx) \cdot \tilt^\nn(\xx;\thetaHat).
 \end{align*}
 Invoking \Cref{prop:FuncToMeasTightness} with its parameters
 $\epsilon=\eta=\delta=\epsilon^\prime$, $B=0$, and
  $\tilt^\app=\tilt^\nn(\innerEmpty{};\thetaHat(\epsilon^\prime))$ yields
 \begin{align*}
  \KL{\mu}{\mu^\nn(\innerEmpty{};\thetaHat)}
  \leq 2\epsilon^\prime+2\epsilon^\prime+2\epsilon^\prime
  =\epsilon,
 \end{align*}
 which concludes the claim.
\end{proof}

\section{CLIP: Contrastive Learning $m=2$}
\label{sec:CLIP}
In this section, we will show that the CLIP architecture in the bimodal
case fulfills universal approximation of measures. We formulate the results
for compactly supported measures but, analogously to
\Cref{result:UniversalApproxJointNetworkTightness}, they can be generalized
to the non-compact setting.  The first theorem
shows universal approximation in the case of unnormalized encoders,
while the second one uses normalized encoders, explicitly requiring
the temperature $\tau > 0$ to be learnable.
\begin{theorem}[\eqref{eq:ClipTiltingFunctionBiModal} UA, General Encoders]\leavevmode
 \label{lem:DensityClipUnnormalizedEncoders}
 Consider measure $\mu$ satisfying \Assref{ass:ProblemSettingDataDistribution}.
 Assume that $\XX_1 \Subset \RR^{d_1}, \XX_2 \Subset \RR^{d_2}$ are compact, and
 let $\MM^\nn$ be a parametric family satisfying
 \UAAssref{ass:DensityParametricFamilyInContinuousFunctions}.
 Then, for every $\epsilon > 0$, there exist a latent dimension $\dLat \in
  \NN$, $p\in \NN$, and parameters $\thetaVecHat = (\thetaHat_1, \thetaHat_2)
  \in \Theta \subseteq \RR^p$
 with corresponding encoders $g_i(\innerEmpty{};\thetaHat_i) : \XX_i \to
  \RR^{\dLat}$, $i=1,2$, defining \eqref{eq:ClipTiltingFunctionBiModal} such
 that the measure induced by \eqref{eq:GeneralTiltingFunction} satisfies
 \begin{align*}
  \KL{\mu}{\mu^{\twoclip}(\innerEmpty{};\thetaVecHat)} < \epsilon.
 \end{align*}
\end{theorem}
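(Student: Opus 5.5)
The plan is to reduce the statement to \Cref{result:UniversalApproxJointNetwork} by showing that the class of separable tiltings
\begin{align*}
 \MM^{\twoclip} \define \bigl\{ (x_1,x_2) \mapsto \IP{g_1(x_1;\theta_1)}{g_2(x_2;\theta_2)} \bigr\}
\end{align*}
itself satisfies \UAAssref{ass:DensityParametricFamilyInContinuousFunctions} on the compact set $\XX = \XX_1\times\XX_2$. Here the latent dimension $\dLat$ is free and the encoders are unnormalized. The temperature can be absorbed into $g_1$, so take $\tau=1$. Once $C^0$-universality of $\MM^{\twoclip}$ is established, the proof of \Cref{result:UniversalApproxJointNetwork} goes through verbatim. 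That proof uses \Cref{lem:LusinsTheoremGeneric} to get a continuous surrogate $\tilt^\cont$ of $\tilt$, followed by \Cref{prop:FuncToMeasCompactSupport}. Nothing in it depends on the internal structure of the family beyond uniform approximation of $\tilt^\cont$ on $\XX$. The resulting bound is $\KL{\mu}{\mu^{\twoclip}}<\epsilon$.

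Step 1 is separable approximation of a continuous function on a product of compacts. Given $\tilt^\cont\in C^0(\XX_1\times\XX_2;\RR)$ and $\eta>0$, the sums $\sum_{k=1}^{K} a_k(x_1)b_k(x_2)$ with $a_k\in C^0(\XX_1)$ and $b_k\in C^0(\XX_2)$ form a subalgebra of $C^0(\XX)$. This subalgebra contains constants, is closed under products, and separates points, since coordinate functions of each factor are included. Stone--Weierstrass therefore gives $K$ and continuous $a_k,b_k$ with
\begin{align*}
 \sup_{\XX}\Bigl|\tilt^\cont - \sum_{k=1}^K a_k\otimes b_k\Bigr| < \eta/2 .
\end{align*}
Alternatively, one can use polynomials in $(x_1,x_2)$, which split into monomials $x_1^\alpha x_2^\beta$. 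I then set $\dLat \define K$. The target encoders are $g_1^\star=(a_1,\dots,a_K)$ and $g_2^\star=(b_1,\dots,b_K)$, so that $\IP{g_1^\star}{g_2^\star}=\sum_k a_kb_k$.

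Step 2 realizes the encoders through $\MM^\nn$. Apply \UAAssref{ass:DensityParametricFamilyInContinuousFunctions} componentwise on $\XX_i$. The assumption is read per modality, taking a vector-valued encoder as a stack of $K$ scalar networks, with parameters concatenated into $\theta_i$. This yields $g_i(\innerEmpty{};\thetaHat_i)$ with $\|g_{i,k}-g^\star_{i,k}\|_{\infty,\XX_i}<\delta$ for every $k$. Let $C\define\max_k(\|a_k\|_\infty,\|b_k\|_\infty)$, which is finite by compactness. Since $|ab-\hat a\hat b|\le |a-\hat a||b|+|\hat a||b-\hat b|$, we get
\begin{align*}
 \sup_{\XX}\bigl|\IP{g_1}{g_2}-\IP{g_1^\star}{g_2^\star}\bigr| \le K\delta(2C+\delta).
\end{align*}
Choosing $\delta$ small makes this bound smaller than $\eta/2$. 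Hence $\MM^{\twoclip}$ is $C^0$-universal on $\XX$, and the argument of \Cref{result:UniversalApproxJointNetwork}, with $\eta$ chosen there as $\epsilon'/\exp(M+\epsilon')$, concludes the proof.

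The main obstacle is Step 1, specifically the need for the rank $K$ (equivalently $\dLat$) to grow with the accuracy; this is why the statement lets $\dLat$ depend on $\epsilon$. One must also check that the scalar UA assumption may legitimately be applied to each component on each factor $\XX_i\Subset\RR^{d_i}$ separately. I would state this explicitly as the interpretation of \UAAssref{ass:DensityParametricFamilyInContinuousFunctions} on each $\XX_i$.
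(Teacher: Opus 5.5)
Your proposal is correct and follows essentially the same route as the paper. The paper likewise shows that the unnormalized two-tower family satisfies \UAAssref{ass:DensityParametricFamilyInContinuousFunctions} with $\tau=1$: it uses the bimodal Stone--Weierstrass lemma on separable sums $\sum_k g_{1,k}g_{2,k}$, approximates each component with the encoder family to accuracy $\delta$, bounds the resulting error by $\dLat(2G+1)\delta$, and then invokes \Cref{result:UniversalApproxJointNetwork}; your remark that the UA assumption is applied separately on each factor $\XX_i$ makes explicit a step the paper leaves implicit.
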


From an applied perspective, normalized encoders $\gNorm_i : \XX_i \to
 \Simplex^{\dLat - 1}$ offer a natural geometric interpretation via the
cosine similarity, and recent work provides additional theoretical
justification for their use in contrastive learning
\citep{wangUnderstandingContrastiveRepresentation2022,
 fengWhenDoesEmbedding2026}. We now extend the preceding result to this
setting.

\begin{theorem}[\eqref{eq:ClipTiltingFunctionBiModal} UA, Normalized Encoders]\leavevmode
 \label{thm:NormalizedTwoClipUniversalApproximation}
 Consider measure $\mu$ satisfying \Assref{ass:ProblemSettingDataDistribution}.
 Assume that $\XX_1 \Subset \RR^{d_1}, \XX_2 \Subset \RR^{d_2}$ are compact, and
 let $\MM^\nn$ be a parametric family satisfying
 \UAAssref{ass:DensityParametricFamilyInContinuousFunctions}.
 Then, for every $\epsilon > 0$, there exist a latent dimension $\dLat \in
  \NN$, $p\in\NN$, and parameters $\thetaVecHat = (\thetaHat_1, \thetaHat_2) \in \Theta \subseteq \RR^p$
 with corresponding \emph{normalized encoders}
 $\gNorm_i(\innerEmpty{};\thetaHat_i) : \XX_i \to \Simplex^{\dLat - 1}$,
 $i=1,2$, and a temperature $\tau > 0$ defining
 \eqref{eq:ClipTiltingFunctionBiModal} such that the measure induced by
 \eqref{eq:GeneralTiltingFunction} fulfills
 \begin{align*}
  \KL{\mu}{\mu^{\twoclip}(\innerEmpty{};\thetaVecHat)} < \epsilon.
 \end{align*}
\end{theorem}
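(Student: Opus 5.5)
The plan is to reduce to \Cref{lem:DensityClipUnnormalizedEncoders} by an explicit lifting of unnormalized encoders onto the sphere. First I use \Cref{lem:DensityClipUnnormalizedEncoders} with tolerance $\epsilon/2$, which gives a latent dimension $k$ and encoders $g_1,g_2$ with values in $\RR^{k}$ satisfying $\KL{\mu}{\mu^{\twoclip}_{g}}<\epsilon/2$ for the tilting $\tilt^{g}(\xx)=\IP{g_1(x_1)}{g_2(x_2)}$. Since the $g_i$ are continuous on the compact sets $\XX_i$, there is a constant $R$ with $\norm{g_i}_\infty\le R$. Next I fix a large scale $s>0$ and set $\tau \define s^2$. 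The normalized encoders live in $\RR^{\dLat}$ with $\dLat=k+2$ and are defined by
\begin{align*}
 \gNorm_1(x_1)\define \bigl(g_1(x_1)/s,\ \sqrt{1-\norm{g_1(x_1)}^2/s^2},\ 0\bigr),\qquad
 \gNorm_2(x_2)\define \bigl(g_2(x_2)/s,\ 0,\ \sqrt{1-\norm{g_2(x_2)}^2/s^2}\bigr),
\end{align*}
for $s> R$. Both maps are continuous, take values in $\Simplex^{\dLat-1}$, and satisfy exactly $\IP{\gNorm_1}{\gNorm_2}/\tau = \IP{g_1}{g_2}/s^4$. That scaling is wrong, so I correct it: take $\tau\define 1/s^2$ instead. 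Then $\IP{\gNorm_1}{\gNorm_2}/\tau=\IP{g_1}{g_2}$ exactly, and the induced measure is identical to the unnormalized one. This is where the learnable temperature is essential, because it absorbs the norm. Writing $s$ instead as a scale $\tau = 1/s^2$ with $s \ge R+1$ makes everything exact.

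The remaining issue is that the $\gNorm_i$ must belong to the parametric encoder family, meaning they must be of the form ``parametric network followed by normalization.'' My plan is to approximate them instead of representing them exactly. I apply \UAAssref{ass:DensityParametricFamilyInContinuousFunctions} componentwise to the continuous map $\gNorm_i$, which takes values in the sphere and is therefore bounded away from $0$. This gives network outputs $h_i$ with $\norm{h_i-\gNorm_i}_\infty<\delta$. Their normalizations $h_i/\norm{h_i}$ then lie within $2\delta/(1-\delta)$ of $\gNorm_i$ uniformly. The resulting tilting differs from $\tilt^{g}$ by at most $C\delta/\tau = C\delta s^2$ uniformly on $\XX$.

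Finally, I convert this uniform closeness of tiltings into KL closeness. For two finite tiltings differing by at most $\eta$ in sup norm, the RN chain rule \eqref{eq:RadonNikodymChainRule} gives $\log\frac{\Dd\mu}{\Dd\mu^{a}}-\log\frac{\Dd\mu}{\Dd\mu^{b}}$ bounded by $2\eta$, since the normalization constants differ by a factor of at most $e^{\pm\eta}$. Hence $\KL{\mu}{\mu^{\twoclip}}\le \KL{\mu}{\mu^{\twoclip}_g}+2\eta$. I choose $\delta$ so that $2C\delta s^2<\epsilon/2$, which concludes the argument.

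The main obstacle is the coupling between the temperature and the approximation error. Making $\tau$ small amplifies encoder errors by a factor $1/\tau$, so the order of choices matters: first fix $g$, then $R$, then $s$ and $\tau$, and only then $\delta$. I expect this ordering to make the argument go through.
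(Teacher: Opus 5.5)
Your argument is correct, and its core is the same as the paper's: rescale the unnormalized encoders, append orthogonal padding coordinates $\sqrt{1-\norm{g_i}^2/s^2}$ so the outputs lie on $\Simplex^{\dLat-1}$, and let the learnable temperature $\tau=1/s^2$ absorb the scale so that $\IP{\gNorm_1}{\gNorm_2}/\tau=\IP{g_1}{g_2}$ exactly. The paper does the same with a factor $\alpha$ and $\tau=\alpha^2$; your formulas are cleaner than its displayed ones, which write $\alpha^2 g_{i,k}$ and $\sum_k g_{i,k}$ where $\alpha g_{i,k}$ and $\sum_k g_{i,k}^2$ are meant.

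You diverge afterwards. The paper treats the padded map, a fixed nonlinear post-processing of the network outputs, as itself the normalized encoder. It concludes that the normalized CLIP tilting class satisfies \UAAssref{ass:DensityParametricFamilyInContinuousFunctions} and then invokes \Cref{result:UniversalApproxJointNetwork}. This is exact and short, but it relies on a permissive reading of ``normalized encoder.''

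You instead require an encoder to be a raw network output followed by $h\mapsto h/\norm{h}$. You approximate the lifted map by networks and renormalize. The resulting error is amplified by $1/\tau=s^2$, and you control it with an elementary KL-stability bound, $\KL{\mu}{\mu^a}\le\KL{\mu}{\mu^b}+2\eta$ for tiltings within $\eta$ in sup norm, rather than re-entering the Lusin-based function-to-measure machinery. This gives a statement faithful to how CLIP encoders are actually built. The price is an extra approximation layer and the quantifier ordering $g\to R\to s\to\delta$, which you identify correctly.

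Two cosmetic fixes. First, delete the false start with $\tau=s^2$. Second, state the network approximation in the Euclidean norm, i.e.\ a componentwise tolerance of $\delta/\sqrt{\dLat}$. The renormalization error is then at most $2\delta$, since $\abs{\norm{h_i}-1}\le\norm{h_i-\gNorm_i}$.
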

\begin{proof}
 (Sketch Proof of Theorems \ref{lem:DensityClipUnnormalizedEncoders}
 and \ref{thm:NormalizedTwoClipUniversalApproximation})
 Full details of the proofs are contained in
 \Cref{lem:DensityClipUnnormalizedEncoders-PROOF} and
 \Cref{thm:NormalizedTwoClipUniversalApproximation-PROOF}, respectively.
 The key idea in both is to use a Stone-Weierstraß argument, to
 show that the parametric family
 \begin{align*}
  \MM^{\twoclip} \define  \{\xx \mapsto
   & \IP{\gNorm_1(x_1;\theta_1)}{\gNorm_2(x_2;\theta_2)}
  / \tau \;\big\vert\; \tau > 0, p\in \NN,                      \\
   & \qquad \dLat \in \NN, \theta_i \in \Theta \subseteq \RR^p,
  \gNorm_i(\innerEmpty{};\theta_i): \XX_i
  \to \Simplex^{\dLat -1}\}
 \end{align*}
 is dense in $C^0(\XX;\RR)$. The result then follows by
 \Cref{result:UniversalApproxJointNetwork} and
 \Cref{result:UniversalApproxJointNetworkTightness}, respectively.
 The key difficulty in the normalized case is
 that the normalization constraint $\gNorm_i(\XX_i) \subseteq
  \Simplex^{\dLat - 1}$ restricts the inner product to
 $\IP{\gNorm_1}{\gNorm_2} \in [-1, 1]$, which in turn confines the tilting
 function to $[e^{-1/\tau}, e^{1/\tau}]$. Making $\tau$ learnable resolves
 this: choosing $\tau$ sufficiently small restores the full range and
 ensures universal approximation. The construction combines orthogonal
 padding with a temperature rescaling, yielding a latent dimension equal to
 the Stone-Weierstraß rank plus one additional padding dimension.
\end{proof}

The theorems above show that in the classical (bi-modal) CLIP setting, the
expressiveness of the \eqref{eq:ClipTiltingFunctionBiModal} function is as
powerful as that of the more general joint network $\tilt^\nn$. This
provides theoretical understanding of recent empirical successes
demonstrated in, for example, \citep{jiaScalingVisualVisionLanguage2021},
in which it is shown that the simple two-tower CLIP architecture yields
competitive results compared to more sophisticated methods, provided that
enough (high-quality) data is available.

In practice, one hopes to reduce the dimensionality of the latent space,
i.e. the encoders $\gNorm_i$'s should compress the data. In this context we
emphasize that neither \Cref{lem:DensityClipUnnormalizedEncoders} nor
\Cref{thm:NormalizedTwoClipUniversalApproximation} give any guarantees on
the dimensionality of the latent space. In fact, the following proposition
shows that CLIP does not in general guarantee a latent representation below
the dimension of the original joint domain.

\begin{proposition}[No Guaranteed Dimensionality Reduction for
  CLIP]\leavevmode
 \label[proposition]{eq:NegativeExampleDimensionalityLatentSpace}
 There exist $D\in\NN$, a compact bimodal domain
 $\XX=\XX_1\times\XX_2\subset\RR^D$ with nonempty interior, a probability
 measure $\mu$ on $\XX$, and a constant $\kappa>0$ such that, for every
 $p\in\NN$, every parameter space $\Theta\subseteq\RR^p$ and associated pair
  of CLIP encoder parameterizations, every $\thetaVec\in\Theta$, every
  $\tau>0$, and every $\dLat<D$,
 \begin{align*}
  \KL{\mu}{\mu^{\twoclip}(\innerEmpty{};\thetaVec)}
  \geq\kappa.
 \end{align*}
 Thus, neither the parameter dimension nor the encoder architecture is fixed.
 The encoders need not be normalized, so the same lower bound also holds when
 they are restricted to be normalized.
\end{proposition}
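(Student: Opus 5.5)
The plan is to take the smallest possible case, $D=2$ with $d_1=d_2=1$ and $\XX_1=\XX_2=[0,1]$ carrying uniform marginals, so that $\dLat<D$ forces $\dLat=1$. Every CLIP tilting is then of product form $\tilt^\twoclip(\xx)=a(x_1)b(x_2)$ with $a=g_1/\tau$ and $b=g_2$, for arbitrary (unnormalized) measurable encoders. The temperature is absorbed into $a$, and normalized encoders are a special case. For fixed $x_1$, the model conditional $\mu^\twoclip_{x_2\vert x_1}$ has density proportional to $\exp(a(x_1)b(x_2))$ with respect to $\mu_2$. Its log-density therefore lies in the two-dimensional space $\Span\{1,b\}$. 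The whole family of model conditionals thus lies in a one-parameter exponential family with a common sufficient statistic $b$. I will choose $\mu$ so that its conditionals need two independent statistics.

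\emph{Construction.} Take a smooth, strictly positive tilting $\tilt$ on $[0,1]^2$, normalized so that both marginals are uniform. For instance, take $\tilt=\log(1+c\,\psi_1(x_1)\psi_1(x_2)+c\,\psi_2(x_1)\psi_2(x_2))$ with $\psi_1,\psi_2$ mean-zero and $c$ small. Choose $\psi_1,\psi_2$ so that for three values $x_1\in\{\alpha,\beta,\gamma\}$, and hence by continuity on small neighbourhoods $U_\alpha,U_\beta,U_\gamma$, the two differences $\tilt(\alpha,\cdot)-\tilt(\gamma,\cdot)$ and $\tilt(\beta,\cdot)-\tilt(\gamma,\cdot)$ are linearly independent modulo constants. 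Generic choices such as $\psi_1(t)=t-\tfrac12$ and $\psi_2(t)=t^2-\tfrac13$ should work.

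\emph{Lower bound by contradiction.} Suppose there are encoders $(a_n,b_n)$ with $\KL{\mu}{\mu^\twoclip_n}\to0$; dimensions and architectures may vary with $n$, which does not matter. By \Cref{thm:JointVsPairwiseConditionals}, $\Expu{x_1\sim\mu_1}{\KL{\mu_{x_2\vert x_1}}{\mu^\twoclip_{n,x_2\vert x_1}}}\to0$. Passing to a subsequence, Pinsker gives TV convergence of the conditionals for $\mu_1$-a.e.\ $x_1$, so I can pick points $x_1^\alpha\in U_\alpha$, $x_1^\beta\in U_\beta$, $x_1^\gamma\in U_\gamma$ at which it holds. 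A further subsequence gives $\lambda$-a.e.\ convergence in $x_2$ of the densities, and all limits are strictly positive, so the logs converge a.e. Writing $s^k_n=a_n(x_1^k)$, this means that, up to additive constants, $(s^\alpha_n-s^\gamma_n)b_n$ and $(s^\beta_n-s^\gamma_n)b_n$ converge a.e.\ to $u\define\tilt(x_1^\alpha,\cdot)-\tilt(x_1^\gamma,\cdot)$ and $v\define\tilt(x_1^\beta,\cdot)-\tilt(x_1^\gamma,\cdot)$, respectively. These are two sequences of multiples of a single function.

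The main obstacle is to conclude rigorously that $u$ and $v$ are then dependent modulo constants. I would pick two points $y,y'$ (and their analogues) where convergence holds. Taking pointwise differences eliminates the constants, so the vectors $\bigl(u(y)-u(y')\bigr)_{(y,y')}$ and $\bigl(v(y)-v(y')\bigr)_{(y,y')}$ are limits of $\sigma_n w_n$ and $\sigma'_n w_n$ for the common vectors $w_n=(b_n(y)-b_n(y'))$. Normalizing $w_n$ to unit norm and extracting a convergent subsequence on finitely many chosen point pairs shows that both limit vectors are parallel. This holds on every finite set of a.e.-good points, so $u$ and $v$ are dependent modulo constants, contradicting the construction. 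The infimum over all parameters, architectures and $\tau$ is therefore a positive constant $\kappa$. Since the contradiction argument never uses a specific architecture, $\kappa$ is uniform in $p$, $\Theta$, $\thetaVec$ and $\tau$. Normalized encoders form a subclass, so the bound transfers to them.
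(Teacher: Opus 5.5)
Your proof is correct, and it rests on the same mechanism as the paper's: differencing log-densities cancels every normalizing constant and leaves an array of rank at most $\dLat$. The difference is that you use this qualitatively, whereas the paper makes it quantitative.

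\textbf{The paper's route.} It takes the rank-four Fourier tilting $\cos(\pi(x_1-x_2))+\cos(2\pi(x_1-x_2))$ on $[-1,1]^2$. It then compares a $4\times4$ grid matrix of column-differenced log-densities (target $\tfrac52 I_4$) with a rank-$\le\dLat$ model matrix via Eckart--Young--Mirsky. Lipschitz bounds localize this to small intervals, showing that $\abs{\log r^{\twoclip}-\log r}\geq\tfrac18$ on a set of explicitly bounded-below $\mu_\prodMeas$-mass, and Pinsker finishes. This buys an explicit $\kappa$ and covers every $\dLat\in\{1,2,3\}$ for one target. Combined with the exact four-dimensional Fourier representation, it also shows the threshold is sharp.

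\textbf{Your route.} You treat only $\dLat=1$, which is all that $D=2$ requires. You pass to conditionals via \Cref{thm:JointVsPairwiseConditionals}. You then get $\kappa>0$ non-constructively, because the property ``all difference vectors are multiples of one vector'' survives a.e.\ limits. This avoids the grid, Lipschitz and averaging bookkeeping. It also gives something stronger: your contradiction only uses $\Expu{x_1\sim\mu_1}{\KL{\mu_{x_2\vert x_1}}{\mu^{\twoclip}_{x_2\vert x_1}}}\to0$. So you in fact bound this conditional CLIP loss itself away from zero, which is formally stronger than the joint statement.

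\textbf{Two small remarks.}
\begin{enumerate}
\item The limit step needs no normalization of $w_n$. Build the $2\times2$ determinant from the differences at two point pairs: it is a limit of determinants whose two columns are both multiples of $w_n$, hence identically zero.
\item Your claim that the chosen $\psi_1,\psi_2$ ``should work'' does need a check, though it is routine. To first order in $c$, $u$ and $v$ are combinations of $\psi_1,\psi_2$ whose coefficient vectors are differences of three distinct points on the parabola $t\mapsto(t-\tfrac12,t^2-\tfrac13)$. These are non-collinear, so the relevant $3\times3$ determinant for $\{1,u,v\}$ is nonzero for small $c$. By continuity it stays nonzero on neighbourhoods of $\alpha,\beta,\gamma$.
\end{enumerate}
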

\begin{proof}
 (Sketch, Full details in \Cref{eq:NegativeExampleDimensionalityLatentSpace-PROOF})
 Take $\XX_1=\XX_2=[-1,1]$, so that $D=2$, equip both
 modalities with uniform marginals, and let $\mu$ be induced by the tilting
 function
 \begin{align*}
  \tilt_0(x_1,x_2)
  =\cos\bigl(\pi(x_1-x_2)\bigr)
  +\cos\bigl(2\pi(x_1-x_2)\bigr).
 \end{align*}
 Using $\tilt_0$ as a kernel defines a Hilbert--Schmidt integral operator
  $K_{\tilt_0}$ on the common marginal $L^2$-space. The four Fourier
  functions $\cos(\pi\innerEmpty)$, $\sin(\pi\innerEmpty)$,
  $\cos(2\pi\innerEmpty)$, and $\sin(2\pi\innerEmpty)$ are mutually orthogonal
  eigenfunctions with eigenvalue $\nicefrac{1}{2}$, so $K_{\tilt_0}$ has rank four. 
  A localized finite-dimensional Eckart--Young--Mirsky argument followed by Pinsker's inequality gives a constant
 $\kappa>0$, uniformly over all temperatures, parameter dimensions, and
 encoder parameterizations, such that
 \begin{align*}
  \KL{\mu}{\mu^{\twoclip}(\innerEmpty{};\thetaVec)}
  \geq\kappa,
  \qquad \dLat=d\in\{1,2,3\}.
 \end{align*}
 In particular, the case $d=1<D$ proves the claim.
\end{proof}

\begin{remark}[Dictionary-Based Encoders]
 \label[remark]{rem:DictionaryBasedEncoders}
 The proof of \Cref{eq:NegativeExampleDimensionalityLatentSpace} suggests
  constructing encoders from a structured dictionary of the form:
  \begin{align*}
   \tilt^\app(x_1,x_2)
   =\frac{1}{\tau}\sum_{k=1}^{\dLat}\phi_k(x_1)\psi_k(x_2)
   =\frac{1}{\tau}\IP{g_1(x_1)}{g_2(x_2)}.
  \end{align*}
  The dictionary may be selected using domain knowledge, for example Fourier
  modes for periodic structure or symmetry-adapted features for known
  invariances. In the construction above, four Fourier features yield an exact
  representation with normalized encoders and $\tau=\nicefrac{1}{2}$, so the
  latent-dimension threshold is sharp.
\end{remark}

\section{Contrastive Learning $m>2$}
\label{sec:ContrastiveLearningBeyondBimodal}
In this section, we investigate universal approximation beyond the two
modality setting studied in the previous section. We show that the
multimodal generalization of CLIP \eqref{eq:ClipTiltingFunctionGeneral}
commonly used in practice fails to universally approximate the joint
distribution. The obstruction is structural:
\eqref{eq:ClipTiltingFunctionGeneral} only captures pairwise
interactions between modalities. This makes it sufficiently expressive to
universally approximate pairwise conditionals, as shown in
\Cref{thm:UniversalApproxClipPairwiseConditionals}; but it cannot
universally approximate the full joint,
as \Cref{thm:noUniversalApproximationClip} demonstrates.
We thus propose a simple tensor-decomposition
inspired extension of multimodal CLIP, defined in
\eqref{eq:HadamardClipTiltingFunction}, that recovers the full
expressiveness of CLIP at negligible additional computational overhead;
see \Cref{thm:UniversalApproximationHadamardClip}.

\begin{theorem}[\eqref{eq:ClipTiltingFunctionGeneral} No UA of the
  Joint]
 \label{thm:noUniversalApproximationClip}
 For every $m\geq3$, there exist a measure $\mu$ satisfying
 \Assref{ass:ProblemSettingDataDistribution} and $\kappa>0$ such that, for all
  $\dLat$, $p\in\NN$, all parameter spaces $\Theta\subseteq\RR^p$ and associated
  parameterizations of $\MM^{\mclip}$, all $\thetaVec\in\Theta$, and all
  $\tau>0$,
 \begin{align*}
  \Expu{x_{-3} \sim \mu_{-3}}{\KL{\mu_{x_3\vert x_{-3}}}{\mu^{\mclip}_{
  x_3\vert x_{-3}}(\innerEmpty{};\thetaVec)}} \geq\kappa.
 \end{align*}
 Consequently, \Cref{thm:JointVsFullConditionals} gives
 \begin{align*}
  \KL{\mu}{\mu^{\mclip}(\innerEmpty{};\thetaVec)} \geq\kappa.
 \end{align*}
 Thus, even after taking the union over all latent and parameter dimensions
  and all encoder parameterizations, the resulting class of measures is not
  rich enough to universally approximate the joint distribution of
  arbitrary data-generating measures, in our setting.
\end{theorem}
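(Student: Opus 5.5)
Construct a counterexample with a genuine three-way interaction and show that pairwise tiltings cannot reproduce the full conditional of $x_3$ given $x_{-3}$. Take $\XX_i=[-1,1]$ with uniform marginals for all $i\in\setN{m}$. Define the tilting $\tilt(\xx) = \beta\, s(x_1)s(x_2)s(x_3) - \log c$, where $s(t)=\mathrm{sign}(t)$ (or a smooth odd bounded surrogate such as $\sin(\pi t)$, to keep things continuous), $\beta>0$ is fixed, and $c$ normalizes. With the sign choice, $\Exp{s(x_i)}=0$, so every pairwise and every single marginal of $\mu$ is uniform. Hence $\mu$ satisfies \Assref{ass:ProblemSettingDataDistribution}, $\tilt$ is bounded, and $\mu$ has pairwise-independent marginals but a nontrivial three-way interaction. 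The conditional is $\mu_{x_3\vert x_{-3}}(\Dd x_3) \propto \exp(\beta s(x_1)s(x_2)s(x_3))\,\mu_3(\Dd x_3)$, which depends on $x_{1},x_2$ only through the product $\sigma\define s(x_1)s(x_2)\in\{\pm1\}$.

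For any $\tilt^{\mclip}$, the model conditional of $x_3$ given $x_{-3}$ has log-density, relative to $\mu_3$, equal to $h(x_3;x_{-3}) = \sum_{j\neq 3}\IP{\gNorm_j(x_j)}{\gNorm_3(x_3)}/\tau$ up to an $x_{-3}$-dependent constant. Write $a(x_{-3})\define\sum_{j\ne3}\gNorm_j(x_j)/\tau$, so that $h=\IP{a(x_{-3})}{\gNorm_3(x_3)}$, where the sum ranges over the modalities $j\in\setN{m}\setminus\{3\}$. The key structural fact is that $a$ is \emph{additive} across modalities: $a = u(x_1)+v(x_2)+w(x_4,\dots,x_m)$. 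Take the four sign-quadrants $(s(x_1),s(x_2))\in\{\pm1\}^2$ and fix $x_1^\pm,x_2^\pm$ representatives. Additivity gives the parallelogram identity
\[
a(x_1^+,x_2^+,\cdot)+a(x_1^-,x_2^-,\cdot) = a(x_1^+,x_2^-,\cdot)+a(x_1^-,x_2^+,\cdot).
\]
The target, however, requires the log-density direction $\beta\sigma\, s(x_3)$ to flip sign between the ``same-sign'' and ``opposite-sign'' pairs. I will show this cannot happen exactly, and quantify the failure as follows. Define $F(x_{-3})\define$ the model conditional, and $T_\sigma$ the target conditional. By Pinsker it suffices to lower bound the average total variation $\Expu{x_{-3}}{\DTV{T_{\sigma}}{F(x_{-3})}}$. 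Consider the linear functional $\ell(\nu)\define\int s(x_3)\,\nu(\Dd x_3)$. The target gives $\ell(T_\sigma)=\sigma\tanh\beta$. Integrating over $x_1$ in each half-line and $x_2$ in each half-line (each with probability $1/4$), I will argue that if every model conditional $F$ were within $\delta$ in TV of the target, then the mean of $\ell(F)$ over the four quadrants would match $\pm\tanh\beta$ with an alternating-sign pattern, contradicting a convexity-type inequality derived from additivity.

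The main obstacle is this last step: $\ell(F)$ is a nonlinear function of $a$ (a ratio of exponential integrals), so additivity of $a$ does not translate to additivity of $\ell(F)$. The first attempt will be to use that the log-partition $A(a)\define\log\int e^{\IP{a}{\gNorm_3}}\dd\mu_3$ is convex with $\ell(F)$ given via a directional derivative, and to exploit the exponential family structure: the KL between conditionals equals a Bregman divergence of $A$. The target $\beta\sigma s(x_3)$ is not in the (closure of the) span realizable additively across the four corners. Concretely, the conditional log-likelihood term $\Exp{\log F}$ must, by additivity, be bounded by what the best ``pairwise'' model attains. That best model is the product of marginals, because every pairwise marginal of $\mu$ is uniform and the $x_3$-dependence of the target has zero correlation with any function of $(x_j,x_3)$ alone. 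I therefore expect to make this rigorous by a variational argument: compute $\Expu{\mu}{h}-\Expu{x_{-3}}{A(a)}$. Since $\mu_{j3}=\mu_j\otimes\mu_3$, we have $\Expu{\mu}{h}=\Expu{\mu_\prodMeas}{h}$, and Jensen yields $\Expu{\mu_{\prodMeas}}{h}\le \Expu{x_{-3}}{A(a)}$. So the expected model conditional log-likelihood is at most $0$, i.e.\ no better than the independent model, giving
\[
\Expu{x_{-3}}{\KL{\mu_{x_3\vert x_{-3}}}{\mu^{\mclip}_{x_3\vert x_{-3}}}} \ge \Expu{x_{-3}}{\KL{\mu_{x_3\vert x_{-3}}}{\mu_3}} \eqqcolon\kappa>0,
\]
a conditional mutual information, positive because $\beta>0$. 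This clean Jensen route likely bypasses the quadrant argument entirely; the consequence for the joint then follows from \Cref{thm:JointVsFullConditionals}.
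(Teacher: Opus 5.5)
Your closing Jensen route is the paper's Step~3: the paper writes the same inequality as an identity whose extra nonnegative term $\Expu{x_{-3}\sim\mu_{-3}}{\KL{\mu_3}{\mu^{\mclip}_{x_3\vert x_{-3}}}}$ is exactly the $\mu_{-3}$-average of your Jensen gap $A(a)-\Expu{\mu_3}{h}$, and your argument is correct for the sign construction, because $e^{\beta\sigma}=\cosh\beta+\sigma\sinh\beta$ for $\sigma=\pm1$ makes your density $r=1+\tanh\beta\,s(x_1)s(x_2)s(x_3)$ (the same affine-in-the-triple-product structure as the paper's $r=\frac12(x_1x_2x_3+2)$), so $\Exp{s(x_i)}=0$ does remove the interaction from every pairwise marginal, and you obtain the exact constant $\kappa=\beta\tanh\beta-\log\cosh\beta$ in place of the paper's Pinsker bound $1/288$. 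Drop the smooth surrogate $\sin(\pi t)$, however: $\exp(\beta\sin(\pi x_1)\sin(\pi x_2)\sin(\pi x_3))$ is not affine in the product, integrating out $x_3$ leaves the non-constant modified Bessel factor $I_0(\beta\sin(\pi x_1)\sin(\pi x_2))$, so the marginals are not uniform, $\mu_{13}\neq\mu_1\otimes\mu_3$, and the identity $\Expu{\mu}{h}=\Expu{\mu_\prodMeas}{h}$ that your argument depends on is lost; a continuous example must make the density itself affine, e.g.\ $r=1+\beta\sin(\pi x_1)\sin(\pi x_2)\sin(\pi x_3)$ with $0<\beta<1$.
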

\begin{proof}
 (Sketch, Full details in \Cref{thm:noUniversalApproximationClip-PROOF})
 We construct a measure $\mu$ whose density with respect to
 $\mu_\prodMeas$ contains a three-way interaction $x_1x_2x_3$ and show that
 \eqref{eq:ClipTiltingFunctionGeneral} cannot approximate the full
 conditional; together with \Cref{thm:JointVsFullConditionals} this yields
 the desired result.
\end{proof}

The theorem shows that the generalization
\eqref{eq:ClipTiltingFunctionGeneral} of
\eqref{eq:ClipTiltingFunctionBiModal} to $m \geq 3$ modalities is
insufficient to universally approximate arbitrary distributions $\mu$,
since the separable structure restricts the model to pairwise interactions,
precluding higher-order multi-modal effects. Interestingly, this
restriction is harmless for learning pairwise conditionals. In the setting
of \Cref{thm:JointVsPairwiseConditionals}, the architecture can universally
approximate all pairwise conditionals simultaneously. As before, we focus
on compact support, with extensions to non-compact $\XX$ available via
tightness.
\begin{theorem}[\eqref{eq:ClipTiltingFunctionGeneral}
  UA of Pairwise Conditionals]
 \label{thm:UniversalApproxClipPairwiseConditionals}
 Consider measure $\mu$ satisfying \Assref{ass:ProblemSettingDataDistribution}.
 Assume that $\XX \Subset \RR^{d}$ is compact, and let $\MM^\nn$ be a parametric
 family satisfying
 \UAAssref{ass:DensityParametricFamilyInContinuousFunctions}.
 Then for every $\epsilon > 0$, there exists a latent dimension $\dLat \in
  \NN$, $p \in \NN$, parameters $\thetaVecHat = (\thetaHat_1, \ldots, \thetaHat_m) \in
  \Theta \subseteq \RR^p$ with corresponding normalized encoders
 $\gNorm_{i}(\innerEmpty{};\thetaHat_i) : \XX_i \to \Simplex^{\dLat -1}$ and
 a temperature $\tau > 0$ defining \eqref{eq:ClipTiltingFunctionGeneral}
 such that the measure $\mu^{\mclip}(\innerEmpty{};\thetaVecHat)$ induced by
 \eqref{eq:GeneralTiltingFunction} universally approximates \emph{all
  pairwise} conditionals
 \begin{align*}
  \Expu{x_j \sim \mu_j}{\KL{\mu_{x_i \vert x_j}}{
    \mu^{\mclip}_{x_i\vert x_j}(\innerEmpty{};\thetaVecHat)}} < \epsilon,
 \end{align*}
 for pairings $(i,j) \in \setN{m}^2$, $i\neq j$.
\end{theorem}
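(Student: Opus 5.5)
The plan is to avoid approximating the joint tilting $\tilt$ at all, since \Cref{thm:noUniversalApproximationClip} shows this is impossible. Instead I would work inside the class of \emph{pairwise} tiltings
\begin{align*}
 \mathcal{T}_{\mathrm{pair}} \define \Bigl\{\Psi(\xx)=\sum_{i<j}\psi_{ij}(x_i,x_j)\;\Big\vert\;\psi_{ij}\in C^0(\XX_i\times\XX_j;\RR)\Bigr\}
\end{align*}
and show that near-optimal members of this class already match all pairwise marginals $\mu_{ij}$. By the chain rule for KL, for any $\nu\ll$-compatible with $\mu$ we have $\Expu{x_j\sim\mu_j}{\KL{\mu_{x_i\vert x_j}}{\nu_{x_i\vert x_j}}}\le \KL{\mu_{ij}}{\nu_{ij}}$, since the two sides differ by $\KL{\mu_j}{\nu_j}\ge0$. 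So it suffices to make $\KL{\mu_{ij}}{\mu^{\mclip}_{ij}}$ small for all pairs simultaneously.

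\textbf{Step 1 (projection onto pairwise tiltings).} Define $I\define\inf_{\Psi\in\mathcal{T}_{\mathrm{pair}}}\KL{\mu}{\mu^{\Psi}}$. This is finite because $\Psi=0$ gives the mutual information, which is finite by \ref{ass3:FiniteMutualInformation}. The key identity is the following: for $\nu=\mu^\Psi$ and $h\define\log\frac{\Dd\mu_{ij}}{\Dd\nu_{ij}}$, the measure $\nu'\define \exp(h(x_i,x_j))\,\nu$ is already normalized, because $\int e^h\dd\nu=1$. It satisfies
\begin{align*}
 \KL{\mu}{\nu'}=\KL{\mu}{\nu}-\KL{\mu_{ij}}{\nu_{ij}}.
\end{align*}
Moreover $\nu'$ is again a pairwise tilting, with $\psi_{ij}$ replaced by $\psi_{ij}+h$. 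Hence, if $\KL{\mu}{\mu^\Psi}\le I+\delta$, then $\KL{\mu_{ij}}{\mu^\Psi_{ij}}\le\delta$ for \emph{every} pair $(i,j)$ at once. This presumes $\psi_{ij}+h$ lies, up to arbitrarily small KL error, in the continuous class.

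\textbf{Step 2 (closure, the main obstacle).} The function $h$ is merely measurable and possibly unbounded. I therefore need the infimum over continuous pairwise tiltings to equal the infimum over measurable pairwise tiltings with $\KL{\mu}{\cdot}$ finite. I would first truncate $h$ at level $\pm M$, using dominated convergence with $\log r\in L^1(\mu)$ to control the error. I would then apply \Cref{lem:LusinsTheoremGeneric} on $\XX_i\times\XX_j$ and the function-to-measure estimate \Cref{prop:FuncToMeasCompactSupport}, exactly as in the proof of \Cref{result:UniversalApproxJointNetwork}, to replace the truncated $h$ by a continuous function at KL cost $\delta$. Making this perturbation argument rigorous with uniform constants is where I expect the real work to be.

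\textbf{Step 3 (realization by normalized sum-of-pairs CLIP).} Fix a near-optimal continuous $\Psi=\sum_{i<j}\psi_{ij}$. For each pair, the Stone--Weierstraß construction from the proof of \Cref{lem:DensityClipUnnormalizedEncoders} gives unnormalized encoders $g^{(ij)}_i,g^{(ij)}_j$ into $\RR^{d_{ij}}$ with $\IP{g^{(ij)}_i}{g^{(ij)}_j}\approx\psi_{ij}$ uniformly on compacts. I would let $g_i$ be the concatenation of the blocks $g^{(ij)}_i$ over $j\neq i$, placed in pairwise-disjoint latent coordinates, with zeros in the blocks of pairs not containing $i$. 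Each block is then shared by exactly two modalities, so $\IP{g_i}{g_j}=\IP{g^{(ij)}_i}{g^{(ij)}_j}$. To normalize, I would follow \Cref{thm:NormalizedTwoClipUniversalApproximation}: scale by a common small factor so that $\norm{g_i}<1$, and append to each modality its \emph{own} padding coordinate carrying $\sqrt{1-\norm{g_i}^2}$. Because these padding coordinates are distinct per modality, they contribute nothing to cross inner products. The common scale factor is absorbed into $\tau$.

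\textbf{Conclusion.} The resulting $\tilt^{\mclip}$ approximates $\Psi$ uniformly on compact $\XX$, so \Cref{prop:FuncToMeasCompactSupport} gives $\KL{\mu}{\mu^{\mclip}}\le I+2\delta$. Step 1 then bounds every $\KL{\mu_{ij}}{\mu^{\mclip}_{ij}}$ by $2\delta$, and the chain-rule inequality above bounds each expected conditional KL by $2\delta<\epsilon$.
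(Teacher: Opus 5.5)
Your proof is correct, and it takes a genuinely different route from the paper.

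\emph{The paper's route.} The paper works on the I-projection side. It uses Csisz\'ar's theorem to obtain the minimizer $\mu^\star$ of $\KL{\nu}{\mu_\prodMeas}$ over all $\nu$ with $\nu_{ij}=\mu_{ij}$. It imports from an extension of Csisz\'ar's Corollary~3.1 that $\log(\Dd\mu^\star/\Dd\mu_\prodMeas)$ is exactly $\sum_{i<j}\tilt^\star_{ij}-\log Z^\star$. It then approximates this proxy in joint KL with block-orthogonal m-CLIP encoders, and transfers the bound to $\mu$ through the exact equality $\mu^\star_{ij}=\mu_{ij}$.

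\emph{Your route.} You work on the dual, reverse-KL side. You never need a minimizer to exist or to have a particular form; you only need the finite infimum $I$ of $\KL{\mu}{\cdot}$ over continuous pairwise tiltings. The marginal-correction identity $\KL{\mu}{\nu'}=\KL{\mu}{\nu}-\KL{\mu_{ij}}{\nu_{ij}}$ then forces every $\delta$-minimizer to have all pairwise marginals within $\delta$ at once, and your realized $\mu^{\mclip}$ is itself such a near-minimizer.

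\emph{Your Step~2.} The step you flag as the main obstacle is short and needs no uniform constants, because you only need $\KL{\mu}{\nu'}\ge I$ for the single fixed $\nu=\mu^\Psi$. For any continuous $g(x_i,x_j)$ one has $\KL{\mu}{\mu^{\Psi+g}}=\KL{\mu}{\nu'}+\KL{\mu_{ij}}{\nu^{g}_{ij}}$, where $\nu^{g}_{ij}\propto e^{g}\nu_{ij}$. Apply \Cref{lem:LusinsTheoremGeneric} on $\XX_i\times\XX_j$ to $\mu_{ij}$ with reference measure $\nu_{ij}$ (not $\mu_i\otimes\mu_j$). Together with the estimate of \Cref{prop:FuncToMeasCompactSupport}, this makes the last term arbitrarily small. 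The integrability actually required is $h\in L^1(\mu_{ij})$, which follows from $\KL{\mu_{ij}}{\nu_{ij}}\le\KL{\mu}{\nu}<\infty$.

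\emph{What each route buys.} Yours is self-contained: it uses only the chain rule, one Pythagorean-type identity, and the paper's bimodal lemmas. It avoids relying on the sum-of-pairs structure of the I-projection for $m\ge 3$. It also avoids having to control $\int\exp(\sum_{i<j}\tilt^{\cont}_{ij})\dd\mu_\prodMeas$ once each $\tilt^\star_{ij}$ is replaced separately by a continuous representative, which per-pair applications of the lemma do not directly give. It further yields the stronger statement that $\KL{\mu_{ij}}{\mu^{\mclip}_{ij}}$ is small, indeed for any near-minimizer of the joint KL over the m-CLIP family. The paper's route buys an explicit target, the maximum-entropy model $\mu^\star$, so the result reads as joint approximation of a concrete proxy.

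Your Step~3 matches the paper's block construction with per-modality padding. One small point: the bound $\KL{\mu}{\mu^{\mclip}}\le\KL{\mu}{\mu^\Psi}+2\delta'$ for $\norm{\tilt^{\mclip}-\Psi}_{\infty}\le\delta'$ is the elementary perturbation estimate. It does not come from \Cref{prop:FuncToMeasCompactSupport} as stated, which is phrased around the Lusin representative.
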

\begin{proof} (Sketch, Full details in
 \Cref{thm:UniversalApproxClipPairwiseConditionals-PROOF}) We formulate the problem of approximating pairwise
 conditional distributions as a convex constrained projection problem.
 By a result of \citep{csiszar$I$DivergenceGeometryProbability1975},
 we obtain a jointly consistent $L^1$ representative of all pairwise
 conditionals, which takes the form of a sum of two-variable functions
 within an exponential family. We then universally approximate each
 representative using the results of \Cref{sec:CLIP}, thereby approximating
 each pairwise conditional. Finally, using a block structure with additional
 padding dimensions to satisfy the encoders' normalization constraints,
 we combine the individual encoders consistently into the
 architecture of \eqref{eq:ClipTiltingFunctionGeneral}.
\end{proof}

Practitioners have addressed empirically observed limitations of CLIP, in
particular for $m \ge 3$ modalities, via \emph{late fusion} or
\emph{two-leg architectures} \citep{yuanSurveyMultimodalLearning2025}. In
these architectures, a small fusion network
$g^\fusion(\innerEmpty{};\omega)$ acts on top of frozen or jointly trained
encoders. Formally, the tilting function takes the compositional form
$$\tilt^\fusion(\xx; \thetaVec, \omega) \define g^{\fusion}(\gNorm(x_1;
 \theta_1), \ldots, \gNorm(x_m;\theta_m); \omega),$$ where $g^\fusion$ is
e.g. another neural network parametrized by parameter $\omega$. We ask:
what is the \emph{simplest fusion architecture} that provably
\emph{restores universal approximation}?

Classical m-CLIP fails because its tilting function captures only pairwise
interactions between modalities, as formalized in
\Cref{thm:noUniversalApproximationClip} above. To restore universal
approximation, a fusion architecture must be capable of generating
degree-$m$ multilinear interactions. Motivated by tensor decompositions
\citep{ballardTensorDecompositionsData2025}, we propose a particularly
simple fusion: a weighted sum of element-wise products of the encoder
outputs. This creates a structure equivalent to a rank-$\dLat$ Tucker
decomposition in the latent space corresponding to a tilting function
$\tilt^\app = \tilt^\hadamard$ of the form
\begin{align}
 \begin{aligned}
  \tilt^\hadamard(x_1, \ldots, x_m;\thetaVec, \omega)
   & \define \sum_{k=1}^{\dLat} \omega_k
  \prod_{i=1}^{m} (\gNorm_i(x_i;\theta_i))_k \\
   & = \omega^\top
  \left(\gNorm_1(x_1;\theta_1) \odot \cdots \odot \gNorm_m(x_m;\theta_m)\right).
 \end{aligned}
 \tag{\ensuremath{\tilt^\hadamard}-Tilting}
 \label{eq:HadamardClipTiltingFunction}
\end{align}
In the notation of \eqref{eq:GeneralTiltingFunction}, corresponding model
measure is $\mu^{\hadamard}(\innerEmpty{};\thetaVec,\omega) \define
 \mu^{\tilt^\hadamard(\innerEmpty{};\thetaVec,\omega)}$.
We call this architecture Hadamard-CLIP (H-CLIP) because its fusion layer
 applies a learned linear functional to the \emph{Hadamard product} of all
 \emph{encoder outputs}.
This adds only a single learnable linear
layer $\omega \in \mathbb{R}^{\dLat}$ on top of the existing encoders,
yet suffices to guarantee universal approximation as the following
theorem shows. As before, this result can be generalized to
$\XX$ non-compact via tail-truncation.

\begin{theorem}[Hadamard-CLIP, Universal Approximation]
 \label{thm:UniversalApproximationHadamardClip}
 Consider measure $\mu$ satisfying \Assref{ass:ProblemSettingDataDistribution}.
 Assume that $\XX \Subset \RR^{d}$ is compact, and let $\MM^\nn$ be a parametric
 family satisfying
 \UAAssref{ass:DensityParametricFamilyInContinuousFunctions}.
 Then, for every $\epsilon > 0$, there exist a latent dimension $\dLat \in
  \NN$, $p\in \NN$, as well as parameters $\thetaVecHat \in \Theta \subseteq \RR^p$ with corresponding
 \emph{normalized} encoders $\gNorm_i(\innerEmpty{};\thetaHat_i) : \XX_i \to
  \Simplex^{\dLat - 1}$, $i=1,\ldots, m$, and a weight vector $\omegaHat \in
  \RR^{\dLat}$ defining \eqref{eq:HadamardClipTiltingFunction} such that the
 measure $\mu^{\hadamard}(\innerEmpty{};\thetaHat, \omegaHat)$ induced by
 \eqref{eq:GeneralTiltingFunction} satisfies
 \begin{align*}
  \KL{\mu}{\mu^{\hadamard}(\innerEmpty{};\thetaVecHat, \omegaHat)} < \epsilon.
 \end{align*}
\end{theorem}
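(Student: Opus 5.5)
The plan is to reduce the claim to \Cref{result:UniversalApproxJointNetwork}. By that theorem it suffices to show that the class
\begin{align*}
 \MM^{\hadamard} \define \Bigl\{ \xx \mapsto \omega^\top\bigl(\gNorm_1(x_1;\theta_1)\odot\cdots\odot\gNorm_m(x_m;\theta_m)\bigr) \Bigr\},
\end{align*}
taken as a union over $\dLat$, $p$, $\thetaVec$ and $\omega$ with normalized encoders, satisfies \UAAssref{ass:DensityParametricFamilyInContinuousFunctions} on the compact set $\XX=\prod_i\XX_i$. The proof of \Cref{result:UniversalApproxJointNetwork} uses the family only through the existence of a uniform $\eta$-approximant of the continuous representative $\tilt^\cont$. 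So once density of $\MM^{\hadamard}$ in $(C^0(\XX;\RR),\norm{\innerEmpty}_\infty)$ is established, the KL bound follows verbatim.

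\emph{Step 1: Stone--Weierstraß.} The algebra $\mathcal{A}$ of finite sums $\sum_{k=1}^{r} c_k\prod_{i=1}^m h_{k,i}(x_i)$ with $h_{k,i}\in C^0(\XX_i;\RR)$ is dense in $C^0(\XX;\RR)$. It contains the constants, is closed under products, and separates points, since coordinate functions of each $x_i$ lie in it. Hence, given $\tilt^\cont$ and $\eta>0$, we fix $r$ and continuous factors $h_{k,i}$ with $\norm{\tilt^\cont-\sum_k c_k\prod_i h_{k,i}}_\infty<\eta/3$.

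\emph{Step 2: approximating the factors by encoders.} We apply \UAAssref{ass:DensityParametricFamilyInContinuousFunctions} on each compact $\XX_i$ to obtain parametric $\tilde h_{k,i}$ with $\norm{h_{k,i}-\tilde h_{k,i}}_{\infty,\XX_i}$ small. Products of $m$ uniformly bounded factors are Lipschitz in each factor, with constant depending on $\max\norm{h_{k,i}}_\infty+1$ and $m$. So the total error stays below $2\eta/3$ once the factor error is of order $\eta/(3 r\, m\, \max_k|c_k|\,(\max\norm{h}_\infty+1)^{m-1})$.

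\emph{Step 3: normalization (main obstacle).} The unnormalized encoder $g_i=(\tilde h_{1,i},\dots,\tilde h_{r,i})$ must be replaced by a map into $\Simplex^{\dLat-1}$ while preserving $\omega^\top(\odot_i g_i)$ exactly. The difficulty is that the Hadamard product couples the coordinates across modalities, so the padding coordinate must not contribute to the sum.

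First rescale. Choose $s>0$ so that $\norm{s\,\tilde h_{\cdot,i}(x_i)}_2\le 1/2$ on $\XX_i$ for all $i$, and replace $c_k$ by $c_k s^{-m}$; this leaves the product exactly unchanged.

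Then pad with two extra coordinates. Set
\begin{align*}
 \gNorm_i(x_i)=\bigl(s\tilde h_{1,i},\dots,s\tilde h_{r,i},\,\alpha_i\rho_i(x_i),\,\beta_i\rho_i(x_i)\bigr), \qquad \rho_i=\bigl(1-s^2\norm{\tilde h_{\cdot,i}}_2^2\bigr)^{1/2},
\end{align*}
with $\alpha_i^2+\beta_i^2=1$ chosen so that each entry is at least $\sqrt{3}/2$ and hence continuous and nonvanishing. For example, take $\alpha_i=1$, $\beta_i=0$ for $i=1$ and the reverse for $i=2$. Every vector then has unit norm.

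Set $\omega=(c_1s^{-m},\dots,c_rs^{-m},0,0)$. The weights on the padding coordinates vanish, so these coordinates contribute nothing to the tilting whatever their Hadamard products are. Consequently no cross-modal cancellation argument is needed, and a single padding coordinate would already suffice because $\omega_{r+1}=0$; $\dLat=r+1$ works.

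\emph{Step 4: the encoder family.} The encoders $\gNorm_i$ are compositions of parametric maps with the continuous map $v\mapsto(sv,(1-s^2\norm{v}^2)^{1/2})$. We treat this fixed post-processing as part of the encoder architecture, exactly as normalization is treated in \Cref{thm:NormalizedTwoClipUniversalApproximation}.

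\emph{Conclusion.} This gives $\norm{\tilt^\cont-\tilt^\hadamard}_\infty<\eta$. Invoking \Cref{prop:FuncToMeasCompactSupport} with the same choice of $\epsilon^\prime$, $M(\epsilon^\prime)$ and $\eta$ as in the proof of \Cref{result:UniversalApproxJointNetwork} yields $\KL{\mu}{\mu^{\hadamard}}<\epsilon$.
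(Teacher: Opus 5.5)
Your proposal is correct and follows essentially the same route as the paper: Stone--Weierstrass density of sums of fully separable products, then approximation of each factor through \UAAssref{ass:DensityParametricFamilyInContinuousFunctions} with a telescoping product bound, then rescaling plus padding to normalize the encoders while $\omega$ absorbs the factor $s^{-m}$, and finally \Cref{result:UniversalApproxJointNetwork}. The only difference is cosmetic: the paper pads with $m$ Kronecker-delta coordinates so that the padded Hadamard entries vanish, and it also sets those weights to zero; you correctly note that the zero weights alone suffice, so a single padding coordinate with $\dLat=r+1$ works, and your muddled two-coordinate $\alpha_i,\beta_i$ description is therefore unnecessary.
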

\begin{proof}(Sketch, Full details in \Cref{thm:UniversalApproximationHadamardClip-PROOF})
 Using a Stone-Weierstraß argument, we show that the parametric family
 \begin{align*}
  \MM^{\hadamard} = \bigcup_{p \in \NN} \bigg\{
  \xx \mapsto & \sum_{k=1}^{\dLat} \omega_k \prod_{i=1}^{m} \gNorm_{i,k}(x_i;\theta_i)
  = \omega^\top (\gNorm_1(x_1; \theta_1) \odot \cdots \odot \gNorm_m(x_m; \theta_m))
  \; \big\vert\;                                                                       \\
              & \qquad\dLat\in \NN, \omega \in \RR^{\dLat},
  \{\theta_i\}_{i=1}^m \subset \Theta
  \subset \RR^p,
  \gNorm_i(\innerEmpty{};\theta_i) : \XX_i \to \Simplex^{\dLat - 1}
  \bigg\}
 \end{align*}
 is dense in $C^0(\XX; \RR)$. The result is then a consequence of
 \Cref{prop:FuncToMeasCompactSupport}.
\end{proof}
A key practical advantage of \eqref{eq:ClipTiltingFunctionGeneral} is
that \emph{retrieval} and \emph{classification at inference}
require only stored embeddings, not the raw data
\cite[Sec.~4]{baptistaMathematicalPerspectiveContrastive2025}.
While using a general joint network ensures universal approximation
by \Cref{result:UniversalApproxJointNetwork}, it does not permit
this efficiency.

The structure of \eqref{eq:HadamardClipTiltingFunction} preserves this
property. For a fixed modality $i_0 \in \setN{m}$, we define the
\emph{pre-computable embeddings} as a Hadamard product of the form
\begin{align*}
 v^{(n)} \define \bigodot_{i \neq i_0} \gNorm_i(x_i^{(n)};\theta_i)
 \in \RR^{\dLat}, \quad n = 1, \ldots, N.
\end{align*}
Given a query vector $x_{i_0} \in \XX_{i_0}$, retrieval of the best matching
database item then reduces to
\begin{align*}
 x_{-i_0}^*
 \in \argmax_{n=1, \ldots, N}\; \omega^\top \left(
 v^{(n)} \odot \gNorm_{i_0}(x_{i_0};\theta_{i_0})
 \right)
 = \argmax_{n=1, \ldots, N}\;
 \bigl(\omega \odot \gNorm_{i_0}(x_{i_0};\theta_{i_0})\bigr)^\top v^{(n)},
\end{align*}
where the equality follows from $\omega^\top(a \odot b) = (\omega \odot
 a)^\top b$. Inference therefore requires a single forward pass through
$g_{i_0}$, followed by $N$ dot products against the stored
$\dLat$-dimensional vectors $v^{(n)}$ which drastically reduces storage
requirements in large-scale applications.
\begin{remark}[Geometric Interpretation of \eqref{eq:HadamardClipTiltingFunction}]
 We provide geometric intuition for\\ \eqref{eq:HadamardClipTiltingFunction}
 and relate it to \eqref{eq:ClipTiltingFunctionGeneral}. Denote with
 $e_k \in \RR^{\dLat}$
 the k-th standard basis vector.

 \textbf{CLIP.} The CLIP \eqref{eq:ClipTiltingFunctionGeneral}
 measures \emph{pairwise} feature-by-feature agreement via
 \begin{align*}
  \IP{\gNorm_i}{\gNorm_j} = \sum_{k=1}^{\dLat}
  \IP{\gNorm_i}{e_k} \IP{\gNorm_j}{e_k},
 \end{align*}
 multiplying the projections of each modality onto axis $e_k$
 and summing across axes. The per-axis term is positive if and only if both modalities
 project onto $e_k$ with the same sign, so the score is large when the two
 modalities agree axis-by-axis on the latent dimensions
 $\{e_k\}_{k=1}^{\dLat}$.

 \textbf{H-CLIP.} The Hadamard
 \eqref{eq:HadamardClipTiltingFunction} replaces the pairwise product with
 the $m$-fold product
 \begin{align*}
  \omega^\top (\gNorm_1 \odot \cdots \odot \gNorm_m)
  = \sum_{k=1}^{\dLat} \omega_k \prod_{i=1}^m \IP{\gNorm_i}{e_k},
 \end{align*}
 the natural $m$-way generalization of axis-wise agreement. The geometric
 reading shifts in two ways. First, the unweighted product
 $\prod_{i=1}^{m}\IP{\gNorm_i}{e_k}$
 is large in magnitude only when
 \emph{every modality} has non-trivial loading on $e_k$, so each axis acts
 as a latent concept that contributes to the score only under joint
 participation across all $m$ modalities; a single small projection
 suppresses the term regardless of the remaining $m-1$ loadings. Second,
 the learnable weights $\omega\in \RR^{\dLat}$, which in the bimodal
 case are redundant and can be absorbed into the encoder rescaling, become
 essential for $m\geq 3$, since under the normalization
 $\gNorm_i \in \Simplex^{\dLat -1}$ the signs of $\omega_k$ (for odd $m$)
 and the  per-axis magnitudes can no longer be folded into the encoders without
 additional padding dimensions.
\end{remark}
\section{Conclusion and Future Work}
The perspective in this paper is to view contrastive learning as a
methodology for learning approximations of joint probability distributions
known to us only through data. We adopt a population-level perspective and
focus on expressivity, asking which joint distributions a given contrastive
parameterization can represent at the population level. We show that
bimodal two-tower CLIP is as expressive as a general joint network; that
the sum-of-pairs generalization used in practice provably fails to
represent arbitrary joints for three or more modalities while still
matching every pairwise conditional; and that this gap is closed by
Hadamard-CLIP, a minimal modification adding a single learned weight vector
that restores universal approximation for any number of modalities while
preserving fast, precomputable-embedding retrieval.

Our analysis leaves several questions open. First, our guarantees do not
provide sharp bounds on the latent dimension required for a given
approximation accuracy; \Cref{eq:NegativeExampleDimensionalityLatentSpace}
shows that compression below the original dimension is in general
impossible, but a tight characterization of the dimension-accuracy tradeoff
remains open. Second, we work entirely in the population limit, opening the
study of statistical learning to quantify the effect of empirical
approximation. Thirdly we isolate expressivity from optimization: whether
the contrastive objective actually recovers these expressive solutions, for
instance through a mean-field analysis of the training dynamics,
constitutes a challenging research question. Fourthly the use of
divergences other than KL, such as those based on the energy score
\citep{waghmareProperScoringRules2026}, could lead to improved
computational methodology, and to interesting theoretical questions
analogous to those studied, or proposed, in this paper.

\section*{Acknowledgement}
AMS is supported by a Department of Defense (DoD) Vannevar Bush Faculty
Fellowship (award N00014-22-1-2790). FW is supported by a Kortschak
Fellowship at The California Institute of Technology.
The authors would like to thank Zakaria Baba, Ricardo Baptista,
Georgia Gkioxari, Raphaela Kang, and Pietro Perona
for fruitful discussions.
\bibliography{./contrastive_learning_zotero_sync,
 ./bib_add_bib_entries_here_contrastive_learning}
\appendix
\section{Technical Lemmas and Proofs}
This appendix collects the technical results and full proofs underlying
 the main text. The first subsection proves that joint KL divergence controls
 the full and pairwise conditional losses. The second develops the general
 function-to-measure approximation tools used throughout our universal
 approximation arguments. The third establishes the bimodal CLIP results,
 including the normalized-encoder construction and the latent-dimension
 obstruction. The fourth proves the failure of joint universal approximation
 for the standard extension to $m>2$ and constructs the consistent pairwise
 representative used thereafter. The fifth proves universal approximation of
 all pairwise conditionals, and the final subsection proves universal
 approximation for Hadamard-CLIP.

\subsection{Contrastive Learning Population-Level Loss Functions}
This subsection provides the full proofs of
 \Cref{thm:JointVsFullConditionals,thm:JointVsPairwiseConditionals}, which
 justify the population-level loss comparisons used in
 \Cref{subsec:SettingMultimodalContrastiveLearning}.

\subsubsection{Proof of \Cref{thm:JointVsFullConditionals}}
\label{thm:JointVsFullConditionals-PROOF}
\begin{proof}
 First, we verify that all KL divergences and conditional distributions
 appearing in the two statements are well-defined.

 \emph{(1) Absolute Continuity Relations.} As highlighted before
  \eqref{eq:RadonNikodymChainRule}, $\tilt^\app$ is finite-valued, and hence
  $\Dd\mu^\app/\Dd\mu_\prodMeas>0$ on $\XX$. Since
  $\mu\ll\mu_\prodMeas$, this yields $\mu\ll\mu^\app$, so the
  Radon-Nikodym chain rule \eqref{eq:RadonNikodymChainRule} applies even when
  $\tilt$ takes values in $\RR\cup\{\pm\infty\}$. For the marginals, by
 Assumption \ref{ass1:MarginalDistribution} every marginal
 $\mu_i$ is absolutely continuous with respect to $\lambda\big\vert_{\XX_i}$ for
 every $i \in \setN{m}$. Since $\mu^\app \ll \mu_\prodMeas$, by marginalizing
 it holds that every marginal $\mu_i$ is absolutely continuous with respect to $\mu_i^\app$
 (interpreted as a measure on $\XX_i$). Analogously, $\mu_{-i} \ll \mu_{-i}^\app$
 and $\mu_{ij} \ll \mu_{ij}^\app$, since marginalization preserves absolute continuity.

 By the disintegration theorem, the conditional distributions $\mu_{x_i\vert
   x_{-i}}$ and $\mu^\app_{x_i\vert x_{-i}}$ exist $\mu_{-i}$-almost surely
 and $\mu_{-i}^\app$-almost surely, respectively. The absolute continuity
 $\mu_{x_i\vert x_{-i}} \ll \mu^\app_{x_i\vert x_{-i}}$ follows from $\mu
  \ll \mu^\app$ and the chain rule representation below \cite[Thm.
  2.5.3]{coverELEMENTSINFORMATIONTHEORY}. The same reasoning applies to all
 pairwise conditionals in the proof of
 \Cref{thm:JointVsPairwiseConditionals}.

 \emph{(2) Upper Bound Full Conditionals.} Fix an $i\in \setN{m}$. The KL divergence chain
 rule \cite[Thm. 2.5.3]{coverELEMENTSINFORMATIONTHEORY} applied to the
 decomposition $\mu = \mu_{-i} \otimes \mu_{x_i \vert x_{-i}}$ and for
 $\mu^\app = \mu^\app_{-i} \otimes \mu^\app_{x_i\vert x_{-i}}$ yields
 \begin{align}
  \KL{\mu}{\mu^\app} = \KL{\mu_{-i}}{\mu^\app_{-i}}
  + \Expu{x_{-i}\sim \mu_{-i}}{
   \KL{\mu_{x_i\vert x_{-i}}(\innerEmpty{}\vert x_{-i})}{
    \mu^\app_{x_i\vert x_{-i}}(\innerEmpty{}\vert x_{-i})
   }
  }
  \label{eq:JointGeCondFullCondDecomposition}
 \end{align}
 since $\KL{\mu_{-i}}{\mu^\app_{-i}} \geq 0$ we obtain
 \begin{align*}
  \Expu{x_{-i}\sim \mu_{-i}}{
   \KL{\mu_{x_i\vert x_{-i}}(\innerEmpty{}\vert x_{-i})}{
    \mu^\app_{x_i\vert x_{-i}}(\innerEmpty{}\vert x_{-i})
   }
  } \leq \KL{\mu}{\mu^\app}.
 \end{align*}
\end{proof}
\subsubsection{Proof of \Cref{thm:JointVsPairwiseConditionals}}
\label{thm:JointVsPairwiseConditionals-PROOF}
\begin{proof}
 \emph{(1) Absolute Continuity.} This step is the same as in
 the proof of \Cref{thm:JointVsFullConditionals}, cf.
 \Cref{thm:JointVsFullConditionals-PROOF}, applied to the pairwise
 conditionals.

 \emph{(2) Upper Bound Pairwise Conditionals.} For a pair
 $(i,j) \in \setN{m}^2$, $i\neq j$,
 we apply the KL divergence chain rule
 (see proof of Proof of \Cref{thm:JointVsFullConditionals}) on the decomposition
 $\xx = ((x_i, x_j), x_{\text{-}(i,j)})$ and obtain
 \begin{align*}
  \KL{\mu}{\mu^\app} = \KL{\mu_{ij}}{\mu^\app_{ij}}
  + \Expu{(x_{i}, x_j)\sim \mu_{ij}}{
   \KL{\mu_{x_{\text{-}(i,j)\vert x_{i},x_j}}(\innerEmpty{}\vert x_{i}, x_j)}{
    \mu^\app_{x_{\text{-}(i,j)\vert x_{i},x_j}}(\innerEmpty{}\vert x_{i}, x_j)}
  }
 \end{align*}
 and since the second term is non-negative, we get
 \begin{align*}
  \KL{\mu_{ij}}{\mu^\app_{ij}} \leq \KL{\mu}{\mu^\app}.
 \end{align*}

 We apply \eqref{eq:JointGeCondFullCondDecomposition} to the bivariate
 marginal measure $\mu_{ij}$ and $\mu^\app_{ij}$ on $\XX_i \times \XX_j$ and
 conditioning on the variable $x_j$ gives
 \begin{align*}
  \KL{\mu_{ij}}{\mu^\app_{ij}} = \KL{\mu_j}{\mu^\app_j}
  + \Expu{x_j \sim \mu_j}{\KL{\mu_{x_i\vert x_j}}{\mu^\app_{x_i\vert x_j}}},
 \end{align*}
 and since $ \KL{\mu_j}{\mu^\app_j} \geq 0$, we obtain
 \begin{align*}
  \Expu{x_j \sim \mu_j}{\KL{\mu_{x_i\vert x_j}}{\mu^\app_{x_i\vert x_j}}}
  \leq \KL{\mu_{ij}}{\mu^\app_{ij}} \leq \KL{\mu}{\mu^\app}.
 \end{align*}
\end{proof}
\subsection{General Universal Approximation Results}
This subsection develops the architecture-independent analytic tools
 that turn approximation of tilting functions into approximation of measures,
 supporting \Cref{result:UniversalApproxJointNetwork,%
 result:UniversalApproxJointNetworkTightness} and the architecture-specific
 results that invoke them. Although the main results concern probability
 measures, the auxiliary compact-support results must be stated for finite
 Borel measures: in the non-compact argument we restrict $\mu$ and
 $\mu_\prodMeas$ to a compact truncation set $\KK$, and these restricted
 measures generally have masses $\mu(\KK)$ and $\mu_\prodMeas(\KK)$ smaller
 than one. Keeping the more general formulation allows these masses, in
 particular $\mu(\KK)$, to be tracked explicitly through the approximation and
 normalization bounds.

\begin{lemma}[Continuous Representative]
 \label[lemma]{lem:LusinsTheoremGeneric}
 Let $\XX$ be a compact subset of a Euclidean space,
 $\mu$ a finite Borel measure on $\XX$ with $\mu(\XX) \in (0,\infty)$,
 and $r:\XX \to [0,\infty]$ be the density
 of $\mu$ with respect to a second
 finite Borel measure $\mu_\prodMeas$, satisfying
 $\tilt = \log r \in L^1(\XX;\mu)$. Then, for every $\epsilon>0$,
 there exists a continuous approximation
 $\tilt^\cont \in C^0(\XX; \RR)$ of the tilting
 function $\tilt$ such that
 \begin{align*}
  \norm{\log r - \tilt^\cont}_{L^1(\XX, \mu)} =
  \norm{\tilt - \tilt^\cont}_{L^1(\XX, \mu)}
  \leq \epsilon \quad
  \text{ and } \quad
  \int_{\XX} \exp(\tilt^\cont) \dd \mu_\prodMeas \leq \mu(\XX) + \epsilon.
 \end{align*}
\end{lemma}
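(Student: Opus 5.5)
The plan is a two-stage construction: a two-sided truncation of $\tilt$ at a level $M$, followed by Lusin's theorem applied to a single reference measure that controls both $\mu$ and $\mu_\prodMeas$. The truncation turns $\tilt$ into a bounded measurable function. Boundedness is what makes both target quantities stable under small-measure modifications.

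\emph{Step 1 (truncation).} For $M>0$, set $\tilt_M \define \max(\min(\tilt, M), -M)$. This is bounded and measurable, and it is defined even where $\tilt=\pm\infty$.
\begin{itemize}
 \item \emph{$L^1$ error.} Since $\tilt \in L^1(\XX;\mu)$, the function $\tilt$ is finite $\mu$-a.e. Moreover $\abs{\tilt - \tilt_M} \leq \abs{\tilt}$ and $\tilt_M \to \tilt$ pointwise $\mu$-a.e. Dominated convergence then gives $\norm{\tilt-\tilt_M}_{L^1(\XX,\mu)} \to 0$ as $M\to\infty$.
 \item \emph{Exponential integral.} The upper truncation only lowers the density, and the lower truncation adds at most $e^{-M}$ pointwise. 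This includes the set $\{r=0\}$, which is $\mu$-null but possibly of positive $\mu_\prodMeas$-mass. Hence $\exp(\tilt_M) \leq r + e^{-M}$, and
 \begin{align*}
  \int_\XX \exp(\tilt_M)\dd\mu_\prodMeas \leq \mu(\XX) + e^{-M}\mu_\prodMeas(\XX).
 \end{align*}
\end{itemize}
Fix $M$ so large that $\norm{\tilt-\tilt_M}_{L^1(\XX,\mu)} \leq \epsilon/2$ and $e^{-M}\mu_\prodMeas(\XX) \leq \epsilon/2$.

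\emph{Step 2 (Lusin).} Let $\rho \define \mu + \mu_\prodMeas$. This is a finite Borel measure on a compact metric space, hence Radon. Lusin's theorem, combined with Tietze extension and clipping to $[-M,M]$, gives for every $\delta>0$ a function $\tilt^\cont\in C^0(\XX;\RR)$ with two properties: $\abs{\tilt^\cont}\leq M$, and $\tilt^\cont = \tilt_M$ outside a Borel set $E$ with $\rho(E)<\delta$. Clipping preserves continuity and does not change the values off $E$, since there $\abs{\tilt_M}\le M$ already. Using the uniform bound $M$ on $E$, the two quantities are then controlled as follows:
\begin{align*}
 \norm{\tilt_M-\tilt^\cont}_{L^1(\XX,\mu)} &\leq 2M\,\mu(E) \leq 2M\delta,\\
 \int_\XX \abs{e^{\tilt^\cont}-e^{\tilt_M}}\dd\mu_\prodMeas &\leq 2e^{M}\delta.
\end{align*}
Choosing $\delta \leq \epsilon/(4\max(M,e^M))$ makes both contributions at most $\epsilon/2$.

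\emph{Conclusion.} Combining the two steps by the triangle inequality gives
\begin{align*}
 \norm{\tilt-\tilt^\cont}_{L^1(\XX,\mu)} \leq \epsilon,
 \qquad
 \int_\XX \exp(\tilt^\cont)\dd\mu_\prodMeas \leq \mu(\XX)+\epsilon,
\end{align*}
which is the claim of \Cref{lem:LusinsTheoremGeneric}.

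\emph{Main obstacle.} The delicate point is that $\tilt$ may equal $-\infty$ on a $\mu$-null set of positive $\mu_\prodMeas$-measure. It may also be unbounded above in a way that is invisible to $\mu$ but matters for $\mu_\prodMeas$. For this reason the continuous approximation must be made with respect to $\mu_\prodMeas$ as well, not only $\mu$. The two-sided truncation together with the reference measure $\rho=\mu+\mu_\prodMeas$ in Lusin's theorem is chosen precisely to handle this. The resulting constants degrade like $e^M$, but that is harmless since $M$ is fixed before $\delta$ is chosen.
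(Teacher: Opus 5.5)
Your proof is correct and follows essentially the same route as the paper: a two-sided truncation at level $M$, then Lusin's theorem with respect to $\mu+\mu_\prodMeas$, a bounded Tietze extension, and the $2M$ and $2e^{M}$ error estimates on the exceptional set. The only differences are minor: you bound the truncated normalization directly through the pointwise inequality $\exp(\tilt_M)\le r+e^{-M}$, which is slightly cleaner than the paper's three-region convergence argument, and you clip explicitly after Tietze rather than relying on a bound-preserving extension.
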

\begin{proof}
 Let $\epsilon > 0$  be fix, by assumption $\log r$ is $\mu$-measurable.

 \high{(1) Double-sided Truncation.} %
 We define a constant $M> 0$ (to be specified later), and consider the truncated function
 \begin{align*}
  s_{ M}(\xx) \define \min(\max(\log r(\xx), -M), M) \in
  [-M, M], \quad \xx\in\XX
 \end{align*}
 by point-wise clipping the logarithm of the density,
 so $s_{M}$ is uniformly bounded.

 \high{Integral Term.} Since this is a concatenation of continuous
 functions with a $\mu_\prodMeas$-measurable function,
 $s_{ M}$ is $\mu_\prodMeas$-measurable and thus also $\mu$-measurable,
 since $\mu \ll \mu_\prodMeas$.
 By construction of $s_{ M}$ we can bound
 \begin{align*}
   & \norm{\log(r) - s_{M}}_{L^1(\XX, \mu)}
  = \int_{\XX} \abs{\log r(\xx) - s_{M}(\xx)} \; \mu(\Dd \xx)        \\
   & = \int_{\{\log r < -M\}}  \abs{\log r(\xx) + M} \; \mu(\Dd \xx)
  + \int_{\{\log r \ge M\}}  \abs{\log r(\xx) - M} \; \mu(\Dd \xx)   \\
   & \leq 2\int_{\{\log r < -M\}}  \abs{\log r(\xx)} \; \mu(\Dd \xx)
  + 2\int_{\{\log r \ge M\}}  \abs{\log r(\xx)} \; \mu(\Dd \xx)
 \end{align*}
 Since by assumption, $\log(r) \in L^1(\XX, \mu)$, the dominated convergence
 theorem yields $\norm{\log r - s_{ M}}_{L^1(\XX, \mu)} \to 0$ as $M\to
  \infty$ and thus, we can pick $M = M(\epsilon)> 0$ large enough such that
 in total $\norm{\log(r) - s_{M}}_{L^1(\XX, \mu)} < \frac{\epsilon}{2}$.

 \high{Normalization Constant.} We define $r_M \define \exp(s_M)
  \in [\exp(-M), \exp(M)]$ and we split the normalization term into:
 \begin{align*}
  \int_{\XX} r_m \dd \mu_\prodMeas =
   & \underbrace{\int_{\{\exp(-M)\leq r \leq \exp(M)\}} r \dd
  \mu_\prodMeas}_{\uparrow \mu(\XX) \text{ as } M\to \infty}                     \\
   & + \underbrace{\exp(-M)\cdot\mu_\prodMeas(\{r
  < \exp(-M)\})}_{\leq \exp(-M)\mu_\prodMeas(\XX) \to 0 \text{ as } M\to \infty} \\
   & + \underbrace{\exp(M)\cdot\mu_\prodMeas(\{r
   > \exp(M)\})}_{\leq \int_{\{r > \exp(M)\}} r \dd \mu_\prodMeas \to 0 \text{ as }
  M\to \infty},
 \end{align*}
 where the first term convergence by monotone convergence to $\mu(\XX)$
 as $M\to \infty$. The second term converges to zero as $M\to \infty$
 since $\mu_\prodMeas(\XX)$ is finite and $\exp(-M) \to 0$ and the third
 term converges to zero as $M\to \infty$ by dominated convergence.

 Importantly, we can pick $M = M(\epsilon) > 0$ large enough such that
 $\norm{\log(r) - s_{M}}_{L^1(\XX, \mu)} < \frac{\epsilon}{2}$ and
 $\abs{\int_{\XX} r_m \dd \mu_\prodMeas - \mu(\XX)} \leq
  \frac{\epsilon}{2}$.

 \high{(2) Lusin's Theorem on an Auxilary Measure.}
 Set $\eta\define \min\{\frac{\eta}{4M}, \frac{\epsilon}{4\exp(M)}\} > 0$.
 We define the measure
 \begin{align*}
  \nu \define \mu + \mu_{\prodMeas},
 \end{align*}
 which is by assumption a finite Borel measure on $\XX$ and $\nu \ll \mu_\prodMeas$.
 Since $s_{M}$ is
 $\mu_\prodMeas$-measurable and by absolute continuity
 $\mu$-measurable, it is also $\nu$-measurable. By Lusin's theorem
 \cite[Thm. 1.14]{evansMeasureTheoryFine2015} applied to $\nu$ and $s_{M}$
 there exists a compact $\KK = \KK(\eta) \Subset \XX$ such that
 $s_{M}\big\vert_{\KK}$ is continuous
 and $\nu(\XX\setminus\KK) < \eta$. By the construction of
 $\nu$ we obtain
 \begin{align*}
  \mu(\XX\setminus\KK), \mu_\prodMeas(\XX\setminus\KK)
   & \leq \nu(\XX\setminus\KK)  < \eta.
 \end{align*}
 By Tietze's extension theorem
 \cite[Cor. 2]{mcshaneExtensionRangeFunctions1934}, we can extend
 $s_{ M}\big\vert_{\KK}$ to a continuous function $\tilt^\cont \in C^0(\XX;\RR)$
 on the entire domain $\XX$, with $\tilt^\cont$ satisfying
 \begin{align*}
  \tilt^\cont(\xx)       & = s_{ M}(\xx) \quad \forall \xx \in \KK \\
  \abs{\tilt^\cont(\xx)} & \leq M \quad \forall \xx \in \XX,
 \end{align*}
 where the global bounds hold because Tietze's theorem preserves the supremum of
 $s_{M}$ which takes values in $[-M, M]$ on $\KK$ by construction.

 \high{(3) $L^1$-error of Lusin Approximation.} Since $\tilt^\cont = s_{M}$ on $\KK$ we
 get
 \begin{align*}
  \norm{s_{ M} - \tilt^\cont}_{L^1(\XX, \mu)} =
  \int_{\XX\setminus\KK} \abs{s_M(\xx) - \tilt^\cont(\xx)} \; \mu(\Dd \xx)
  \leq 2M \cdot \mu(\XX\setminus \KK)
  \leq \frac{\epsilon}{2}.
 \end{align*}
 We combine the two inequalities from above using the triangle inequality to obtain
 \begin{align*}
  \norm{\log r - \tilt^\cont}_{L^1(\XX, \mu)}
  \leq \norm{\log(r) - s_{ M}}_{L^1(\XX, \mu)}
  + \norm{s_{ M} - \tilt^\cont}_{L^1(\XX, \mu)} \leq \epsilon,
 \end{align*}
 and for the normalization constant, we have
 \begin{align*}
  \abs{\int_{\XX} \exp(\tilt^\cont) \dd \mu_\prodMeas - \int_{\XX} r_M \dd \mu_\prodMeas}
  \leq 2 \exp(M) \cdot \mu_\prodMeas(\XX\setminus\KK)
  \leq 2 \exp(M) \eta < \frac{\epsilon}{2}.
 \end{align*}
 Combined with the second part of step (1), we obtain
 \begin{align*}
  \int_{\XX} \exp(\tilt^\cont) \dd \mu_\prodMeas \leq \mu(\XX) + \epsilon,
 \end{align*}
 concluding the claim.
\end{proof}

We formulate the following meta-theorem for finite Borel measures, which we
will later on apply directly to $\XX$ in the compact support setting, and
in the non-compact setting to the compact subset $\KK \Subset \XX$ given by
tightness of $\mu$ and $\mu_\prodMeas$. 

\begin{proposition}[Function-to-Measure, Compact Support]
 \label[proposition]{prop:FuncToMeasCompactSupport}
 Let $\XX$ be a compact subset of a Euclidean space,
 $\mu_\prodMeas$ a finite Borel measure and
 $r: \XX \to [0,\infty]$ the density of $\mu$ with respect to $\mu_\prodMeas$,
 satisfying $\tilt = \log r \in L^1(\XX, \mu)$.

 Let $\tilt^\cont$ be the continuous function from
 \Cref{lem:LusinsTheoremGeneric} with parameter $\epsilon >0$. For any
 measurable function
 $\tilt^\app : \XX \to \RR$ satisfying
 \begin{align*}
  \norm{\tilt^\cont - \tilt^\app}_{\infty, \XX} \leq \delta
 \end{align*}
 for some $\delta > 0$, the measure $\mu^\app$ constructed via
 \eqref{eq:GeneralTiltingFunction} satisfies
 \begin{align*}
  \KL{\mu}{\mu^\app} \leq \epsilon + \delta \cdot \mu(\XX) + \delta
  + \log(\mu(\XX)) + \frac{\epsilon}{\mu(\XX)}.
 \end{align*}
\end{proposition}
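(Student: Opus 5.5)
Via \eqref{eq:RadonNikodymChainRule}, the KL divergence splits into a tilting-mismatch term plus a log-normalization term:
\begin{align*}
 \KL{\mu}{\mu^\app}
 = \int_{\XX} \log\frac{\Dd\mu}{\Dd\mu^\app}\dd\mu
 = \int_{\XX} \bigl(\tilt - \tilt^\app\bigr)\dd\mu + \mu(\XX)\log Z^\app ,
\end{align*}
where $Z^\app = \int_{\XX}\exp(\tilt^\app)\dd\mu_\prodMeas$. Here $\mu$ is allowed to be a finite (not necessarily probability) measure with $\Dd\mu/\Dd\mu_\prodMeas = r = \exp(\tilt)$, and the factor $\mu(\XX)$ multiplies $\log Z^\app$ because $\log Z^\app$ is constant. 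I will bound the two pieces separately, in both cases inserting the continuous representative $\tilt^\cont$ from \Cref{lem:LusinsTheoremGeneric}.

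For the first term, the triangle inequality gives
\begin{align*}
 \int_{\XX}(\tilt - \tilt^\app)\dd\mu
 \leq \norm{\tilt-\tilt^\cont}_{L^1(\XX,\mu)} + \norm{\tilt^\cont-\tilt^\app}_{\infty,\XX}\,\mu(\XX)
 \leq \epsilon + \delta\,\mu(\XX).
\end{align*}
The first summand uses the $L^1$ bound of \Cref{lem:LusinsTheoremGeneric}; the second uses the sup-norm hypothesis. Integrability holds because $\tilt\in L^1(\XX,\mu)$ and $\tilt^\app$ is bounded: it is within $\delta$ of the continuous $\tilt^\cont$ on a compact set.

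For the normalization term, use $\exp(\tilt^\app)\leq \exp(\delta)\exp(\tilt^\cont)$ together with the second conclusion of \Cref{lem:LusinsTheoremGeneric}:
\begin{align*}
 \log Z^\app \leq \delta + \log\Bigl(\int_{\XX}\exp(\tilt^\cont)\dd\mu_\prodMeas\Bigr)
 \leq \delta + \log\bigl(\mu(\XX)+\epsilon\bigr)
 \leq \delta + \log\mu(\XX) + \frac{\epsilon}{\mu(\XX)}.
\end{align*}
The last step is $\log(1+t)\leq t$ with $t=\epsilon/\mu(\XX)$.

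The main obstacle is reconciling this with the exact form of the stated bound. Multiplying the normalization estimate by $\mu(\XX)$ gives $\delta\,\mu(\XX)+\mu(\XX)\log\mu(\XX)+\epsilon$, not $\delta+\log\mu(\XX)+\epsilon/\mu(\XX)$. I see two ways to handle this:
\begin{itemize}
 \item Interpret the divergence for finite measures as the normalized one, i.e.\ with the log-normalization entering unweighted, or relative to $\mu/\mu(\XX)$.
 \item Note that in every application $\mu(\XX)\leq 1$, so the two expressions can be compared.
\end{itemize}
I would first check which convention reproduces the stated constants; the difference amounts to bookkeeping of the constant $\log Z^\app$. When $\mu$ is a probability measure, $\mu(\XX)=1$ and $\log\mu(\XX)=0$, and the bound reduces to $2\epsilon+2\delta$. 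This is exactly what \Cref{result:UniversalApproxJointNetwork} uses.

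The two estimates are then added. The only remaining measure-theoretic care is that $\mu\ll\mu^\app$ holds because $\tilt^\app$ is finite-valued, and that the integrals are finite; both follow from $\tilt\in L^1(\XX,\mu)$ and the boundedness of $\tilt^\app$.
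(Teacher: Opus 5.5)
Your decomposition and both estimates coincide with the paper's proof. That means the split via \eqref{eq:RadonNikodymChainRule}, the triangle inequality with the sup-norm hypothesis for the tilting term, and, for the normalization, $\exp(\tilt^\app)\le e^{\delta}\exp(\tilt^\cont)$ together with the second conclusion of \Cref{lem:LusinsTheoremGeneric} and $\log(1+t)\le t$.

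The discrepancy you flagged is real, and it is an error in the paper, not in your argument. The paper writes $\KL{\mu}{\mu^\app}=\int_\XX(\tilt-\tilt^\app)\dd\mu+\log Z^\app$, dropping the factor $\mu(\XX)$ that you correctly keep. That omission is the sole source of the terms $\delta+\log\mu(\XX)+\epsilon/\mu(\XX)$.

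For $\mu(\XX)\neq1$ the proposition is in fact false. Take $\XX=[0,1]$, $\mu_\prodMeas$ Lebesgue and $\mu=\tfrac12\mu_\prodMeas$. Then $\tilt^\cont\equiv-\log2$ satisfies both conclusions of \Cref{lem:LusinsTheoremGeneric}, and $\tilt^\app\define\tilt^\cont$ gives $\mu^\app=\mu_\prodMeas$. Hence
\begin{align*}
 \int_\XX\log\frac{\Dd\mu}{\Dd\mu^\app}\dd\mu=-\tfrac12\log 2
 \;>\; -\log 2+3\epsilon+\tfrac32\delta
\end{align*}
for small $\epsilon,\delta$, where the right-hand side is the claimed bound.

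Consequently neither of your reconciliations goes through.
\begin{itemize}
 \item In (b), for $\mu(\XX)\le1$ one has $\mu(\XX)\log\mu(\XX)\ge\log\mu(\XX)$. So the bound you derived does not imply the stated one; the comparison runs the wrong way.
 \item In (a), the normalized divergence $\KL{\mu/\mu(\XX)}{\mu^\app}$ is nonnegative (it equals $0$ in the example), while the stated right-hand side tends to $\log\mu(\XX)<0$. The ``unweighted'' expression is just the paper's faulty identity, not a divergence.
\end{itemize}
Instead of leaving this open, conclude with what your argument actually proves:
\begin{align*}
 \int_\XX\log\frac{\Dd\mu}{\Dd\mu^\app}\dd\mu\le 2\epsilon+2\delta\,\mu(\XX)+\mu(\XX)\log\mu(\XX),
 \qquad
 \KL{\mu/\mu(\XX)}{\mu^\app}\le\frac{2\epsilon}{\mu(\XX)}+2\delta .
\end{align*}
The second bound follows from the same two estimates.

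Both reduce to the stated $2\epsilon+2\delta$ when $\mu(\XX)=1$. That is the only case in which the paper uses this proposition: \Cref{result:UniversalApproxJointNetwork}, the probability measure $\mu^\star$ in the proof of \Cref{thm:UniversalApproxClipPairwiseConditionals}, and \Cref{thm:UniversalApproximationHadamardClip}. The sub-probability restrictions to $\KK$ only enter via \Cref{prop:FuncToMeasTightness}, which is stated for probability $\mu$. So the downstream theorems survive, but the finite-measure version of the statement needs one of the corrected right-hand sides above.
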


\begin{proof}
 We expand the KL divergence using the \eqref{eq:RadonNikodymChainRule} to obtain
 \begin{align*}
  \KL{\mu}{\mu^\app} = \int_{\XX} (\tilt - \tilt^\app) \dd \mu + \log Z^\app.
 \end{align*}
 We can bound the first term by using the triangle inequality
 \begin{align*}
  \int_{\XX} (\tilt - \tilt^\app) \dd \mu \leq \int_{\XX} \abs{\tilt - \tilt^\cont} \dd \mu
  + \int_{\XX} \abs{\tilt^\cont - \tilt^\app} \dd \mu,
 \end{align*}
 where the first term is upper bounded by $\epsilon$ by \Cref{lem:LusinsTheoremGeneric}
 and the second term satisfies
 \begin{align*}
  \int_{\XX} \abs{\tilt^\cont - \tilt^\app} \dd \mu
  \leq \norm{\tilt^\cont - \tilt^\app}_{\infty, \XX} \cdot \mu(\XX)
  \le \delta \cdot \mu(\XX),
 \end{align*}
 bounding the integral part overall by
 \begin{align*}
  \int_{\XX} (\tilt - \tilt^\app) \dd \mu  \leq \epsilon + \delta \cdot \mu(\XX).
 \end{align*}

 Next, we bound the normalization constant $Z^\app$. Since we have
 $\norm{\tilt^\cont - \tilt^\app}_{\infty, \XX} \leq \delta$, we obtain the
 point-wise bound $\tilt^\app(\xx) \leq \tilt^\cont(\xx) + \delta$ for all
 $\xx \in \XX$:
 \begin{align*}
  Z^\app = \int_{\XX} \exp(\tilt^\app) \dd\mu_\prodMeas
  \leq \exp(\delta) \cdot \int_{\XX} \exp(\tilt^\cont) \dd \mu_\prodMeas
  \leq \exp(\delta) \cdot (\mu(\XX) + \epsilon).
 \end{align*}
 Taking the logarithm results in
 \begin{align*}
  \log Z^\app \leq \delta + \log(\mu(\XX) +\epsilon)
  \leq \delta + \log(\mu(\XX)) + \frac{\epsilon}{\mu(\XX)},
 \end{align*}
 which yields the desired result.
\end{proof}

 For measures $\mu$ and $\mu_\prodMeas$ and a tilting function $\tilt$
 as below, fix $\epsilon,\eta>0$ and define the finite Borel measure
 $\nu(A)\define\int_A\abs{\tilt}\dd\mu$. Tightness of $\mu$,
 $\mu_\prodMeas$, and $\nu$ yields a compact $\KK\Subset\XX$ satisfying the
 three tail bounds below. Applying \Cref{lem:LusinsTheoremGeneric} to the
 restricted measures on $\KK$ then yields a continuous function
 $\tilt^\cont$ satisfying the remaining two bounds. Thus, the auxiliary
 objects assumed in the following proposition always exist.

\begin{proposition}[Function-to-Measure, Tightness]
 \label[proposition]{prop:FuncToMeasTightness}
 Let $\XX \subset \RR^d$, not necessarily compact,
 $r: \XX \to [0, \infty]$ be
 the density of the probability measure $\mu$ with respect to
 the probability measure $\mu_\prodMeas$,
 satisfying $\tilt = \log r \in L^1(\XX, \mu)$.

 Let $\epsilon,\eta>0$, and suppose that a compact
  $\KK\Subset\XX$ and $\tilt^\cont\in C^0(\KK;\RR)$ satisfy
 \begin{align*}
   \mu(\XX\setminus\KK) < \eta, \quad \mu_\prodMeas(\XX\setminus\KK) < \eta,
   \quad \mathrm{and} \quad
   \int_{\XX\setminus\KK}\abs{\tilt}\dd\mu < \eta,
  \end{align*}
  as well as the bounds
  \begin{align*}
   \norm{\tilt-\tilt^\cont}_{L^1(\KK,\mu)}\leq\epsilon
   \quad\mathrm{and}\quad
   \int_{\KK}\exp(\tilt^\cont)\dd\mu_\prodMeas
   \leq\mu(\KK)+\epsilon.
  \end{align*}

 Suppose $\tilt^\app : \XX \to \RR$ is any measurable function satisfying
 \begin{align*}
  \norm{\tilt^\cont - \tilt^\app}_{\infty, \KK} \leq \delta
  \quad \text{ and } \quad
  \abs{\tilt^\app(\xx)} \leq B, \; \forall \xx \in \XX\setminus\KK
 \end{align*}
 for constants $\delta > 0$ and $B \geq 0$, then the measure
 constructed via \eqref{eq:GeneralTiltingFunction} fulfills
 \begin{align*}
  \KL{\mu}{\mu^\app} \leq 2\epsilon + 2\delta + (1+B +\exp(B)) \eta.
 \end{align*}
\end{proposition}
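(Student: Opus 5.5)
The proof expands the KL divergence using \eqref{eq:RadonNikodymChainRule}, exactly as in the proof of \Cref{prop:FuncToMeasCompactSupport}:
\begin{align*}
 \KL{\mu}{\mu^\app} = \int_{\XX} (\tilt - \tilt^\app) \dd \mu + \log Z^\app .
\end{align*}
This is legitimate because $\tilt\in L^1(\XX,\mu)$ is finite $\mu$-a.e. and $\tilt^\app$ is finite-valued. The plan is to split both the integral term and the normalization constant $Z^\app$ into a contribution from the compact set $\KK$ and one from the tail $\XX\setminus\KK$. On $\KK$ we use the hypotheses involving $\epsilon$ and $\delta$; on the tail we use the hypotheses involving $\eta$ and $B$. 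Throughout, $\mu(\KK)\le 1$ because $\mu$ is a probability measure.

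\emph{Integral term.} On $\KK$ we insert $\tilt^\cont$ and apply the triangle inequality:
\begin{align*}
 \int_{\KK}(\tilt-\tilt^\app)\dd\mu \le \norm{\tilt-\tilt^\cont}_{L^1(\KK,\mu)} + \delta\,\mu(\KK) \le \epsilon+\delta .
\end{align*}
On the tail we bound $\abs{\tilt-\tilt^\app}\le\abs{\tilt}+B$ and use the two tail hypotheses on $\mu$:
\begin{align*}
 \int_{\XX\setminus\KK}(\tilt-\tilt^\app)\dd\mu \le \int_{\XX\setminus\KK}\abs{\tilt}\dd\mu + B\,\mu(\XX\setminus\KK) \le (1+B)\eta .
\end{align*}

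\emph{Normalization constant.} On $\KK$ we have the pointwise bound $\tilt^\app\le\tilt^\cont+\delta$. Combined with the assumed bound on $\int_\KK\exp(\tilt^\cont)\dd\mu_\prodMeas$, this gives
\begin{align*}
 \int_{\KK}\exp(\tilt^\app)\dd\mu_\prodMeas \le e^{\delta}\bigl(\mu(\KK)+\epsilon\bigr) \le e^{\delta}(1+\epsilon).
\end{align*}
On the tail, $\exp(\tilt^\app)\le e^{B}$ and $\mu_\prodMeas(\XX\setminus\KK)<\eta$ together give a contribution of at most $e^{B}\eta$. Therefore
\begin{align*}
 \log Z^\app \le \delta + \log\bigl(1+\epsilon+e^{-\delta}e^{B}\eta\bigr) \le \delta+\epsilon+e^{B}\eta ,
\end{align*}
where the last step uses $\log(1+t)\le t$ and $e^{-\delta}\le 1$.

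Summing the three estimates yields $2\epsilon+2\delta+(1+B+e^{B})\eta$, which is the claimed bound. The only delicate point is the normalization constant. Unlike in \Cref{prop:FuncToMeasCompactSupport}, we must not divide by $\mu(\KK)$, which could be small. The fix is to bound $\mu(\KK)\le 1$ before taking the logarithm and then linearize with $\log(1+t)\le t$. This avoids any dependence on $\mu(\KK)$ in the denominator and produces exactly the stated constants. Everything else consists of routine triangle inequalities.
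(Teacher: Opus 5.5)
Your proposal is correct and follows the paper's proof essentially step by step. You use the same splitting of both the integral term and the normalization constant $Z^\app$ into contributions from $\KK$ and $\XX\setminus\KK$, the same bound $\mu(\KK)\le 1$ before taking the logarithm, and the same linearization $\log(1+t)\le t$, which yields exactly the stated constants.
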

\begin{proof}
 We split the proof into two parts: controlling the integral term and
 controlling the normalization constant.

 \high{(1) Integral term.} By applying \eqref{eq:RadonNikodymChainRule} we obtain
 \begin{align*}
  \KL{\mu}{\mu^\app} = \int_{\XX} \tilt - \tilt^\app \dd \mu + \log Z^\app.
 \end{align*}
 We bound the first integral by splitting the domain into $\KK$
 and $\XX\setminus\KK$ to get
 \begin{align*}
  \int_{\XX} \tilt - \tilt^\app \dd \mu =
  \int_{\KK} \tilt - \tilt^\app \dd \mu
  + \int_{\XX\setminus\KK} \tilt - \tilt^\app \dd \mu,
 \end{align*}
 where we can bound the first integral by
 \begin{align*}
  \int_{\KK} \tilt - \tilt^\app \dd \mu
   & = \int_{\KK} \tilt - \tilt^\cont \dd \mu
  + \int_{\KK} \tilt^\cont - \tilt^\app \dd \mu                 \\
   & \leq \norm{\tilt - \tilt^\cont}_{L^1(\KK, \mu)}
  + \delta \cdot \mu(\KK)
  \leq \epsilon + \delta \cdot \mu(\KK)                         \\
   & \leq \epsilon + \delta \cdot \mu(\XX) = \epsilon + \delta,
 \end{align*}
 and we can bound the difference on the tail by
 \begin{align*}
  \int_{\XX\setminus\KK} \tilt - \tilt^\app \dd \mu
   & \leq
  \int_{\XX\setminus\KK} \abs{\tilt - \tilt^\app} \dd \mu
  \leq
  \int_{\XX\setminus\KK} \abs{\tilt} \dd \mu + B \cdot \mu(\XX\setminus\KK) \\
   & \leq \eta + B\eta = (1+B) \eta
 \end{align*}
 where we controlled the first term via $\eta$ and the absolute continuity
 of the integral, and the second term is bounded by
 $\mu(\XX\setminus\KK) < \eta$. In total, the integral term is bounded by
 $\epsilon + \delta + (1+B) \eta$.

 \high{(2) Normalization Constant.} We split the domain again into
 $\KK$ and $\XX\setminus\KK$, use the point-wise bound inside $\KK$
 and the global bound $B$ on the tail:
 \begin{align*}
  \int_{\KK} \exp(\tilt^\app) \dd \mu_\prodMeas
  \leq \exp(\delta) \int_{\KK} \exp(\tilt^\cont) \dd \mu_\prodMeas
  \leq \exp(\delta) \cdot \left(\mu(\KK) + \epsilon\right)
  \leq \exp(\delta) \cdot \left(1 + \epsilon\right).
 \end{align*}
 Outside of $\KK$ we obtain
 \begin{align*}
  \int_{\XX\setminus \KK} \exp(\tilt^\app)\dd\mu_\prodMeas
  \leq \exp(B) \cdot \mu_\prodMeas(\XX\setminus\KK) \leq \exp(B) \eta.
 \end{align*}
 Adding the two gives an upper bound of $Z^\prime$ and taking the logarithm
 results in
 \begin{align*}
  \log Z^\app \leq \delta + \log(1 + \epsilon + \exp(B - \delta)\eta)
  \leq \delta + \epsilon + \exp(B) \eta,
 \end{align*}
 concluding the proof.
\end{proof}

\subsection{CLIP: Contrastive Learning $m=2$}
This subsection supplies the approximation results specific to
 bimodal CLIP: it first treats unnormalized encoders, then transfers the result
 to normalized encoders with a learnable temperature, and finally proves the
 latent-dimension obstruction stated in
 \Cref{eq:NegativeExampleDimensionalityLatentSpace}.

\begin{proposition}[Stone-Weierstraß]
 \label[proposition]{thm:StoneWeierstrass}
 Let $\XX \subset \RR^d$ be a compact set
 and denote with $C(\XX; \RR)$ the Banach-algebra of real-valued
 continuous functions on $\XX$ with the topology induced by the
 supremum norm. Let $A \subset C(\XX; \RR)$ be a sub-algebra
 that separates points and contains
 a non-zero constant function, then $A$ is dense in $C(\XX; \RR)$.
\end{proposition}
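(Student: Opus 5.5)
We would follow the classical lattice-theoretic proof of the real Stone--Weierstraß theorem. Let $\bar A$ denote the closure of $A$ in $(C(\XX;\RR),\norm{\innerEmpty{}}_\infty)$. Since addition, scalar multiplication and multiplication are continuous in the sup norm, $\bar A$ is again a subalgebra. Because $A$ contains a non-zero constant $c$ and is closed under scalar multiplication, it contains every constant function. It therefore suffices to show $\bar A = C(\XX;\RR)$.

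\emph{Step 1: $\bar A$ is closed under $\abs{\innerEmpty{}}$, hence under $\max$ and $\min$.} Fix $f\in\bar A$ and set $R\define\norm{f}_\infty$. We need polynomials $p_n$ with $p_n(0)=0$ converging uniformly to $\abs{t}$ on $[-R,R]$. One route is the Weierstraß approximation theorem, subtracting the constant term afterwards. A more self-contained route uses the explicit recursion $q_{n+1}(s)=q_n(s)+\tfrac12\bigl(s-q_n(s)^2\bigr)$ with $q_0=0$, which converges monotonically and uniformly to $\sqrt{s}$ on $[0,1]$ by Dini's theorem; we then take $\abs{t}=R\sqrt{t^2/R^2}$. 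The condition $p_n(0)=0$ ensures $p_n(f)$ is built from powers of $f$ and lies in $\bar A$ (constants are available anyway), and $p_n(f)\to\abs{f}$ uniformly. Hence $\abs{f}\in\bar A$, and so $\max(f,g)=\tfrac12(f+g+\abs{f-g})$ and $\min(f,g)=\tfrac12(f+g-\abs{f-g})$ lie in $\bar A$. This step is the main technical obstacle; we would carry out the square-root recursion and verify monotonicity via $0\le q_n\le\sqrt{s}$.

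\emph{Step 2: two-point interpolation.} Fix $x\neq y$ in $\XX$ and $a,b\in\RR$. By separation there is $h\in A$ with $h(x)\neq h(y)$. Then
\begin{align*}
 g \define a + (b-a)\,\frac{h - h(x)}{h(y)-h(x)} \in A
\end{align*}
satisfies $g(x)=a$ and $g(y)=b$. For $x=y$ we use a constant function.

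\emph{Step 3: compactness argument.} Fix $F\in C(\XX;\RR)$ and $\epsilon>0$. For each fixed $x$ and each $y$, Step 2 gives $g_{x,y}\in\bar A$ agreeing with $F$ at $x$ and $y$. The open sets $U_y\define\{z: g_{x,y}(z)<F(z)+\epsilon\}$ contain $y$ and cover $\XX$, so finitely many, say $U_{y_1},\dots,U_{y_n}$, suffice. The function $g_x\define\min_k g_{x,y_k}\in\bar A$ then satisfies $g_x<F+\epsilon$ on $\XX$ and $g_x(x)=F(x)$. Next, the sets $V_x\define\{z: g_x(z)>F(z)-\epsilon\}$ are open, contain $x$, and cover $\XX$; extracting a finite subcover $V_{x_1},\dots,V_{x_\ell}$ and setting $G\define\max_j g_{x_j}\in\bar A$ gives $\norm{G-F}_\infty<\epsilon$. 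Since $\bar A$ is closed, $F\in\bar A$, which proves that $A$ is dense.
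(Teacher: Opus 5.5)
Your proof is correct and complete. It is Stone's lattice-theoretic argument, and each step holds. The closure $\bar A$ is a subalgebra containing all constants. The recursion satisfies $\sqrt{s}-q_{n+1}=(\sqrt{s}-q_n)\bigl(1-\tfrac12(\sqrt{s}+q_n)\bigr)\geq 0$, so $q_n\uparrow\sqrt{s}$ and Dini's theorem applies. The two-point interpolant uses that $A$ contains the constants. The two compactness passes give $F-\epsilon<G<F+\epsilon$.

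The paper gives no argument of its own; it only cites Werner's \emph{Funktionalanalysis} (Thm.~VIII.4.7). As far as I recall, that result sits in the Krein--Milman section and is proved by de Branges' duality argument. Suppose $\bar A\neq C(\XX;\RR)$. Hahn--Banach and the Riesz representation then give a nonzero measure annihilating $A$. An extreme point $\mu$ of the weak$^*$-compact unit ball of $A^\perp$ must satisfy $g\mu\in\RR\mu$ for every $g\in A$, because $A$ is an algebra. Separation then forces $\mu$ to be a point mass, which contradicts $1\in A$.

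The two routes trade off differently. De Branges' route is short once Riesz--Markov, Banach--Alaoglu and Krein--Milman are available, but it is non-constructive. Yours needs only Dini's theorem and compactness, is essentially constructive (the approximant is a finite $\max/\min$ of two-point interpolants taken from $A$), and works verbatim on any compact Hausdorff space. The paper only uses the density conclusion, in \Cref{lem:StoneWeierstrass2DcaseLinDecomposition} and in the Hadamard-CLIP proof, so either route suffices.
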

\begin{proof}
 See \cite[Thm. VIII.4.7]{wernerFunktionalanalysis2011}.
\end{proof}
We can adapt this result to our particular case. We start with a general
separable composition of continuous functions, which we will then
extend to the case of separable encoders.
We want to highlight that the assumption of $m=2$ modalities is crucial for this
theoretical direction, since with $m\geq 3$,
e.g. for \eqref{eq:ClipTiltingFunctionGeneral},
the closedness under multiplication
in the following construction breaks and we cannot apply the Stone-Weierstraß
theorem.
\begin{lemma}[Stone-Weierstraß, Bimodal]
 \label[lemma]{lem:StoneWeierstrass2DcaseLinDecomposition}
 For $d_1, d_2 \in \NN$ let
 $\XX_1 \subset \RR^{d_1}, \XX_2 \subset \RR^{d_2}$ be compact sets, and let
 $\tilt : \XX_1 \times \XX_2 \to \RR$ be continuous. Then for
 every $\epsilon > 0$, there exist a latent dimension $\dLat \in \NN$,
 continuous functions $g_{1,1}, \dots, g_{1,\dLat} \in C^0(\XX_1;\RR)$,
 and continuous functions $g_{2,1}, \dots, g_{2,\dLat} \in C^0(\XX_2;\RR)$,
 such that
 \begin{align*}
  \norm{\tilt - \sum_{k=1}^\dLat g_{1,k} \cdot g_{2,k}}_\infty < \epsilon.
 \end{align*}
\end{lemma}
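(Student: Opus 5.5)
The plan is to apply the Stone-Weierstraß theorem (\Cref{thm:StoneWeierstrass}) on the compact product $\XX = \XX_1\times\XX_2 \subset \RR^{d_1+d_2}$ to the set of finite separable sums
\begin{align*}
 A \define \Bigl\{ \xx \mapsto \sum_{k=1}^{\dLat} g_{1,k}(x_1)\, g_{2,k}(x_2) \;\Big\vert\; \dLat\in\NN,\; g_{1,k}\in C^0(\XX_1;\RR),\; g_{2,k}\in C^0(\XX_2;\RR) \Bigr\}.
\end{align*}
Once $A$ is shown to be a point-separating subalgebra of $C^0(\XX;\RR)$ containing a nonzero constant, density in the supremum norm gives, for every continuous $\tilt$ and every $\epsilon>0$, an element of $A$ within $\epsilon$ of $\tilt$. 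That element has exactly the form required in the statement.

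The first step is to check that $A$ is a subalgebra. Each element is continuous on $\XX$, because $(x_1,x_2)\mapsto g_{1,k}(x_1)g_{2,k}(x_2)$ is a product of continuous functions composed with the coordinate projections. For linear combinations, concatenate the two sums (increasing $\dLat$) and absorb scalars into the $g_{1,k}$. For products, expand
\begin{align*}
 \Bigl(\sum_{k} g_{1,k} g_{2,k}\Bigr)\Bigl(\sum_{l} h_{1,l} h_{2,l}\Bigr) = \sum_{k,l} (g_{1,k}h_{1,l})(g_{2,k}h_{2,l}).
\end{align*}
This is again a separable sum, with latent dimension equal to the product of the two latent dimensions, since $g_{1,k}h_{1,l}\in C^0(\XX_1;\RR)$ and $g_{2,k}h_{2,l}\in C^0(\XX_2;\RR)$. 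Here $m=2$ is essential, as the paper stresses: each factor depends on a single modality, so products stay inside the class. For the constant, take $\dLat=1$ and $g_{1,1}\equiv g_{2,1}\equiv 1$.

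The second step is point separation. Take $(x_1,x_2)\neq(y_1,y_2)$. If $x_1\neq y_1$, some coordinate of $x_1$ and $y_1$ differs, so the corresponding coordinate projection $g_{1,1}(z)=z_\ell$ together with $g_{2,1}\equiv1$ separates the two points. The case $x_2\neq y_2$ is symmetric.

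Every step is routine. The only place that needs care is closure under multiplication, where the latent dimension must be allowed to grow; this is permitted because $A$ takes the union over all $\dLat\in\NN$. No other step is expected to be a real obstacle.
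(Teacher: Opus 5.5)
Your proposal is correct and follows essentially the same route as the paper: both define the class of finite separable sums $\sum_k g_{1,k}\,g_{2,k}$, check closure under products by expanding into a double sum, verify constants and point separation, and invoke Stone--Weierstrass on the compact set $\XX_1\times\XX_2$. Your explicit coordinate projection for point separation is just a concrete instance of the separating function whose existence the paper asserts.
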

\begin{proof}
 We define the set
 \begin{align*}
  A^\twoclip \define \left\{ \sum_{k=1}^\dLat g_{1,k} \cdot g_{2,k}
  \,\big\vert\, \dLat \in \mathbb{N}, \; g_{1,k} \in C^0(\XX_1;\RR),\, g_{2,k}
  \in C^0(\XX_2;\RR) \text{ for } k\in \setN{\dLat}
  \right\}
 \end{align*}
 which is a subalgebra of $C^0(\XX_1 \times \XX_2; \RR)$: it is closed under
 addition, scalar multiplication, and multiplication, and contains the
 constant functions (take $g_{1,i}(x) \equiv 1$, $g_{2,i}(y) \equiv 1$).

 To apply the Stone-Weierstraß theorem, we must verify that $A$ separates
 points of $ \XX_1 \times \XX_2 $. Let $ (x_1, y_1), (x_2, y_2) \in \XX_1
  \times \XX_2 $ with $ (x_1, y_1) \neq (x_2, y_2) $. Then either $ x_1 \neq
  x_2 $ or $ y_1 \neq y_2 $. In the first case, there exists $ g_{1,1} \in
  C^0(\XX_1;\RR) $ such that $g_{1,1}(x_1) \neq g_{1,1}(x_2) $; then the
 function $ g_{1,1}(x) \cdot 1 \in A $ distinguishes the two points. In the
 second case, a similar argument with a function $g_{2,2} \in
  C^0(\XX_2;\RR)$ works.

 Hence $A$ is a subalgebra of $ C^0(\XX_1 \times \XX_2;\RR) $ that separates
 points and contains the constants. By the Stone-Weierstraß theorem, $ A $
 is dense in $ C^0(\XX_1 \times \XX_2;\RR) $ in the supremum norm. Thus, for
 any $\epsilon > 0 $, there exists a function of the form $ \sum_{i=1}^\dLat
  g_{1,k} \cdot g_{2,k} \in A $ such that
 \begin{align*}
  \norm*{\tilt - \sum_{k=1}^\dLat g_{1,k}\cdot g_{2,k}}_\infty < \epsilon,
 \end{align*}
 concluding the proof.
\end{proof}

\subsubsection{Proof of \Cref{lem:DensityClipUnnormalizedEncoders}}
\label{lem:DensityClipUnnormalizedEncoders-PROOF}
\begin{proof}
 We verify that \eqref{eq:ClipTiltingFunctionBiModal} with
 \emph{unnormalized} encoders itself satisfies
 \UAAssref{ass:DensityParametricFamilyInContinuousFunctions}.
 Let $\epsilon > 0$ fixed and without loss of
 generality we assume $\tau = 1$ to be fixed, otherwise we approximate $\tau
  \cdot \tilt$, which is also a continuous function. By
 \Cref{lem:StoneWeierstrass2DcaseLinDecomposition} we can pick $\dLat\in
  \NN$ and continuous functions $g_{i,1}, \ldots, g_{i,\dLat}:\XX_i \to \RR$,
 $i=1,2$, such that
 \begin{align*}
  \norm{\tilt - \sum_{k=1}^{\dLat} g_{1,k}\cdot g_{2,k}}_{\infty} < \frac{\epsilon}{2}.
 \end{align*}
 Define $G \define \max_{i=1,2} \max_{k=1,\ldots, \dLat} \norm{g_{i,k}}_{\infty} < \infty$
 since every $g_{i,k}$ is continuous and $\XX$ is compact.
 With a target accuracy of $\delta \define \min\left\{ \frac{1}{\dLat \cdot (2G + 1)}
  \cdot \frac{\epsilon}{2}, 1\right\}$,
 we invoke our parametric family using \UAAssref{ass:DensityParametricFamilyInContinuousFunctions}
 to obtain parameters $(\thetaHat_{i,k})_{i,k} \in \Theta$ and corresponding
 encoders with
 \begin{align*}
  \norm{g_{i,k} - g_{i,k}(\innerEmpty{};\thetaHat_{i,k})}_{\infty} < \delta,
 \end{align*}
 for $i=1,2$ and $k=1,\ldots, \dLat$. We write
 $\gHat_{i,k} \define g_{i,k}(\innerEmpty{};\thetaHat_{i,k})$ for the sake of brevity
 and obtain via the triangle inequality
 \begin{align*}
  \norm{g_{1,k}g_{2,k} - \gHat_{1,k}\gHat_{2,k}}_{\infty}
   & \leq
  \norm{g_{1,k}g_{2,k} - \gHat_{1,k}g_{2,k}}_{\infty}
  + \norm{\gHat_{1,k}g_{2,k} - \gHat_{1,k}\gHat_{2,k}}_{\infty} \\
   & \leq G \cdot \delta + (G + \delta) \cdot \delta
  \leq (2G+1) \cdot \delta
 \end{align*}
 since $\delta \leq 1$ by construction. For $i=1,2$, we define the (unnormalized) encoders
 \begin{align*}
  g_i(\innerEmpty{};\theta_i) : \XX_i \to \RR^{\dLat};
  \xx \mapsto (g_{i, 1}(\xx, \thetaHat_{i,1}), \ldots g_{i,\dLat}(\xx;\thetaHat_{i,\dLat}))^\top
 \end{align*}
 with parameters $\thetaHat_i \define (\thetaHat_{i,1}, \ldots \thetaHat_{i,\dLat})^\top$.
 By the triangular inequality, we obtain
 \begin{align*}
   & \norm{\tilt - \IP{g_1(\innerEmpty{};\thetaHat_1)}{g_2(\innerEmpty{};\thetaHat_2)}}_{\infty} \\
   & \leq \norm{\tilt - \sum_{k=1}^{\dLat} g_{1,k}\cdot g_{2,k}}_{\infty}
  +\norm{\sum_{k=1}^{\dLat} g_{1,k}\cdot g_{2,k}
  - \IP{g_1(\innerEmpty{};\thetaHat_1)}{g_2(\innerEmpty{};\thetaHat_2)}}_{\infty}                \\
   & \leq \frac{\epsilon}{2} + \dLat \cdot (2G+1) \cdot \delta
  \leq \frac{\epsilon}{2} + \frac{\epsilon}{2} = \epsilon,
 \end{align*}
 verifying \UAAssref{ass:DensityParametricFamilyInContinuousFunctions}.
 Invoking
 \Cref{result:UniversalApproxJointNetwork} and
 \Cref{result:UniversalApproxJointNetworkTightness}, respectively,
 concludes the proof.
\end{proof}

\subsubsection{Proof of \Cref{thm:NormalizedTwoClipUniversalApproximation}}
\label{thm:NormalizedTwoClipUniversalApproximation-PROOF}
\begin{proof}
 We verify that \eqref{eq:ClipTiltingFunctionBiModal} with
 \emph{normalized} encoders itself satisfies
 \UAAssref{ass:DensityParametricFamilyInContinuousFunctions}.
 Let $\epsilon > 0$, then by invoking \Cref{lem:DensityClipUnnormalizedEncoders}
 with $\tau_0 = 1$, we obtain $\dLat \in \NN$, parameters $\thetaVecHat \in \Theta$ and corresponding
 unnormalized encoders $g_i = (g_{i,1}, \ldots, g_{i,\dLat}):\XX_i \to \RR^{\dLat}$,
 $i=1,2$ achieving
 \begin{align*}
  \norm{\tilt - \IP{g_1(\innerEmpty{};\thetaHat_1)}{g_2(\innerEmpty{};\thetaHat_2)}}
  \leq \epsilon.
 \end{align*}
 We define the constant $G$ and a scaling factor $\alpha$ as
 \begin{align*}
  G \define \max_{i=1,2} \max_{k=1,\ldots, \dLat} \norm{g_{i,k}}_{\infty}, \quad
  \alpha \define \frac{1}{2 \sqrt{\dLat} G + 1} > 0,
 \end{align*}
 which guarantee
 $\alpha^2 \sum_{k=1}^{\dLat} g_{i,k}^2(x_i) \leq \alpha^2 \dLat G^2 \leq \frac{1}{4}$
 for both encoders $i=1,2$. We construct \emph{normalized} encoders
 $\gNorm_i :\XX_i \to \Simplex^{\dLat + 1}$ by embedding into orthogonal padding
 dimensions
 \begin{align*}
  \gNorm_1(x_1;\thetaHat_1) & \define (\alpha^2 g_{1,1}(x_1), \ldots, \alpha^2 g_{1,\dLat}(x_1),
  \sqrt{1- \alpha^2 \sum_{k=1}^{\dLat} g_{1,k}(x_1)}, 0)                                         \\
  \gNorm_2(x_2;\thetaHat_2) & \define (\alpha^2 g_{2,1}(x_2), \ldots, \alpha^2 g_{2,\dLat}(x_2),
  0, \sqrt{1- \alpha^2 \sum_{k=1}^{\dLat} g_{2,k}(x_2)}),
 \end{align*}
 which are normalized in the $\norm{\innerEmpty{}}_{2}$-sense by construction.
 Note that the choice of $\alpha$ ensures the well-definedness of the square root
 in the last term. Their inner product is given by
 \begin{align*}
  \IP{\gNorm(x_1;\thetaHat_1)}{\gNorm(x_2;\thetaHat_2)}
  = \alpha^2 \cdot \IP{g_1(x_1;\thetaHat_1)}{g_2(x_2;\thetaHat_2)},
 \end{align*}
 since the padding dimensions are orthogonal and thus do not contribute to the inner
 product. By setting the (learnable) temperature $\tau \define \alpha^2 \tau_0 > 0$,
 we perfectly recover
 \begin{align*}
  \IP{\gNorm(x_1;\thetaHat_1)}{\gNorm(x_2;\thetaHat_2)} / \tau
  = \IP{g_1(x_1;\thetaHat_1)}{g_2(x_2;\thetaHat_2)},
 \end{align*}
 and thus exactly mimic the unnormalized approximation guarantee.
 As in the proof of \Cref{thm:UniversalApproxClipPairwiseConditionals},
 the claim now follows by invoking
 \Cref{result:UniversalApproxJointNetwork} or
 \Cref{result:UniversalApproxJointNetworkTightness}, respectively.
\end{proof}
\subsubsection{Proof of \Cref{eq:NegativeExampleDimensionalityLatentSpace}}
\label{eq:NegativeExampleDimensionalityLatentSpace-PROOF}
\begin{proof}
 We first derive an obstruction to low-rank approximation of the tilting
 function and turn it into a KL lower bound that is uniform over all
 temperatures $\tau>0$.

 Let $\XX_1=\XX_2=[-1,1]$ and $\mu_1=\mu_2=\Uniform([-1,1])$. Thus
 $\mu_\prodMeas=\mu_1\otimes\mu_2$. Consider the similarity score
 \begin{align*}
  \tilt_0(x_1,x_2)
  \define \cos\bigl(\pi(x_1-x_2)\bigr)
  +\cos\bigl(2\pi(x_1-x_2)\bigr).
 \end{align*}
 The normalization constant
 \begin{align*}
  Z \define \int_{-1}^1 \exp\bigl(\tilt_0(x_1,x_2)\bigr)\,\mu_2(\Dd x_2)
 \end{align*}
 is independent of $x_1$ by periodicity. Consequently, the probability
 measure $\mu$ defined by
 \begin{align*}
  \frac{\Dd\mu}{\Dd\mu_\prodMeas}(x_1,x_2)
  =r(x_1,x_2)\define \frac{1}{Z}\exp\bigl(\tilt_0(x_1,x_2)\bigr)
 \end{align*}
 has marginals $\mu_1$ and $\mu_2$. In the notation of
 \eqref{eq:GeneralTiltingFunction}, its tilting function is
 $\tilt=\log r=\tilt_0-\log Z$. Since $\abs{\tilt_0}\leq2$, we have
 \begin{align*}
  \exp(-2)\leq Z\leq\exp(2)
  \qquad\text{and}\qquad
  \exp(-4)\leq r\leq\exp(4).
 \end{align*}
 Expanding the trigonometric differences gives the dictionary representation
 \begin{align}
  \begin{aligned}
   \tilt_0(x_1,x_2)
   ={} & \cos(\pi x_1)\cos(\pi x_2)
   +\sin(\pi x_1)\sin(\pi x_2)         \\
       & +\cos(2\pi x_1)\cos(2\pi x_2)
   +\sin(2\pi x_1)\sin(2\pi x_2).
  \end{aligned}
  \tag{\ensuremath{\tilt_0}-Tilting}
  \label{eq:FourierDictionaryLatentDimension}
 \end{align}
 The four functions appearing in each variable are mutually orthogonal in
 $L^2([-1,1],\Uniform([-1,1]))$ and have squared norm $1/2$.
 Since $\tilt_0\in L^2([-1,1]^2,\mu_\prodMeas)$, it defines, by
 \cite[Thm. VI.6.3]{wernerFunktionalanalysis2011}, a compact
 Hilbert--Schmidt operator
 $K_{\tilt_0}:L^2([-1,1],\mu_2)\to L^2([-1,1],\mu_1)$ via
 \begin{align*}
  (K_{\tilt_0}f)(x_1)
  \define\int_{-1}^1\tilt_0(x_1,x_2)f(x_2)\,\mu_2(\Dd x_2).
 \end{align*}
 Under this isometric identification
  \cite[Thm. VI.6.3]{wernerFunktionalanalysis2011},
 if $K_h$ denotes the integral operator with kernel $h$, then
 \begin{align*}
  \norm{K_h}_{\mathrm{HS}}
  =\norm{h}_{L^2(\mu_\prodMeas)},
 \end{align*}
 and the finite separable rank of $h$ agrees with the operator rank of $K_h$.
 The representation in \eqref{eq:FourierDictionaryLatentDimension} shows
 that $K_{\tilt_0}$ is self-adjoint and positive semidefinite, has rank
 four, and has four non-zero singular values equal to $\nfr{1}{2}$.
 Let
 \begin{align*}
  \mathcal E\define\Span\{
  \cos(\pi\innerEmpty),\sin(\pi\innerEmpty),
  \cos(2\pi\innerEmpty),\sin(2\pi\innerEmpty)\},
 \end{align*}
 and let $P_{\mathcal E}$ denote the orthogonal projection onto $\mathcal E$.
 Then $K_{\tilt_0}=P_{\mathcal E}K_{\tilt_0}P_{\mathcal E}$. For every
 Hilbert--Schmidt operator $T$ with $\rank(T)\leq d$,
 \begin{align*}
  \rank(P_{\mathcal E}TP_{\mathcal E}) & \leq d,                                                           \\
  \norm{K_{\tilt_0}-P_{\mathcal E}TP_{\mathcal E}}_{\mathrm{HS}}
                                       & =\norm{P_{\mathcal E}(K_{\tilt_0}-T)P_{\mathcal E}}_{\mathrm{HS}}
  \leq\norm{K_{\tilt_0}-T}_{\mathrm{HS}}.
 \end{align*}
 Hence the optimization may be restricted to operators on $\mathcal E$.
 With respect to the orthonormalized Fourier basis,
 $K_{\tilt_0}|_{\mathcal E}$ corresponds to the matrix
 $\nfr{1}{2}I_4$, and the Hilbert--Schmidt norm becomes the Frobenius norm.
 The finite-dimensional Eckart--Young--Mirsky theorem
  \citep{eckartApproximationOneMatrix1936} therefore gives
 \begin{align*}
  \inf_{\rank(h)\leq d}\norm{\tilt_0-h}_{L^2(\mu_\prodMeas)}^2
   & =\inf_{\rank(T)\leq d}\norm{K_{\tilt_0}-T}_{\mathrm{HS}}^2  \\
   & =\inf_{\rank(B)\leq d}\norm{\nfr{1}{2}I_4-B}_{\mathrm{F}}^2
  =\frac{4-d}{4}>0,
 \end{align*}
 for $d \in \{1,2,3\}$.
 For a normalized CLIP model with $\dLat=d$, the score factorizes as
 \begin{align*}
  \tilt^{\twoclip}(x_1,x_2)
  =\frac{1}{\tau}\sum_{k=1}^d
  \gNorm_{1,k}(x_1)\gNorm_{2,k}(x_2),
 \end{align*}
 and hence has rank at most $d$. This rank bound is unaffected by the value
 of $\tau$: multiplication by $1/\tau$ changes the singular values but not
 their number.
 Next, we translate this rank obstruction into a measure-level bound via a
 localized version of the same finite-dimensional argument. Define
 \begin{align*}
  z_k\define-\frac45+\frac{2k}{5},\quad k=0,\ldots,4,
  \qquad
  \epsilon\define\frac18,
  \qquad
  \rho\define\frac{1}{48\pi},
 \end{align*}
 and let $I_k\define[z_k-\rho,z_k+\rho]$.
 These constants are chosen so that the intervals $I_k$ are pairwise
  disjoint and $2\epsilon+12\pi\rho=\frac12<\frac58$,
  which ensures $4(2\epsilon+12\pi\rho)<5/2$, as required for the
  Frobenius-norm contradiction below. At the grid points,
 \begin{align*}
  \log r(z_i,z_j)-\log r(z_i,z_0)
  =\frac52\indicator[i=j],
  \qquad i,j\in\{1,\ldots,4\}.
 \end{align*}
 The subtraction of the reference column $z_0$ cancels the normalizing
 constant in $\log r$. Now let $d\in\{1,2,3\}$ be the latent dimension,
 and take arbitrary $x_i\in I_i$, $y_j\in I_j$, $i,j\in\{1,\ldots,4\}$,
 and $y_0\in I_0$. The matrix
 \begin{align*}
  A\define\left[
   \log r^{\twoclip}(x_i,y_j)-
   \log r^{\twoclip}(x_i,y_0)
   \right]_{i,j=1}^4
 \end{align*}
 has entries
 \begin{align*}
  A_{ij}=\frac{1}{\tau}\sum_{k=1}^d
  \gNorm_{1,k}(x_i)
  \left(\gNorm_{2,k}(y_j)-\gNorm_{2,k}(y_0)\right).
 \end{align*}
 Thus $A$ has rank at most $d$, independently of $\tau$. On the grid, the
 target matrix is $M_0=(5/2)I_4$. Since all four singular values equal $5/2$,
 the Eckart--Young--Mirsky
  \citep{eckartApproximationOneMatrix1936}
 theorem gives
 \begin{align*}
  \inf_{\rank(B)\leq d}\norm{M_0-B}_{\mathrm{F}}
  =\frac52\sqrt{4-d}\geq\frac52.
 \end{align*}
 On the other hand, if
 $\abs{\log r^{\twoclip}-\log r}<\epsilon$ at all the points involved,
 then inserting the corresponding target log-densities into each column
 difference gives
 \begin{align*}
  \abs{A_{ij}-(M_0)_{ij}}
  \leq{} &
  \abs{\log r^{\twoclip}(x_i,y_j)-\log r(x_i,y_j)}           \\
         & +\abs{\log r^{\twoclip}(x_i,y_0)-\log r(x_i,y_0)} \\
         & +\abs{\log r(x_i,y_j)-\log r(z_i,z_j)}            \\
         & +\abs{\log r(x_i,y_0)-\log r(z_i,z_0)}.
 \end{align*}
 The first two terms are smaller than $\epsilon$. For the last two terms,
 the constant $-\log Z$ cancels and
 \begin{align*}
  \partial_{x_1}\tilt_0(x_1,x_2)
   & =-\pi\sin\bigl(\pi(x_1-x_2)\bigr)
  -2\pi\sin\bigl(2\pi(x_1-x_2)\bigr),  \\
  \partial_{x_2}\tilt_0(x_1,x_2)
   & =-\partial_{x_1}\tilt_0(x_1,x_2).
 \end{align*}
 Thus both partial derivatives have absolute value at most $3\pi$.
 Hence $\log r$ is $3\pi$-Lipschitz with respect to the $\ell^1$ distance,
 and, since all arguments lie within $\rho$ of their grid points,
 \begin{align*}
  \abs{\log r(x_i,y_j)-\log r(z_i,z_j)}
   & \leq3\pi\bigl(\abs{x_i-z_i}+\abs{y_j-z_j}\bigr)
  \leq6\pi\rho,                                      \\
  \abs{\log r(x_i,y_0)-\log r(z_i,z_0)}
   & \leq3\pi\bigl(\abs{x_i-z_i}+\abs{y_0-z_0}\bigr)
  \leq6\pi\rho.
 \end{align*}
 Combining the four estimates yields, entrywise,
 \begin{align*}
  \abs{A_{ij}-(M_0)_{ij}}
  <2\epsilon+12\pi\rho
  =\frac12.
 \end{align*}
 Consequently, $\norm{A-M_0}_{\mathrm{F}}<4/2 =2$, contradicting the
 preceding Eckart--Young--Mirsky bound. Thus, for every choice of the points, at
 least one of the log-density errors involved is at least $\epsilon$. To
 turn this pointwise alternative into a measure bound, let
 \begin{align*}
  D\define\left\{(x_1,x_2) \in [-1,1]^2 \; \vert \;
  \abs{\log r^{\twoclip}(x_1,x_2)-\log r(x_1,x_2)}
  \geq\epsilon\right\}.
 \end{align*}
 Equivalently, for every such choice of the points, at least one
 has to belong to $D$, thus
 \begin{align*}
  1
  \leq\sum_{i,j=1}^4\indicator_D(x_i,y_j)
  +\sum_{i=1}^4\indicator_D(x_i,y_0).
 \end{align*}
 Now draw all these variables independently, with $x_i$ distributed as
 $\mu_1(\innerEmpty{}\mid I_i)$, $y_j$ as
 $\mu_2(\innerEmpty{}\mid I_j)$, and $y_0$ as
 $\mu_2(\innerEmpty{}\mid I_0)$. Each interval has marginal measure
 $\rho$, and therefore, for example,
 \begin{align*}
  \Expu{}{\indicator_D(x_i,y_j)}
   & =\int_{I_i}\int_{I_j}\indicator_D(x,y)
  \frac{\mu_2(\Dd y)}{\mu_2(I_j)}
  \frac{\mu_1(\Dd x)}{\mu_1(I_i)}
  =\frac{1}{\rho^2}
  \int_{I_i}\int_{I_j}\indicator_D(x,y)\,
  \mu_2(\Dd y)\mu_1(\Dd x)
  =\frac{\mu_\prodMeas(D\cap(I_i\times I_j))}{\rho^2}.
 \end{align*}
 Taking expectations in the pointwise indicator inequality gives
 \begin{align*}
  1 \leq\frac{1}{\rho^2}
  \left(
  \sum_{i,j=1}^4\mu_\prodMeas(D\cap(I_i\times I_j))
  +\sum_{i=1}^4\mu_\prodMeas(D\cap(I_i\times I_0))
  \right)
  \leq\frac{\mu_\prodMeas(D)}{\rho^2},
 \end{align*}
 where the last inequality uses the disjointness of the rectangles. Hence
 $\mu_\prodMeas(D)\geq\rho^2$.

 On $D$, the density ratio lies outside $(\exp(-\epsilon),\exp(\epsilon))$,
 and hence $\abs{r^{\twoclip}/r-1}\geq1-\exp(-\epsilon)$. Since
 $r\geq\exp(-4)$, it follows that
 \begin{align*}
  \norm{r-r^{\twoclip}}_{L^1(\mu_\prodMeas)}
   & \geq\exp(-4)(1-\exp(-\epsilon))\rho^2.
 \end{align*}
 Pinsker's inequality now gives the following lower bound, uniformly over all
 temperatures, parameter dimensions, and encoder parameterizations:
 \begin{align*}
  \KL{\mu}{\mu^{\twoclip}(\innerEmpty{};\thetaVec)}
  \geq\kappa
  \define\frac{\exp(-8)}{2}
  \left(1-\exp\left(-\frac18\right)\right)^2
  \left(\frac{1}{48\pi}\right)^4>0,
  \quad \dLat=d\in\{1,2,3\}.
 \end{align*}

 The latent-dimension threshold is sharp for the normalized CLIP family.
 Define identical normalized encoders $\gNorm_i:[-1,1]\to\Simplex^3$,
 $i=1,2$, by
 \begin{align*}
  \gNorm_i(x_i)\define\frac{1}{\sqrt{2}}
  \bigl(\cos(\pi x_i),\sin(\pi x_i),
  \cos(2\pi x_i),\sin(2\pi x_i)\bigr)^\top.
 \end{align*}
 Then $\norm{\gNorm_i(x_i)}_2=1$ and, with $\tau=1/2$,
 \begin{align*}
  \frac{\IP{\gNorm_1(x_1)}{\gNorm_2(x_2)}}{\tau}
  =\tilt_0(x_1,x_2).
 \end{align*}
 Thus a four-dimensional Fourier dictionary represents $\mu$ exactly,
 even though each modality is one-dimensional.
\end{proof}

\subsection{Contrastive Learning $m>2$}
This subsection proves
 \Cref{thm:noUniversalApproximationClip}, showing that the standard pairwise
 extension of CLIP cannot universally approximate the joint. 
 We then develop a consistent exponential-family representative that preserves all pairwise
 conditionals for use in the next subsection.

\subsubsection{Proof of \Cref{thm:noUniversalApproximationClip}}
\label{thm:noUniversalApproximationClip-PROOF}

\begin{proof}
 Consider $\XX=[-1,1]^m$ equipped with the reference measure
 $\mu_\prodMeas=\Uniform([-1,1])^{\otimes m}$ and define the measure $\mu$
 through
 \begin{align*}
  \frac{\Dd \mu}{\Dd \mu_\prodMeas}(x) = r(x)
  \define \frac{1}{2}(x_1 x_2 x_3 + 2).
 \end{align*}
 Since $r \in \left[\frac{1}{2},\frac{3}{2}\right]$ on $\XX$ and
 $\Expu{\mu_\prodMeas}{x_1 x_2 x_3}=0$, $r$ is a valid probability density.
 Moreover, the tilting function $\tilt=\log r$ is continuous and bounded.

 \emph{Step 1: Pairwise marginals coincide with the product.}
 Integrating $r$ with respect to any one of the first three variables
 eliminates the three-way interaction. It follows that all univariate and
 pairwise marginals of $\mu$ coincide with those of $\mu_\prodMeas$, verifying
 that the reference measure is indeed the product of the marginals of $\mu$.
 Consequently, $\mu_{j3}=\mu_j\otimes\mu_3$ for every $j\neq3$, and
 $\mu_{-3}=\mu_{\prodMeas,-3}
  = \bigotimes_{j\neq3}\mu_j$.

 \emph{Step 2: Conditional density of the pairwise model.}
 Fix arbitrary $\dLat$, $p\in\NN$, a parameter space
  $\Theta\subseteq\RR^p$ and associated normalized encoder parameterization,
  $\thetaVec\in\Theta$, and $\tau>0$.
 For $j\neq3$, define the pairwise tilting terms involving modality $3$ by
 \begin{align*}
  f_{j3}(x_j,x_3;\thetaVec)
  \define
  \IP{\gNorm_j(x_j;\theta_j)}{\gNorm_3(x_3;\theta_3)}/\tau,
 \end{align*}
 and let
 \begin{align*}
  G(x_{-3},x_3;\thetaVec)
  \define \sum_{j\neq3} f_{j3}(x_j,x_3;\thetaVec).
 \end{align*}
 All terms in \eqref{eq:ClipTiltingFunctionGeneral} that do not involve
 $x_3$ cancel upon conditioning on $x_{-3}$. Hence
 \begin{align*}
  \frac{\Dd\mu^{\mclip}_{x_3\vert x_{-3}}(\innerEmpty{};\thetaVec)}{\Dd\mu_3}
  (x_3\vert x_{-3})
  = \frac{\exp(G(x_{-3},x_3;\thetaVec))}{Z(x_{-3};\thetaVec)},
 \end{align*}
 where
 \begin{align*}
  Z(x_{-3};\thetaVec)
  \define \int_{-1}^1 \exp(G(x_{-3},t;\thetaVec))\;\mu_3(\Dd t).
 \end{align*}

 \emph{Step 3: Reduction to a KL against the uniform measure.}
 By the pairwise marginal identities from Step~1, we obtain
 \begin{align*}
  \Expu{\xx\sim\mu}{G(x_{-3},x_3;\thetaVec)}
  = \sum_{j\neq3}\Expu{(x_j,x_3)\sim\mu_j\otimes\mu_3}{
   f_{j3}(x_j,x_3;\thetaVec)}
  = \Expu{\xx\sim\mu_\prodMeas}{G(x_{-3},x_3;\thetaVec)},
 \end{align*}
 and expanding the conditional KL divergences using this identity and
 $\mu_{-3}=\mu_{\prodMeas,-3}$ yields
 \begin{align*}
   & \Expu{x_{-3}\sim\mu_{-3}}{
  \KL{\mu_{x_3\vert x_{-3}}}{
  \mu^{\mclip}_{x_3\vert x_{-3}}(\innerEmpty{};\thetaVec)}} \\
   & \quad =
  \Expu{x_{-3}\sim\mu_{-3}}{
   \KL{\mu_{x_3\vert x_{-3}}}{\mu_3}}
  + \Expu{x_{-3}\sim\mu_{-3}}{
  \KL{\mu_3}{
  \mu^{\mclip}_{x_3\vert x_{-3}}(\innerEmpty{};\thetaVec)}} \\
   & \quad \geq
  \Expu{x_{-3}\sim\mu_{-3}}{
   \KL{\mu_{x_3\vert x_{-3}}}{\mu_3}}.
 \end{align*}
 The key step of this proof, is that now the
  right-hand side no longer depends on
  $\thetaVec$ or any other model choice, so it remains only to lower-bound a
  quantity determined by $\mu$.

 \emph{Step 4: Pinsker's inequality.}
 For fixed $x_{-3}$, set $a(x_{-3})\define x_1x_2$. The true conditional and
 the marginal $\mu_3$ have Lebesgue densities $(a(x_{-3})x_3+2)/4$ and $1/2$,
 respectively. A direct calculation therefore yields
 \begin{align*}
  \DTV{\mu_{x_3\vert x_{-3}}}{\mu_3}
   = \frac{1}{2}\int_{-1}^{1}
   \abs*{\frac{a(x_{-3})x_3+2}{4}-\frac12} \dx_3
   = \frac{\abs*{a(x_{-3})}}{8},
 \end{align*}
 so Pinsker's inequality gives
 \begin{align*}
  \KL{\mu_{x_3\vert x_{-3}}}{\mu_3}
  \geq 2\DTV{\mu_{x_3\vert x_{-3}}}{\mu_3}^2
  = \frac{a(x_{-3})^2}{32}.
 \end{align*}
 Here $a(x_{-3})=x_1x_2$ is determined by the conditioning variables rather
 than chosen. Since $x_1$ and $x_2$ are independent and uniformly distributed
 on $[-1,1]$, taking expectations and using Step~3 gives
 \begin{align*}
  \Expu{x_{-3}\sim\mu_{-3}}{
  \KL{\mu_{x_3\vert x_{-3}}}{
  \mu^{\mclip}_{x_3\vert x_{-3}}(\innerEmpty{};\thetaVec)}}
  \geq \frac{1}{32}\Exp{x_1^2}\cdot \Exp{x_2^2}
  = \frac{1}{32}\cdot\frac{1}{9}
  = \frac{1}{288}.
 \end{align*}
 Since all model choices above were arbitrary, this bound is uniform over
 latent dimensions, parameter dimensions, encoder parameterizations,
 parameters, and temperatures. Moreover,
 \Cref{thm:JointVsFullConditionals}
 (applied to $i=3$) yields
 \begin{align*}
  \KL{\mu}{\mu^{\mclip}(\innerEmpty{};\thetaVec)} \geq \frac{1}{288}>0.
 \end{align*}
 Thus, the choice $\kappa\define 1/288$ proves the claim.
\end{proof}

\subsubsection{Consistent $L^1$ Representatives for Pairwise Conditionals}

\begin{lemma}
 \label[lemma]{lem:L1PairwiseConditionalsExponentialFamily}
 Let $\XX \subset \RR^d$, not necessarily compact.
 Assume the data generating measure $\mu$ has a tilting $r:\XX \to [0,\infty]$
 with respect to $\mu_\prodMeas$ satisfying $\tilt = \log r \in L^1(\XX, \mu)$.

 Then there exists a measure $\mu^\star \in \Prob(\XX)$ with a tilting of
 the form
 \begin{align}
  \frac{\Dd \mu^\star}{\Dd \mu_\prodMeas}(\xx)
  = \exp\left( \sum_{1\leq i < j \leq m} \tilt^\star_{ij}(x_i, x_j) - \log(Z^\star)\right)
  \tag{\ensuremath{\mu^\star}-Tilting}
  \label{eq:MuStarDensityConsistentRepresentationPairwise}
 \end{align}
 for measurable functions $\tilt_{ij}^\star : \XX_i \times \XX_j \to \RR$,
 with $\tilt_{ij} \in L^1(\XX, \mu_{ij})$, for $(i,j)\in \setN{m}^2$, $i\neq j$
 and a normalization constant $Z^\star > 0$, such that
 for all pairwise marginals
 \begin{align*}
  \mu^\star_{ij} = \mu_{ij}
 \end{align*}
 holds. Consequently, by the chain rule of the KL divergence, the pairwise conditionals
 \begin{align*}
  \Expu{x_j \sim \mu_j}{\KL{\mu_{x_i\vert x_j}}{\mu^\star_{x_i\vert x_j}}} = 0
 \end{align*}
 are perfectly matched.
\end{lemma}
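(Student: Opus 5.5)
The plan is to obtain $\mu^\star$ as the \emph{information projection} (I-projection) of the reference measure $\mu_\prodMeas$ onto the set of probability measures that share all pairwise marginals with $\mu$. This is the classical minimum-discrimination-information construction of \citet{csiszar$I$DivergenceGeometryProbability1975}. Concretely, define
\begin{align*}
 \Pi \define \bigl\{\nu\in\Prob(\XX) \;\big\vert\; \nu\ll\mu_\prodMeas,\ \nu_{ij}=\mu_{ij}\ \text{for all } 1\leq i<j\leq m\bigr\}.
\end{align*}
The set $\Pi$ is convex, since the marginal constraints are linear in $\nu$. It is closed in total variation, since the map $\nu\mapsto\nu_{ij}$ is TV-continuous. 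It is nonempty, because $\mu\in\Pi$. Moreover,
\begin{align*}
 \KL{\mu}{\mu_\prodMeas}=\int_\XX\tilt\dd\mu\leq\norm{\tilt}_{L^1(\XX,\mu)}<\infty
\end{align*}
by \ref{ass3:FiniteMutualInformation}, so $\inf_{\nu\in\Pi}\KL{\nu}{\mu_\prodMeas}<\infty$. Csiszár's existence theorem (Thm.~2.1 there) then yields a unique minimizer $\mu^\star\in\Pi$. Since $\mu^\star\in\Pi$, the marginal identity $\mu^\star_{ij}=\mu_{ij}$ holds by construction.

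The second step is to identify the form of the density of $\mu^\star$. I will use Csiszár's characterization of I-projections onto linear families, namely the Pythagorean identity
\begin{align*}
 \KL{\nu}{\mu_\prodMeas}=\KL{\nu}{\mu^\star}+\KL{\mu^\star}{\mu_\prodMeas}\qquad\text{for all }\nu\in\Pi \text{ with finite divergence}.
\end{align*}
Two consequences follow. First, applying the identity to $\nu=\mu$ gives $\mu\ll\mu^\star$. Second, taking variations $\nu=\mu^\star+t(\nu'-\mu^\star)$ within $\Pi$ shows that $\log\frac{\Dd\mu^\star}{\Dd\mu_\prodMeas}$ is orthogonal to every signed direction with vanishing pairwise marginals. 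Dually, this means that $\log\frac{\Dd\mu^\star}{\Dd\mu_\prodMeas}$ lies in the $L^1(\mu^\star)$-closure of the linear space
\begin{align*}
 \mathcal{L}\define\Bigl\{\textstyle\sum_{i<j}h_{ij}(x_i,x_j)\;\Big\vert\; h_{ij}\in L^1(\mu_{ij})\Bigr\}
\end{align*}
(Csiszár, Thm.~3.1). Where $\mu^\star$ vanishes, the log-density equals $-\infty$. I will handle this by restricting to the support of $\mu^\star$, which carries $\mu$ as well because $\mu\ll\mu^\star$. On that support the log-density is finite.

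The main obstacle is the closure issue. Csiszár in general gives only a \emph{generalized} exponential-family form: the log-density is an $L^1$-limit of elements of $\mathcal{L}$, not necessarily itself an element of $\mathcal{L}$, because a finite sum of closed subspaces of pairwise functions need not be closed. My first attempt will be to invoke Csiszár's result for marginal-type constraints with finite divergence (his Thm.~3.3 and the accompanying discussion), in the spirit of the Rüschendorf--Thomsen closedness results for sums of marginal subspaces. Under $\KL{\mu^\star}{\mu_\prodMeas}<\infty$, these give that the limit decomposes as $\sum_{i<j}\tilt^\star_{ij}(x_i,x_j)$ $\mu^\star$-a.e., with each $\tilt^\star_{ij}\in L^1(\mu_{ij})$. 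Integrability is inherited from the marginals, since $\mu^\star_{ij}=\mu_{ij}$. Absorbing the additive constant into $-\log Z^\star$ then gives exactly \eqref{eq:MuStarDensityConsistentRepresentationPairwise}.

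Finally, the conditional statement follows from the chain rule as in the proof of \Cref{thm:JointVsPairwiseConditionals}:
\begin{align*}
 0=\KL{\mu_{ij}}{\mu^\star_{ij}}=\KL{\mu_j}{\mu^\star_j}+\Expu{x_j\sim\mu_j}{\KL{\mu_{x_i\vert x_j}}{\mu^\star_{x_i\vert x_j}}}.
\end{align*}
Both terms on the right are nonnegative, so each vanishes, and in particular the expected conditional KL is zero.
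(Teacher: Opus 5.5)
\textbf{The closure step is a genuine gap.} Your outline follows the paper's proof: the I-projection of $\mu_\prodMeas$ onto the convex, variation-closed, nonempty set of measures with the pairwise marginals of $\mu$, existence from Csisz\'ar's Theorem~2.1, the exponential form from Csisz\'ar's structural results, and the chain rule at the end. Getting $\mu\ll\mu^\star$ from the Pythagorean inequality instead of the paper's null-set argument is a harmless variant. The gap sits in the step you single out as the main obstacle, and that step carries everything in the lemma beyond existence.

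Csisz\'ar's general theorem only places $g=\log\frac{\Dd\mu^\star}{\Dd\mu_\prodMeas}$, off the exceptional set $N$, in the $L^1(\mu^\star)$-closure of $\mathcal L$. To get $g=\sum_{i<j}\tilt^\star_{ij}$ with $\tilt^\star_{ij}\in L^1(\mu_{ij})$, that closure must coincide with $\sum_{i<j}L^1(\mu^\star_{ij})$. The results you invoke do not give this under the lemma's hypotheses:
\begin{itemize}
\item Csisz\'ar's product-form statement (the bimodal Cor.~3.2 that the paper cites) concerns non-overlapping marginals, where the special a.e.-closedness of $\{a(x_1)+b(x_2)\}$ is available.
\item As far as I recall, the Rüschendorf--Thomsen closedness theorems for overlapping marginals assume extra structure on the measure, such as bounds on its density relative to a product measure.
\end{itemize}
Nothing of that kind follows from $\KL{\mu^\star}{\mu_\prodMeas}<\infty$. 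Closedness of a sum space depends on the dependence structure and support of $\mu^\star$, not on its entropy. So ``under $\KL{\mu^\star}{\mu_\prodMeas}<\infty$, these give that the limit decomposes'' is unsupported. The same holds for integrability of the individual $\tilt^\star_{ij}$, which does not follow from integrability of their sum. The paper's proof handles this step with a citation to an extension of Csisz\'ar's results to prescribed marginals, so the weakness is shared with the paper rather than introduced by you.

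\textbf{How to repair it.} You need either a precise theorem whose hypotheses you verify, or an extra assumption. One workable route is to assume $r>0$ $\mu_\prodMeas$-a.e.
\begin{itemize}
\item Then $\mu_\prodMeas\ll\mu\ll\mu^\star\ll\mu_\prodMeas$, so an $L^1(\mu^\star)$-approximating sequence of pairwise sums has a subsequence converging $\mu_\prodMeas$-a.e.
\item Third-order mixed differences annihilate pairwise sums. For two independent $\mu_\prodMeas$-distributed points, every mixed ``corner'' point is again $\mu_\prodMeas$-distributed, so these identities pass to the limit.
\item An anchored decomposition at a generic base point then gives $g=\sum_{i<j}\tilt^\star_{ij}$ $\mu_\prodMeas$-a.e. with finite measurable components.
\item Integrability of the components would still need a separate argument.
\end{itemize}
Without positivity, $N$ can have positive $\mu_\prodMeas$-measure, for instance when some $\mu_{ij}$ vanishes on a set of positive $\mu_i\otimes\mu_j$-measure. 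The support of $\mu^\star$ then need not be a product set and the corner-point argument breaks down. In that case the displayed formula with real-valued $\tilt^\star_{ij}$ can hold only off $N$, not on all of $\XX$ as the statement asserts.
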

\begin{proof}
 The main technically challenge is to find a \emph{consistent} representative, i.e.
 formally we are interested in finding a solution to the following (constrained) projection problem
 \begin{align}
  \min_{\nu \in \CC_\mu} \KL{\nu}{\mu_\prodMeas}, \text{ s.t. }
  \CC_\mu \define \{\nu \in \PP(\XX) \; \vert \; \nu_{ij} = \mu_{ij} \text{ for all } i < j\},
  \tag{Marg-Proj}
  \label{eq:ConstrainedKlDivergenceSetPairwiseMarginals}
 \end{align}
 where the constraint set contains all probability measures matching the pairwise marginals
 of the data generating
 measure $\mu$. This set $\CC_\mu$ is non-empty, since $\mu \in \CC_\mu$,
 and, since we assume $\log r \in L^1(\XX, \mu)$, it contains with the measure $\mu$ itself
 an element in $\CC_\mu \cap \{\KL{\innerEmpty{}}{\mu_\prodMeas} < \infty\}$.
 Additionally,
 the constraint is linear, making $\CC_\mu$ convex.
 Moreover, $\CC_\mu$ is variation-closed with respect to the total variation (TV). I.e. for
 a sequence $(\nu_n)_{n\in \NN} \subset \CC_\mu$ with $\nu_n \to \nu$ in TV,
 it holds $\nu \in \CC_\mu$, since marginalization is a contraction in TV.

 Theorem 2.1 in \cite{csiszar$I$DivergenceGeometryProbability1975} implies
 that there exists a (unique) solution $\mu^\star$ for
 \eqref{eq:ConstrainedKlDivergenceSetPairwiseMarginals}. By the extension of
 Corollary 3.1 in \cite{csiszar$I$DivergenceGeometryProbability1975} to
 prescribed marginals on product spaces (cf. \cite[p.
  151]{csiszar$I$DivergenceGeometryProbability1975} and see Cor. 3.2 for the
 bi-modal case), $\mu^\star$ has a density with respect to $\mu_\prodMeas$
 of the form
 \begin{align*} \frac{\Dd \mu^\star}{\Dd \mu_\prodMeas}(\xx) =
  \exp\left(\sum_{1\leq i < j \leq m} \tilt^\star_{ij}(x_i, x_j) -
  \log(Z^\star)\right),\end{align*}
 where $\tilt^\star_{ij} : \XX_i \times \XX_j \to \RR$ are
 $\mu_{ij}$-measurable, $\tilt_{ij}^\star \in L^1(\XX_i \times \XX_j,
  \mu_{ij})$, and a constant $Z^\star > 0$; i.e.
 \eqref{eq:MuStarDensityConsistentRepresentationPairwise} holds.
 This composition holds up to a
 set $N \subset \XX$ which satisfies $\nu(N) = 0$ for every $\nu \in \CC_\mu
  \cap \{\KL{\innerEmpty{}}{\mu} < \infty\}$, in particular we have $\mu(N) =
  0$, which is important for the next step when we apply the KL divergence
 chain rule.\footnote{ Note that technically Corollary 3.1 in
  \cite{csiszar$I$DivergenceGeometryProbability1975} yields a decomposition
  of the form $\frac{\Dd \mu^\star}{\Dd \mu_\prodMeas} = \prod_{1\leq i < j
    \leq m} r^\star_{ij}(x_i, x_j)$, where each term $(i,j) \in \setN{m}^2$,
  $i < j$, satisfies $\log r^\star_{ij}
   \in L^1(\XX_i \times \XX_j, \mu_{ij})$, via log-exp
  transformation $\tilt^\star_{ij} \define \log r^\star_{ij}$. }

 Since the measure $\mu^\star$ has full support by construction except for a
 set $N$ which fulfills $\mu(N) = 0$, we have $\mu \ll \mu^\star$. Applying
 the chain rule for the KL divergence \cite[Thm.
  2.5.3]{coverELEMENTSINFORMATIONTHEORY} yields
 \begin{align*}
  \KL{\mu_{ij}}{\mu^\star_{ij}} = \KL{\mu_{j}}{\mu^\star_{j}}
  + \Expu{x_j \sim \mu_j}{\KL{\mu_{x_i\vert x_j}}{\mu^\star_{x_i\vert x_j}}}
 \end{align*}
 and rearranging leads to
 \begin{align*}
  \Expu{x_j \sim \mu_j}{\KL{\mu_{x_i\vert x_j}}{\mu^\star_{x_i\vert x_j}}}
  =
  \KL{\mu_{ij}}{\mu^\star_{ij}} - \KL{\mu_{j}}{\mu^\star_{j}}
  \leq  \KL{\mu_{ij}}{\mu^\star_{ij}}  = 0.
 \end{align*}
 In total, the measure $\mu^\star$ is an instance of the exponential family, which
 we can approximate with our CLIP architecture, and
 perfectly interpolates all pairwise conditionals of the original measure $\mu$.
\end{proof}

\subsection{Proof of \Cref{thm:UniversalApproxClipPairwiseConditionals}}
\label{thm:UniversalApproxClipPairwiseConditionals-PROOF}

This subsection combines the consistent pairwise representative, the
 bimodal CLIP approximation result, and a block-orthogonal normalized-encoder
 construction to prove simultaneous universal approximation of all pairwise
 conditionals.
 
\begin{proof}
 Our proof consists of three steps:  First, we use
 \Cref{lem:L1PairwiseConditionalsExponentialFamily} to get a proxy target
 $\mu^\star$ for our
 approximation guarantee. Second, $\mu^\star$ has density with a separable structure
 for which we can apply our results of the bi-modal CLIP individually and third,
 we translate these pairwise approximation guarantees using
 \Cref{prop:FuncToMeasCompactSupport} and
 \Cref{thm:JointVsPairwiseConditionals} into a guarantee for the pairwise
 conditionals, since $\mu^\star$ perfectly matches
 the pairwise conditionals of our target measure $\mu$. Define the accuracy
 $\epsilon^\prime \define \epsilon / (2 \cdot \binom{m}{2})$.

 \high{(1) Consistent Representative as Proxy.} Using
 \eqref{eq:MuStarDensityConsistentRepresentationPairwise} of
 \Cref{lem:L1PairwiseConditionalsExponentialFamily} yields a tilting function
 of the form $\sum_{i<j} \tilt^\star_{ij}(x_i, x_j)$, for $\tilt_{ij}^\star$
 measurable and $\tilt_{ij}^\star \in L^1(\XX_i \times \XX_j;\mu_{ij})$.
 For each pair $(i,j) \in \setN{m}^2$, $i < j$, invoking
 \Cref{lem:LusinsTheoremGeneric} on the marginal of $\mu^\star_{ij}$
 yields continuous representatives
 $\tiltContArg{ij} : \XX_i \times \XX_j \to \RR$ such that
 \begin{align*}
  \norm{\tilt^\star_{ij} - \tiltContArg{ij}}_{L^1(\XX_i \times \XX_j; \mu_{ij})}
  < \epsilon^\prime
  \quad \text{and} \quad
  \int_{\XX_i \times \XX_j} \exp(\tiltContArg{ij}) \dd \mu_{\prodMeas, ij}
  \leq 1 + \epsilon^\prime.
 \end{align*}

 \high{(2) Block-orthogonal Encoder Construction.}
 For every pair $(i,j) \in \setN{m}^2$, $i < j$,
 by \Cref{lem:DensityClipUnnormalizedEncoders}
 applied to the bi-modal problem
 of approximating $\tiltContArg{ij}$, we can find a latent dimension
 $d_{ij} \in \NN$ and continuous (unnormalized) encoders
 $g_i^{(ij)}, g_{j}^{(ij)}: \XX_i, \XX_j \to
  \Simplex^{d_{ij} -1}$ with
 \begin{align*}
  \norm{\tiltContArg{ij} - \IP{g_{i}^{(ij)}}{g_{j}^{(ij)}}
  }_{\infty, \XX_i \times \XX_j} < \epsilon^\prime,
 \end{align*}
 which we will use to construct a consistent, block-orthogonal
 encoder structure. We have $\binom{m}{2}$ pairs of encoders,
 which we order as $(1,2), (1,3), \ldots, (1,m), (2,3), \ldots, (m-1,m)$.
 Define $\dLat \define m + \sum_{1\leq i < j \leq m} d_{ij}$
 and we interpret the latent space
 \begin{align*}
  \RR^{\dLat} \cong \RR^{d_{12}} \oplus \RR^{d_{13}} \oplus \cdots \oplus
  \RR^{d_{m-1,m}} \oplus \RR^m
 \end{align*}
 as a concatenation of blocks $B_{ij}$, $i < j$, with an additional padding
 block of size $m$. To ensure normalized encoders, we construct, similar to
 the proof of \Cref{thm:NormalizedTwoClipUniversalApproximation}, a rescaling
 and padding of the form
 \begin{align*}
  G       \define \max_{i < j} \max\{\norm{g_i^{(ij)}}_{\infty, \XX_i },
  \norm{g_j^{(ij)}}_{\infty, \XX_j}\},
  \quad d       \define \max_{i < j} d_{ij},
  \quad \alpha  \define \frac{1}{2 \sqrt{(m-1) \cdot d \cdot G^2 + 1}},
 \end{align*}
 and we define the normalization
 \begin{align*}
  p_i(x_i) \define \sqrt{1 - \alpha^2 \sum_{j\neq i} \norm{g_i^{(ij)}(x_i)}_{2}^2},
 \end{align*}
 where the choice of $\alpha$ ensures that $p_i \geq \sqrt{3}/2 > 0$.
 We formally define the (normalized) block-encoders as
 \begin{align*}
  \gNorm_{i}(x_i;\thetaHat_i) \define
  \alpha \cdot
  \left(\left[\delta_{i\in\{k,l\}}
  \cdot g_i^{(kl)}(x_i;\thetaHat^{(kl)}_i)\right]_{1 \leq k < l \leq m} \right)
  \oplus p_i(x_i) \cdot e_i \in \RR^{\dLat},
 \end{align*}
 where $\delta_{i \in \{k,l\}} \define \indicator\left[i \in \{k,l\}\right]$,
 $e_i \in \RR^m$ is
 the standard $i$-th basis vector and
 we collect all pair-specific parameters involving the modality $i$
 into $\thetaHat_i \define
  \big(\thetaHat_i^{(ij)}\big)_{j>i}
  \oplus \big(\thetaHat_i^{(ji)}\big)_{j<i}$.
 Intuitively and by omitting the parameters for brevity,
 one can illustrate this construction in matrix form as
 \begin{align*}
  \begin{pmatrix}
   g_1    \\
   g_2    \\
   g_3    \\
   \vdots \\
   g_m
  \end{pmatrix}
  =
  \begin{pmatrix}
   g_1^{(12)} & g_1^{(13)} & \ldots &        & g_1^{(1m)} & 0           & 0          & \ldots & 0             & \ldots \\
   g_2^{(12)} & 0          & \ldots &        & 0          & g_2^{(23)}  & g_2^{(24)} & \ldots & g_{2}^{(2,m)} & \ldots \\
   0          & g_3^{(13)} & 0      & \ldots & 0          & g_3^{(2,3)} & 0          & \ldots & 0             & \ldots \\
              &            & \ddots &        &            &             &            &        &               &        \\
   0          &            & \ldots & 0      & g_m^{(1m)} & 0           & \ldots     & 0      & g_m^{(2,m)}   & \ldots
  \end{pmatrix}
 \end{align*}
 which ensures that the inner product of the block encoders exactly mimics the interactions of the pairwise
 encoders we constructed.
 We define the shared temperature $\tau \define \alpha^2 > 0$, and the
 construction of $\gNorm_i$ ensures $\norm{\gNorm_{i}(x_i;\thetaHat_i)}_{2} = 1$ and by
 the block-wise structure we obtain
 \begin{align*}
   & \frac{1}{\tau}\IP{\gNorm_{i}(x_i;\thetaHat_i)}{\gNorm_{j}(x_j;\thetaHat_j)}    \\
   & = \frac{\alpha^2}{\tau} \sum_{1\leq k < l \leq m}
  \indicator\left[i \in \{k,l\}\right] \cdot \indicator\left[j \in \{k,l\}\right]
  \cdot \IP{g_i^{(kl)}(x_i;\thetaHat^{(kl)}_i)}{\gNorm_{j}^{(kl)}(x_j;\thetaHat^{(kl)}_j)}
  + \frac{1}{\tau} p_i(x_i) p_j(x_j) \delta_{ij}                                    \\
   & = \IP{g_i^{(ij)}(x_i;\thetaHat^{(ij)}_i)}{g_j^{(ij)}(x_j;\thetaHat^{(ij)}_j)},
 \end{align*}
 since the product of the two indicator functions is $\indicator\left[\{i,j\}
   \subseteq \{k,l\}\right]$, which for $i\neq j$ forces $\{k,l\} = \{i,j\}$,
 we recover exactly the unnormalized encoder inner product.
 By stacking all parameters together as $\thetaVecHat \define
  (\thetaHat^\top_1, \ldots \thetaHat^\top_m)^\top$,
 the total approximation guarantee of the tilting functions $\tiltContArg{ij}$
 is given by
 \begin{align*}
  \norm*{\sum_{1 \leq i < j \leq m} \tiltContArg{ij}
   - \sum_{1 \leq i < j \leq m} \frac{\IP{\gNorm_i(\innerEmpty{};\thetaHat_i)}{
     \gNorm_j(\innerEmpty{};\thetaHat_j)}}{\tau}
  }_{\infty, \XX}
   & \leq
  \sum_{1 \leq i < j \leq m} \norm*{\tiltContArg{ij}
   - \frac{\IP{\gNorm_i(\innerEmpty{};\thetaHat_i)}{\gNorm_j(\innerEmpty{};\thetaHat_j)}}{\tau}
  }_{\infty, \XX_i \times \XX_j}          \\
   & < \binom{m}{2} \cdot \epsilon^\prime
 \end{align*}
 and \Cref{prop:FuncToMeasCompactSupport} yields that
 \begin{align*}
  \KL{\mu^\star}{\mu^{\mclip}(\innerEmpty{};\thetaVecHat)}
  \leq \binom{m}{2} (\epsilon^\prime + \epsilon^\prime)
  = \epsilon,
 \end{align*}
 i.e. $\mu^\mclip(\innerEmpty{};\thetaVecHat)$ universally approximates (the joint distribution of)
 the proxy target $\mu^\star$.

 \high{(3) Translation to Pairwise Conditionals Approximation.}
 By \Cref{lem:L1PairwiseConditionalsExponentialFamily}, it holds
 $\mu^\star_{ij} = \mu_{ij}$ as measures on $\XX_i \times \XX_j$
 and by integrating over $x_i$, we obtain $\mu_j^\star = \mu_j$ as
 a measure on $\XX_j$. By the disintegration theorem,
 we obtain
 \begin{align*}
  \mu^\star_{x_i\vert x_j}(\innerEmpty{}\vert x_j) =
  \mu_{x_i\vert x_j}(\innerEmpty{}\vert x_j)
 \end{align*}
 for $\mu_j$-almost all $x_j$, as probability measures on $\XX_i$.
 Since $\mu^{\mclip}(\innerEmpty;\thetaVecHat)$ has a
 strictly positive density, $\mu^\star \ll \mu^{\mclip}(\innerEmpty{};\thetaVecHat)$ holds.
 Invoking \Cref{thm:JointVsPairwiseConditionals} yields
 \begin{align*}
  \Expu{x_j \sim \mu_j}{\KL{\mu_{x_i\vert x_j}}{
    \mu^{\mclip}_{x_i\vert x_j}(\innerEmpty{};\thetaVecHat)}}
   & = \Expu{x_j \sim \mu^\star_j}{\KL{\mu^\star_{x_i\vert x_j}}{
  \mu^{\mclip}_{x_i\vert x_j}(\innerEmpty{};\thetaVecHat)}}                      \\
   & \leq \KL{\mu^\star}{\mu^{\mclip}(\innerEmpty{};\thetaVecHat)} \leq \epsilon
 \end{align*}
 for all pairs $i < j$, concluding the proof.
\end{proof}

\subsection{Proof of \Cref{thm:UniversalApproximationHadamardClip}}
\label{thm:UniversalApproximationHadamardClip-PROOF}

This, final, subsection proves universal approximation for
Hadamard-CLIP by establishing density of fully separable products through
Stone--Weierstrass, approximating their factors with the encoder families, and
then enforcing encoder normalization through rescaling and orthogonal
padding.
\begin{proof} We structure this proof into three steps: first, we show a density-type result
 of a more generic sub-algebra using the Stone-Weierstraß theorem. Second, we prove
 that we can approximate this sub-algebra using our parametric family and third,
 we extend this approximation to normalized encoders.

 \high{(1) Sub-algebra Argument.} To use the Stone Weierstraß argument in
 \Cref{thm:StoneWeierstrass}, we define the set
 \begin{align*}
  A^{\hadamard} \define \left\{
  \sum_{k=1}^K \omega_k \prod_{i=1}^m  \phi_{i,k}(x_i) \; \big\vert \;
  K\in \NN, \phi_{i,k} \in C^0(\XX_i; \RR), i=1,\ldots, m; k=1,\ldots, K;
  \omega \in \RR^{K}
  \right\}
 \end{align*}
 and show that $A^{\hadamard}$ is a subalgebra, contains constants and separates points.
 Closedness under multiplication with scalar and addition is clear. For two elements
 $h, h^\prime \in A_{\hadamard}$ of the form
 \begin{align*}
  h = \sum_{k=1}^K \omega_k \prod_{i=1}^{m} \phi_{i,k}, \quad
  h^\prime = \sum_{l=1}^{L} \omega_l^\prime \prod_{i=1}^{m} \psi_{i,l}
 \end{align*}
 their product is
 \begin{align*}
  h \cdot h^\prime  = \sum_{k=1}^{K} \sum_{l=1}^{L} \omega_k \omega^\prime_l
  \prod_{i=1}^{m} \phi_{i,k} \cdot \prod_{i=1}^{m}\psi_{i,l}
  = \sum_{k=1}^{K} \sum_{l=1}^{L} \underbrace{(\omega_k
   \cdot \omega_l^\prime)}_{\defineRev \tilde{\omega}_{kl}}
  \prod_{i=1}^{m} \underbrace{\phi_{i,k} \cdot \psi_{i,l}}_{\defineRev \chi_{i, (k,l)}
   \in C^0(\XX_i)}
 \end{align*}
 which is a sum of $K\cdot L$ fully separable monomials. Since products of continuous
 functions are continuous, we have that each $\chi_{i,(k,l)} \in C^0(\XX_i;\RR)$, so
 in total $h \cdot h^\prime \in A^{\hadamard}$. Since $A^{\hadamard}$ trivially
 contains a non-zero constant function and separates points, with
 \Cref{thm:StoneWeierstrass} the set $A^{\hadamard}$ is dense in $C^0(\XX;\RR)$.

 \high{(2) Approximation Guarantee.} For an arbitrary $\epsilon > 0$ and a continuous
 function $\tilt \in C^0(\XX;\RR)$, we can pick $K\in \NN$, a weight vector $\omega \in \RR^K$
 and continuous functions $\phi_{i,k}$, $i=1,\ldots, m$, $k=1,\ldots, K$ such that
 \begin{align*}
  \norm*{\tilt - \sum_{k=1}^{K} \omega_k \prod_{i=1}^{m} \phi_{i,k}}_{\infty, \XX} < \frac{\epsilon}{2}.
 \end{align*}
 With constants $F \define \max_{i,k} \norm{\phi_{i,k}}_{\infty, \XX}$,
 $W \define \norm{\omega}_{\infty, \XX}$,
 $\delta \define \min \{1, \frac{\epsilon}{2 K \cdot W \cdot m (F+1)^{m-1}}\}$
 and by using
 \UAAssref{ass:DensityParametricFamilyInContinuousFunctions}, we can find parameters
 $\thetaHat_{i,k}$ and corresponding (unnormalized) encoders with
 \begin{align*}
  \norm{\phi_{i,k} - g_{i,k}(\innerEmpty{};\thetaHat_{i,k})}_{\infty, \XX} < \delta.
 \end{align*}
 For a single monomial $k \in \setN{K}$, we can use a telescoping product trick to obtain
 \begin{align*}
   & \abs*{\prod_{i=1}^m \phi_{i,k}(x_i) - \prod_{i=1}^{m} g_{i,k}(x_i; \thetaHat_{i,k})} \\
   & =
  \abs*{\sum_{j=1}^m \left(\prod_{i < j} g_{i,k}(x_i;\thetaHat_{i,k})\right)
   \left(\phi_{j,k}(x_i) - g_{i,k}(x_i;\thetaHat_{i,k})\right)
   \left(\prod_{i > j} \phi_{i,k}(x_i)\right)
  }                                                                                       \\
   & \leq \sum_{j=1}^{m} (F+\delta)^{j-1} \cdot \delta \cdot F^{m-j}
  \leq m (F+\delta)^{m-1}\cdot \delta
 \end{align*}
 for every $x_i \in \XX_i$. Summing over all $K$ monomials with weights $\omega$ yields
 \begin{align*}
  \abs*{\sum_{k=1}^{K}\omega_k\prod_{i=1}^m \phi_{i,k}(x_i) -
   \sum_{k=1}^{K} \omega_k \prod_{i=1}^{m} g_{i,k}(x_i; \thetaHat_{i,k})}
  \leq K \cdot W \cdot m (F+\delta)^{m-1} \cdot \delta < \frac{\epsilon}{2},
 \end{align*}
 which shows $\epsilon$-closeness in the supremum norm via the triangle inequality.

 \high{(3) Normalized Encoders.} We can apply the same rescaling trick combined with
 orthogonal padding, i.e. define
 \begin{align*}
  \alpha \define \frac{1}{2 \sqrt{K} (F + \delta) + 1} > 0
 \end{align*}
 which guarantees that
 $\alpha^2 \sum_{k=1}^{K} \phi_{i,k}^2(x_i) \leq \alpha K (F+\delta)^2 < \frac{1}{4}$
 for all $x_i \in \XX_i$ and all $i=1,\ldots, m$. In this case, we need an orthogonal
 padding of $m$, i.e. with $\dLat \define K + m$ and define
 \begin{align*}
  \gNorm_i(x_i;\thetaHat_i) \define
  \left(\alpha g_{i,1}(x_i; \thetaHat_{i,1}), \ldots, \alpha
  g_{i,K}(x_i;\thetaHat_{i,K}), (\delta_{ji})_{j\in \setN{m}} \cdot
  \sqrt{1- \alpha^2 \sum_{k=1}^K g^2_{i,k}(x_i;\thetaHat_{i,k})}\right)^\top \in \RR^{\dLat},
 \end{align*}
 where $(\delta_{ij})_{j\in \setN{m}}$ is the Kronecker delta vector of dimension
 $m$ with an entry of value one at position $i$ and
 $\thetaHat_i \define (\thetaHat^\top_{i,1}, \ldots, \thetaHat^\top_{i,K})^\top$
 is vector with the concatenated parameters.
 By construction $\norm{\gNorm_{i}(x_i;\thetaHat_i)}_2 = 1$
 and for the hadamard product of the normalized encoders we have
 for $k\leq K$
 \begin{align*}
  \prod_{i=1}^{m} (\gNorm_{i}(x_i,\thetaHat_{i}))_k
  = \alpha^m \prod_{i=1}^{m} g_{i,k}(x_i; \thetaHat_{i,k}),
 \end{align*}
 and for $k > K$ at least one factor of the padding coordinates is zero so
 the entire product vanishes. Under slight abuse of notation, we can absorb
 the weight via
 \begin{align*}
  \omega_k \define \begin{cases}
                    \alpha^{-m} \omega_k, & \text{ for } k\leq K \\
                    0,                    & \text{ for } k> K
                   \end{cases},
 \end{align*}
 which exactly recovers the unnormalized approximation. The final approximation
 result of measures is a consequence of invoking \Cref{result:UniversalApproxJointNetwork}
 or \Cref{result:UniversalApproxJointNetworkTightness}, respectively,
 concluding the proof.
\end{proof}
\end{document}